\documentclass{article}
\usepackage{amsmath,amsfonts}
\usepackage{algorithm}
\usepackage{array}
\usepackage{textcomp}
\usepackage{url}
\usepackage{verbatim}
\usepackage{graphicx}
\usepackage{algpseudocode}
\usepackage{amssymb, bm, mathtools, mathrsfs}
\usepackage{amsthm}
\usepackage{amsfonts}
\usepackage{footnote}
\usepackage{dsfont}
\usepackage{microtype}
\usepackage{cite}
\usepackage[top=2.5cm, bottom=2.5cm, left=2.5cm, right=2.5cm]{geometry}
\usepackage{hyperref}

\newcommand{\ind}{\mathds{1}}

\newcommand{\brc}[1]{\left\{{#1}\right\}}
\newcommand{\prn}[1]{\left({#1}\right)} 
\newcommand{\brk}[1]{\left[{#1}\right]} 
\newcommand{\norm}[1]{\left\|{#1}\right\|} 
\newcommand{\abs}[1]{\left|{#1}\right|} 

\newcommand{\wtilde}[1]{\widetilde{#1}}
\newcommand{\sgn}{\mathsf{sign}}


\newcommand{\<}{\langle} 
\renewcommand{\>}{\rangle}

\DeclareMathOperator*{\argmin}{argmin}


\def\gA{{\mathcal{A}}}

\def\gF{{\mathcal{F}}}

\def\gH{{\mathcal{H}}}
\def\gI{{\mathcal{I}}}

\def\gL{{\mathcal{L}}}
\def\gM{{\mathcal{M}}}
\def\gN{{\mathcal{N}}}
\def\gO{{\mathcal{O}}}
\def\gP{{\mathcal{P}}}

\def\gR{{\mathcal{R}}}
\def\gS{{\mathcal{S}}}
\def\gT{\bm{\mathcal{T}}}

\def\gW{{\mathcal{W}}}

\def\sP{{\mathscr{P}}}

\def\bA{{\bm{A}}}

\def\bb{{\bm{b}}}
\def\be{{\bm{e}}}
\def\bB{{\bm{B}}}
\def\bC{{\bm{C}}}

\def\bh{{\bm{h}}}
\def\bH{{\bm{H}}}
\def\bw{{\bm{w}}}
\def\bF{{\bm{F}}}
\def\bW{{\bm{W}}}
\def\bM{{\bm{M}}}
\def\bUp{{\bm{\Upsilon}}}
\def\bI{{\bm{I}}}
\def\bg{{\bm{g}}}
\def\bG{{\bm{G}}}
\def\bJ{{\bm{J}}}

\def\bR{{\bm{R}}}
\def\bP{{\bm{P}}}
\def\bp{{\bm{p}}}
\def\bPi{{\bm{\Pi}}}
\def\bphi{{\bm{\phi}}}
\def\bpsi{{\bm{\psi}}}
\def\bPhi{{\bm{\Phi}}}

\def\bLamb{{\bm{\Lambda}}}

\def\bSigma{{\bm{\Sigma}}}

\def\btheta{{\bm{\theta}}}
\def\bTheta{{\bm{\Theta}}}

\def\bu{{\bm{u}}}

\def\bxi{{\bm{\xi}}}
\def\btheta{{\bm{\theta}}}
\def\bV{{\bm{V}}}
\def\bY{{\bm{Y}}}

\def\bz{{\bm{z}}}

\newcommand{\LCTD}{\textnormal{\itshape Linear-CTD}}
\newcommand{\VCTD}{\textnormal{\itshape VrFLCTD}}
\newcommand{\LTD}{\textnormal{\itshape Linear-TD}}

\newcommand{\inner}[2]{\left\langle #1, #2 \right\rangle}

\def\RB{{\mathbb R}}
\def\EB{{\mathbb E}}

\def\NB{{\mathbb N}}

\def\vect{\mathsf{vec}}

\newcommand{\tr}{\operatorname{Tr}}

\providecommand{\citep}[1]{\cite{#1}}
\providecommand{\citet}[1]{\cite{#1}}
\hyphenation{}
\allowdisplaybreaks
\newtheorem{proposition}{Proposition}
\newtheorem{corollary}{Corollary}
\newtheorem{assumption}{Assumption}
\newtheorem{remark}{Remark}
\newtheorem{theorem}{Theorem}
\newtheorem{lemma}{Lemma}
\begin{document}
\title{Accelerated Distributional Temporal Difference Learning with Linear Function Approximation}

\author{%
    Kaicheng Jin\thanks{School of Mathematical Sciences, Peking University. Email: \texttt{kcjin@pku.edu.cn}}, 
    Yang Peng\thanks{School of Mathematical Sciences, Peking University. Email: \texttt{pengyang@pku.edu.cn}}, 
    Jiansheng Yang\thanks{School of Mathematical Sciences, Peking University. Email: \texttt{yjs@math.pku.edu.cn}}, 
    Zhihua Zhang\thanks{School of Mathematical Sciences, Peking University. Email: \texttt{zhzhang@math.pku.edu.cn}}
}
\maketitle
\begin{abstract}
In this paper, we study the finite-sample statistical rates of distributional temporal difference (TD) learning with linear function approximation. 
The purpose of distributional TD learning is to estimate the return distribution of a discounted Markov decision process for a given policy.
Previous works on statistical analysis of distributional TD learning focus mainly on the tabular case. 
We first consider the linear function approximation setting and conduct a fine-grained analysis of the linear-categorical Bellman equation. 
Building on this analysis, we further incorporate variance reduction techniques in our new algorithms to establish tight sample complexity bounds independent of the support size $K$ when $K$ is large. 
Our theoretical results imply that, when employing distributional TD learning with linear function approximation, learning the full distribution of the return function from streaming data is no more difficult than learning its expectation.
This work provide new insights into the statistical efficiency of distributional reinforcement learning algorithms.

\noindent\textbf{Keywords:} distributional reinforcement learning, distributional policy evaluation, distributional temporal difference learning, stochastic approximation, sample complexity
\end{abstract}

\section{Introduction}\label{Section:intro}
Distributional policy evaluation (DPE) \citep{morimura2010nonparametric,bellemare2017distributional,bdr2022} aims to estimate the return distribution of a policy in a Markov decision process (MDP).
It is a crucial issue for many uncertainty-aware or risk-sensitive tasks \citep{NEURIPS2022_c88a2bd0,NEURIPS2023_b0cd0e80,moghimi2025beyond, pires2025optimizing}.
While conventional policy evaluation  methods estimate the expected value of returns, DPE models the entire return distribution, providing a more comprehensive understanding of the uncertainty and risk profiles of policies in MDPs.
The foundational work \citep{bellemare2017distributional} presented distributional temporal difference (TD) learning algorithm to solve DPE, which generalizes the classical TD method for conventional policy evaluation \citep{sutton1988learning}.
Although classic TD learning has been well studied theoretically \cite{bertsekas1995neuro, NIPS1996_e0040614, bhandari2018finite, Dalal_Szörényi_Thoppe_Mannor_2018, patil2023finite,li2024q,li2024high, chen2024lyapunov,samsonov2024gaussian,samsonov2024improved, wu2024statistical}, the theoretical foundation of distributional TD learning has been less explored.
In the tabular setting, where algorithms directly estimate the parameters characterizing the return distributions, recent developments \cite{rowland2018analysis,doi:10.1137/20M1364436,zhang2025estimation,rowland2024analysis,rowland2024nearminimaxoptimal,peng2024statistical,NEURIPS2023_b0cd0e80} 
provide basic understandings.
Among them, \citet{rowland2024nearminimaxoptimal} and \cite{peng2024statistical} showed that, in the tabular setting, learning the return distribution under the $1$-Wasserstein distance is statistically efficient as learning its mean.

The theoretical analysis of distributional TD learning beyond the tabular setting remains challenging.
In particular, the theoretical implications of introducing function approximation in the parameterization of the return distribution, which is crucial for scaling to practical problems where state spaces are large, have not been fully investigated.
In this context, function approximation refers to mapping state features onto parametrized return distribution via a defined function class.
An open question is: When a specific form of function approximation is employed, does learning the return distribution remain as statistically efficient as learning its expectation?

A natural starting point for distributional TD learning with function approximation is the linear case.
For instance, with the categorical parameterization of the return distribution, features are mapped linearly to the probabilities (point masses) over the distribution supports.
This parameterization method is termed linear-categorical parameterization. Within this framework, one can establish the linear system that characterizes the linear-categorical projected Bellman equation and propose a corresponding distributional TD learning algorithm as in \cite{bdr2022}.
In the conference version \cite{peng2025finitesampleanalysisdistributional} 
of this paper, we propose a new algorithm, linear-categorical TD learning ({\LCTD}), which can be seen as a preconditioned version of the vanilla algorithm proposed in \cite{bdr2022}. 
In \cite{peng2025finitesampleanalysisdistributional}, we investigate the finite-sample performance of {\LCTD} with Polyak-Ruppert tail averaging and a constant step size.
Notably, the sample complexity required for {\LCTD} to achieve an $\varepsilon$-accurate estimator of the return distributions under the $1$-Wasserstein distance parallels the classical results for linear TD learning algorithm ({\LTD})\citep{samsonov2024improved}.
 
In this work, as an extension of our previous work \citep{peng2025finitesampleanalysisdistributional}, we advance the theoretical understanding of {\LCTD} algorithm and propose improved algorithms.
Specifically, we first provide an instance-dependent analysis of the baseline algorithm {\LCTD} and derive a better convergence result. 
Furthermore, by incorporating the variance reduction techniques from \citet{li2023accelerated} and \cite{doi:10.1137/040615961} into our new variance-reduced fast {\LCTD} algorithm ({\VCTD}), we derive a sharp upper bound of estimation error with respect to the length of the effective horizon. 
In particular, the leading term in the corresponding sample complexity matches the asymptotic minimax lower bound for learning TD fixed point under linear-categorical parameterization, which we establish in this work. 
Consequently, we obtain a better sample complexity upper bound which is a counterpart to the instance-optimal sample complexity achieved by variance-reduced {\LTD} in \citet{li2023accelerated}.
This further strengthens the core theme that, with TD learning algorithms, DPE with linear-categorical parameterization is not statistically harder than policy evaluation with linear function approximation.
Crucially, for this claim to hold, the sample complexity upper bound should hold for an arbitrary support-size $K$ of the categorical distribution, which means that $K$ must not appear polynomially in the sample complexity bounds.

The preservation of this property is a central objective of this work.
The value of $K$ directly affects the solution of the linear-categorical projected Bellman equation and the resulting magnitude of the approximation error.
Crucially, our algorithm ensures that with the same sample complexity and an instance-independent step size, $K$ can be increased to achieve the best possible approximation performance, despite incurring a higher space complexity.
In our conference version \citep{peng2025finitesampleanalysisdistributional}, this property is realized through a preconditioning technique, which stems directly from a non-trivial formulation of the linear-categorical parameterization.
In this work, we inherit this basic formulation in our new variance-reduced algorithms.
In both the generative model setting and the Markovian model setting, we analyze the terms related to $K$ in the corresponding statistical error bounds and prove that the sample complexity is independent of $K$ under mild conditions.
Together, these theoretical developments provide an affirmative answer to the open problem in the linear function approximation case.

\subsection{Contributions}
\begin{itemize}
\item 
We perform an instance-dependent analysis of distributional TD learning with linear-categorical parameterization.
First, we present an asymptotic minimax lower bound on the sample complexity required to estimate the TD fixed point under linear-categorical parameterization, based on a discrete MDP formulation with categorical parameterization of the return distributions.
We further establish an instance-dependent convergence analysis for the baseline algorithm {\LCTD}, which covers existing convergence results for {\LCTD}.
The dominant term of the sample complexity upper-bound matches the derived minimax lower bound.

\item 
We propose a novel variance-reduced fast algorithm {\VCTD} in the generative model setting.
We further provide a sharp analysis of its convergence rate, achieving a sample complexity that matches the minimax lower bound that we establish without a sample size barrier.
Specifically, {\VCTD} needs
\begin{equation*}
    \wtilde{\gO}\prn{\varepsilon^{-2}(1-\gamma)^{-2}\lambda_{\min}^{-1}\prn{\norm{\btheta^{\star}}^2_{V_1}+1}}.
\end{equation*}
samples to achieve an $\varepsilon$-estimator in the $\mu_{\pi}$-weighted 1-Wasserstein metric when the support size $K\geq (1-\gamma)^{-1}$.
We further extend our algorithm {\VCTD} to the Markovian setting, where observations are drawn from a stationary Markovian data trajectory.
Notably, the convergence performance of {\VCTD} in both settings does not depend on the size support $K$ of the categorical distribution when $K\geq (1-\gamma)^{-1}$.
\end{itemize}

\subsection{Organization}
The remainder of this paper is organized as follows.
Section 2 introduces the preliminaries of distributional policy evaluation and linear-categorical parameterization.
Section 3 presents the linear system that determines the TD fixed point under linear-categorical parameterization and the convergence guarantees for the baseline {\LCTD} algorithm we establish in the conference version \cite{peng2025finitesampleanalysisdistributional}.
Section 4 provides an instance-dependent analysis of learning the TD fixed point, establishing an asymptotic minimax lower bound along with an improved convergence result for {\LCTD}.
Section 5 proposes the variance-reduced fast {\VCTD} algorithm and derives a statistical error upper-bound that match the minimax lower bound in the generative model setting.
We further extend {\VCTD} to the Markovian setting along with corresponding convergence guarantees. In both settings, we establish sample complexity upper bounds independent of the support size $K$.

\section{Backgrounds}\label{Section:background}
\noindent In this section, we present the basics of distribution policy evaluation (DPE) and the learning objective of distributional TD learning with linear-categorical parameterization.
Regarding DPE with categorical parameterization and tabular categorical TD learning, we refer readers to Appendix~\ref{appendix:CTD}.
\subsection{Notation}
Here and later, ``$\lesssim$'' (resp. ``$\gtrsim$'') means no larger (resp. smaller) than up to a multiplicative universal constant, and $a\simeq b$ means $a\lesssim b$ and $a\gtrsim b$ hold simultaneously.
The asymptotic notation $f(\cdot)=\wtilde{\gO}(g(\cdot))$ (resp. $\wtilde{\Omega}(g(\cdot))$)
means that $f(\cdot)$ is order-wise no larger (resp. smaller) than $g(\cdot)$, ignoring the logarithmic factors of polynomials of $(1-\gamma)^{-1}, \lambda_{\min}^{-1}, \alpha^{-1}$, $\varepsilon^{-1}$, $K$, $\|\bpsi^\star\|_{\bSigma_{\bphi}}$, $\|\btheta^{\star}\|_{\bI_K\otimes\bSigma_{\bphi}}$, $t_{\text{mix}}$. 
We will explain the concrete meaning of these notations once we have encountered them for the first time. 
We denote by $\bm{1}_K{\in}\RB^K$ the all-ones vector, 
$\bm{0}_K{\in}\RB^K$ the all-zeros vector, 
$\bI_K{\in}\RB^{K{\times} K}$ the identity matrix, 
$\|\bu\|$ the Euclidean norm of any vector $\bu$, 
$\|\bB\|$ the spectral norm of any matrix $\bB$, 
and $\|\bu\|_B:=\sqrt{\bu^{\top}\bB\bu}$ when $\bB$ is positive semi-definite (PSD).
For any matrix $\bB{=}[\bb(1), \ldots, \bb(n) ]{\in}\RB^{m{\times} n}$, we define its vectorization as $\vect(\bB){=}(\bb(1)^{\top},\ldots, \bb(n)^{\top})^{\top}{\in}\RB^{mn}$. 
The trace of a matrix is denoted by $\tr(\cdot)$.
Given a set $A$, we denote by $\Delta(A)$ the set of all probability distributions over $A$. 
For simplicity, we abbreviate $\Delta([0,(1{-}\gamma)^{-1}])$ as $\sP$. 
We denote by $\mathbf{Diag}\{\bM_{\alpha} | \alpha \in \mathcal{I}\}$ the block diagonal matrix formed from a family of matrices, where $\mathcal{I}$ is an index set. We use $\bB^{\dagger}$ to denote the Moore-Penrose pseudoinverse of a matrix $\bB$. 

Notations of loss function: $\ell_{2,\mu_\pi}(\bm{\eta}_1,\bm{\eta}_2):=(\EB_{s\sim\mu_{\pi}}[\ell_2^2(\eta_1(s),\eta_2(s))])^{1/2}$ is the $\mu_\pi$-weighted Cram\'er distance between $\bm{\eta}_1,\bm{\eta}_2\in(\sP^{\sgn})^\gS$, which is also called the $\mu_\pi$-weighted $L^2$-distance. $\mu_\pi$-weighted 1-Wasserstein distance is denoted as $W_{1,\mu_{\pi}}(\bm{\eta}_1,\bm{\eta}_2):=(\EB_{s\sim\mu_{\pi}}[W_1^2(\eta_1(s),\eta_2(s))])^{1/2}$.

We clarify the following terminology: the phrase ``distributional TD learning with linear-categorical parameterization'' formally refers to the process of finding the parameter $\bm{\theta}^{\star}$ for which $\bm{\eta}_{\bm{\theta}^{\star}}$ becomes the TD fixed point of the linear-categorical projected Bellman equation (Proposition~\ref{thm:linear_cate_TD_equation}). 
In the following sections, for conciseness, we will use the abbreviated expressions ``learning $\bm{\theta}^{\star}$'' and ``estimating $\bm{\theta}^{\star}$'', when the meaning is clear from the context. 
The abbreviation ``i.i.d. '' stands for ``independent and identically distributed''.

\subsection{Preliminaries}
In this subsection, we recap the basics of linear-categorical temporal difference learning and the theoretical foundation of the algorithm design. 
\paragraph{Distributional Policy Evaluation}
A discounted MDP is defined by a $4$-tuple $M=\<\gS,\gA,\gP,\gamma\>$.
We assume that the state space $\gS$ and the action space $\gA$ are both Polish spaces, that is, complete separable metric spaces. 
${\gP(\cdot,\cdot\mid s,a)}$ is the joint distribution of the reward and the next state conditioned on $(s,a)\in\gS\times\gA$. 
All rewards are bounded random variables that take values in the interval $[0, 1]$.
And $\gamma\in(0,1)$ is the discount factor. 
Given a policy $\pi\colon\gS\to\Delta(\gA)$ and an initial state $s_0=s\in\gS$, a random trajectory $\{(s_t,a_t,r_t)\}_{t=0}^\infty$ can be sampled: $a_t\mid s_t\sim\pi(\cdot\mid s_t)$, $(r_t,s_{t+1})\mid (s_t,a_t)\sim \gP({\cdot,\cdot}\mid s_t,a_t)$, for any time step $t\in\NB$. 

We assume that the Markov chain $\{s_t\}_{t=0}^\infty$ has a unique stationary distribution $\mu_\pi\in\Delta(\gS)$. 
We assume that the reward $r \in [0,1]$ and define the return of the trajectory by $G^\pi(s):=\sum_{t=0}^\infty \gamma^t r_t$. 
The value function $V^\pi(s)$ is defined as the expectation of $ G^\pi(s)$. 
It follows that the value function lies in the effective horizon $\brk{0,(1-\gamma)^{-1}}$. The task of finding the value function for a certain policy $\pi$ is called policy evaluation. 
The task of finding the whole distribution of $G^\pi(s)$ is called distributional policy evaluation (DPE).

Within this formulation, we use $\eta^\pi(s)\in\sP$ to denote the distribution of the return $G^\pi(s)$ and let ${\bm{\eta}}^\pi:=(\eta^\pi(s))_{s\in\gS}\in\sP^\gS$. 
Then ${\bm{\eta}}^\pi$ is characterized as the solution of the distributional Bellman equation:
\begin{equation}\label{eq:distributional_Bellman_equation}
    \eta^\pi(s)=\EB_{a\sim\pi(\cdot\mid s),(r,s^\prime)\sim\gP(\cdot,\cdot\mid s,a)}[\prn{b_{r,\gamma}}_\#\eta^\pi(s^\prime)],\quad\forall s\in\gS,
\end{equation}
where RHS is the distribution of $r_0+\gamma G^\pi(s_1)$ conditioned on $s_0=s$. 
Here $b_{r,\gamma}(x):=r+\gamma x$ for any $x\in\RB$, and $f_\#\nu\in\sP$ is defined as $f_\#\nu(A):=\nu(\{x\colon f(x)\in A\})$ for any function $f\colon \RB\to\RB$, probability measure $\nu\in\sP$ and Borel set $A\subseteq\RB$. 
The distributional Bellman equation is written as ${\bm{\eta}}^\pi={\bm{\gT}}^\pi{\bm{\eta}}^\pi$ where the operator ${{\gT}}^\pi\colon \sP^\gS\to \sP^\gS$ is called the distributional Bellman operator. 
In the task of DPE, our purpose is to find a good approximation of ${\bm{\eta}}^\pi$ for some given policy $\pi$.

\paragraph{Observation model}

The type of observation model determines the intrinsic mechanism of the data stream processed by algorithms solving DPE. In this work, we consider observation models that generate online interactions. 
The input samples for the algorithms satisfy the condition that any state $s_t$ in the streaming data $\{\xi_t\}_{t=0}^\infty=\{(s_t,a_t,s_t^{\prime},r_t\})_{t=0}^\infty$ follows the stationary distribution of the underlying Markov chain. 

Following previous work, we consider two types of online observation model.  
In the generative model setting, the streaming data $\{\xi_t\}_{t=0}^\infty=\{(s_t,a_t,s_t^{\prime},r_t\})_{t=0}^\infty$ are collected through $s_t\sim\mu_{\pi}(\cdot), a_t\sim\pi(\cdot|s_t), (r_t,s_t^\prime)\sim \gP(\cdot,\cdot|s_t,a_t)$ from the generative model (of the corresponding MDP). 
Another important observation model is called the Markovian observation model or the Markovian noise model. In the Markovian observation model, the streaming data are collected through $s_0\sim\mu_{\pi}(\cdot), a_t\sim\pi(\cdot|s_t), (r_t,s_{t+1})\sim \gP(\cdot,\cdot|s_t,a_t)$, and $s_{t}^{\prime}=s_{t+1}$. 
In both settings, samples can be seen as online interactions with the MDP since each state $s_t$ follows the stationary distribution $\mu_{\pi}$. 
Correlated data samples produced by the Markovian observation model pose an additional challenge for algorithm design and convergence rate analysis. 
For conciseness, the rest of this paper will refer to these two settings as the generative model setting and the Markovian setting.

\paragraph{Linear-Categorical Parameterization}
A key choice in distributional policy evaluation (DPE) is the parameterization of the return distribution. Linear-categorical parameterization offers a tractable framework. 
For a clear description of this method, we consider the space of linear-categorical parameterized signed measures with total mass $1$ we introduce in \citet{peng2025finitesampleanalysisdistributional}:
\begin{equation*}
    \sP^{\sgn}_{\bphi,K} := \left\{\bm{\eta}_{\btheta}=\prn{\eta_{\btheta}(s)}_{s\in\gS}\colon \eta_{\btheta}(s)=\textstyle\sum_{k=0}^Kp_k(s;\btheta)\delta_{x_k}, \btheta=(\btheta(0)^{\top}, \ldots, \btheta(K{-}1)^{\top})^{\top}\in\RB^{dK} \right\},
\end{equation*}
which is an affine subspace of $(\sP^{\sgn}_{K})^{\gS}$. Here, $\sP^{\sgn}_K$ denotes the
the space of all categorical parameterized signed measures with total mass $1$:
\begin{equation}\label{eq:def_linear_parametrize}
    \sP^{\sgn}_K := \{ \nu_\bp=\textstyle\sum_{k=0}^K p_k \delta_{x_k} \colon  \bp=\prn{p_0, \ldots, p_{K{-}1}}^{\top}\in \RB^{K}, p_K=1-
    \textstyle\sum_{k=0}^{K-1}p_k \}.
\end{equation}
To incorporate state features, we assume that there is a $d$-dimensional feature vector for each state $s\in \gS$, which is given by the feature map $\bphi :\gS\to \RB^{d}$, such that $\|\bphi(s)\|\leq1$ for all $s\in \gS$.

With this representation, the cumulative probability function (CDF) $F_k(s;\btheta)$ is defined through
\begin{equation*}
F_k(s;\btheta)= 
    \begin{cases}
        0, & \text{ for}\ k=-1, \\
        \bphi(s)^{\top}\btheta(k)+\frac{k+1}{K+1}, & \text{ for}\ k\in\brc{0, 1, \ldots, K-1}, \\
        1, & \text{ for}\ k=K.
    \end{cases}
\end{equation*}
And the point mass function is defined as $p_k(s;\btheta){=}F_k(s;\btheta){-}F_{k{-}1}(s;\btheta)$. 
The above formulation of linear-categorical parameterization leads to the definition of the linear-categorical projection operator $\bPi_{\bphi, K}^{\pi}{\colon}(\sP^{\sgn})^{\gS}{\to}\sP^{\sgn}_{\bphi,K}$ as follows:
\begin{equation*}
    \bPi_{\bphi, K}^{\pi}\bm{\eta}:=\argmin\nolimits_{\bm{\eta}_{\btheta}\in\sP^{\sgn}_{\bphi,K}}\ell_{2,\mu_\pi}\prn{\bm{\eta}, \bm{\eta}_{\btheta}},\quad\forall\bm{\eta}\in (\sP^{\sgn})^{\gS}.
\end{equation*}
In this definition, $\bPi_{\bphi, K}^{\pi}\bm{\eta}$ represents the projection onto the linear-categorical parameterized signed measures in the $\mu_{\pi}$-weighted $L^2$ metric. 
We provide several useful properties of linear-categorical  parameterization in the conference version \citep{peng2025finitesampleanalysisdistributional}, which will be cited in later sections as needed. 
With this formulation, the linear-categorical projected Bellman equation can be written as $\bm{\eta}_{\btheta}{=}\bPi_{\bphi, K}^{\pi}\gT^{\pi}\bm{\eta}_{\btheta}$ and its solution $\bm{\eta}_{\btheta^{\star}}$ serves as an approximation of the return distribution. 
We will show the existence and uniqueness of $\btheta^{\star}$ later in Proposition~\ref{thm:linear_cate_TD_equation}. 
More discussion on the validity of this solution is deferred to Appendix~\ref{appendix:projection}. 
Consequently, we establish an approach to DPE by learning the TD fixed point under linear-categorical parameterization, namely, by estimating $\btheta^{\star}$ from the data stream generated by a given observation model. 

To this end, we begin by analyzing how the discount factor $\gamma$, the support size $K$ and the feature map $\bm{\phi}$ shape the approximation error of $\bm{\eta}_{\btheta^{\star}}$ with respect to the ground truth $\bm{\eta}^{\pi}$.
We have the following important notations. Let $\iota_{K} = 1/((1-\gamma)K)$ be the distance between the equally spaced support points of the return distribution. 
The feature covariance matrix is denoted as $\bSigma_{\bphi}:=\EB_{s\sim\mu_{\pi}}\brk{\bphi(s)\bphi(s)^{\top}} \in \RB^{d\times d}$. 
Let $\lambda_{\max}$ and $\lambda_{\min}$ denote its largest and smallest eigenvalues. 
It is easy to see that $0\leq\lambda_{\min}\leq \lambda_{\max}\leq 1$. 

Here we provide an instance-dependent upper bound for the approximation error in Cram\'er distance to measure the quality of $\btheta^{\star}$ by the distance between $\bm{\eta}_{\btheta^{\star}}$ and the ground truth $\bm{\eta}^\pi$. 
This result is a refinement of the upper bound we establish in \citet{peng2025finitesampleanalysisdistributional}.

\begin{proposition}\label{prop:fine_grained_approx_error}
Let
\begin{align*}
\bUp=&\EB\brc{\brk{(\bC^{\top}\bC)^{\frac{1}{2}}\tilde{\bG}(r)(\bC^{\top}\bC)^{-\frac{1}{2}}} \otimes \prn{\bSigma_{\bphi}^{-\frac{1}{2}}\bphi(s)\bphi(s^\prime)^\top\bSigma_{\bphi}^{-\frac{1}{2}}}}, 
\end{align*} 
where the expectation is taken on the tuple $(s,a,s^{\prime},r)$ drawn from the generative model. Then it holds that
\begin{align*}
\ell^2_{2,\mu_{\pi}}\prn{\bm{\eta}_{\btheta^{\star}},\bm{\eta}^\pi}\
    \leq&\prn{1+\norm{(\bI_{dK}-\bUp)^{-1}(\gamma\bI_{dK}-\bUp\bUp^{\top})(\bI_{dK}-\bUp)^{-\top}}}\ell^2_{2,\mu_{\pi}}\prn{\bm{\eta}^\pi,\bPi_{\bphi, K}^{\pi}\bm{\eta}^\pi},
\end{align*} 
\end{proposition}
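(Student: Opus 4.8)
The plan is to combine a Pythagorean decomposition with the fixed-point identity and then to control the resulting resolvent through the $\sqrt{\gamma}$-contraction of $\gT^{\pi}$ in the Cram\'er geometry. Throughout I work in the Hilbert space of mass-zero signed measures equipped with the $\mu_{\pi}$-weighted Cram\'er inner product, in which $\ell_{2,\mu_{\pi}}$ is the induced norm, and I write $\mathsf{P}$ for the orthogonal projection onto the linear span of the parameterized family $\sP^{\sgn}_{\bphi,K}$; on mass-zero differences the affine projection $\bPi^{\pi}_{\bphi,K}$ acts through its linear part $\mathsf{P}$. Since $\bm{\eta}_{\btheta^{\star}}$ and $\bPi^{\pi}_{\bphi,K}\bm{\eta}^{\pi}$ both lie in $\sP^{\sgn}_{\bphi,K}$ while $\bm{e}_2:=\bm{\eta}^{\pi}-\bPi^{\pi}_{\bphi,K}\bm{\eta}^{\pi}$ is orthogonal to its direction space, Pythagoras gives
\begin{equation*}
\ell^{2}_{2,\mu_{\pi}}\!\prn{\bm{\eta}_{\btheta^{\star}},\bm{\eta}^{\pi}} = \underbrace{\ell^{2}_{2,\mu_{\pi}}\!\prn{\bm{\eta}_{\btheta^{\star}},\bPi^{\pi}_{\bphi,K}\bm{\eta}^{\pi}}}_{=:\,\norm{\bm{e}_1}^{2}} + \ell^{2}_{2,\mu_{\pi}}\!\prn{\bPi^{\pi}_{\bphi,K}\bm{\eta}^{\pi},\bm{\eta}^{\pi}} ,
\end{equation*}
so the ``$1$'' in the claimed prefactor is precisely the approximation-error term, and it remains to bound the estimation part $\norm{\bm{e}_1}^{2}$.

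Using the fixed-point equation $\bm{\eta}_{\btheta^{\star}}=\bPi^{\pi}_{\bphi,K}\gT^{\pi}\bm{\eta}_{\btheta^{\star}}$ together with $\bm{\eta}^{\pi}=\gT^{\pi}\bm{\eta}^{\pi}$ and the linearity of $\gT^{\pi}$ on mass-zero measures, I would write $\bm{e}_1=\mathsf{P}\gT^{\pi}(\bm{e}_1-\bm{e}_2)$. Passing to the preconditioned coordinates in which the $\mu_{\pi}$-weighted Cram\'er metric becomes Euclidean — the place where the similarity transform $(\bC^{\top}\bC)^{1/2}(\cdot)(\bC^{\top}\bC)^{-1/2}$ on the categorical factor and the whitening $\bSigma_{\bphi}^{-1/2}$ on the feature factor enter — the restriction of $\mathsf{P}\gT^{\pi}$ to the parameterized subspace is represented \emph{exactly} by $\bUp$ (this is where I lean on the linear-system characterization of Proposition~\ref{thm:linear_cate_TD_equation}). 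Hence, in these coordinates,
\begin{equation*}
(\bI_{dK}-\bUp)\,\bm{e}_1 = -\,\mathsf{P}\gT^{\pi}\bm{e}_2 , \qquad \bm{e}_1=-(\bI_{dK}-\bUp)^{-1}\,\mathsf{P}\gT^{\pi}\bm{e}_2 ,
\end{equation*}
which is well defined because the $\sqrt{\gamma}$-contraction forces $\norm{\bUp}\le\sqrt{\gamma}<1$.

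The crux is to control the cross term using the orthogonality of $\bm{e}_2$. For an orthonormal basis $\{\bm{u}_j\}$ of the subspace I would rewrite each coordinate as $\inner{\bm{u}_j}{\gT^{\pi}\bm{e}_2}=\inner{(\gT^{\pi})^{*}\bm{u}_j}{\bm{e}_2}=\inner{(\bI-\mathsf{P})(\gT^{\pi})^{*}\bm{u}_j}{\bm{e}_2}$, the last step discarding the in-subspace part against which $\bm{e}_2$ is orthogonal. This shows $\mathsf{P}\gT^{\pi}\bm{e}_2=\bm{S}^{*}\bm{e}_2$, where $\bm{S}:=(\bI-\mathsf{P})(\gT^{\pi})^{*}$ is viewed as a map from the subspace into its orthogonal complement. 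A Pythagorean split of $(\gT^{\pi})^{*}\bm{u}$ together with the fact that the adjoint $(\gT^{\pi})^{*}$ is \emph{also} a Cram\'er $\sqrt{\gamma}$-contraction then yields the key operator inequality
\begin{equation*}
\bm{S}^{*}\bm{S} \preceq \gamma\bI_{dK}-\bUp\bUp^{\top} ,
\end{equation*}
since $\norm{(\gT^{\pi})^{*}\bm{u}}^{2}\le\gamma\norm{\bm{u}}^{2}$ while its in-subspace component $\mathsf{P}(\gT^{\pi})^{*}\bm{u}=\bUp^{\top}\bm{u}$ has squared norm $\bm{u}^{\top}\bUp\bUp^{\top}\bm{u}$; in particular this certifies $\gamma\bI_{dK}-\bUp\bUp^{\top}\succeq0$, so the claimed (symmetric PSD) matrix has operator norm equal to its largest eigenvalue. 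Substituting $\bm{e}_1=-(\bI_{dK}-\bUp)^{-1}\bm{S}^{*}\bm{e}_2$ gives $\norm{\bm{e}_1}^{2}=\norm{(\bI_{dK}-\bUp)^{-1}\bm{S}^{*}\bm{e}_2}^{2}\le\norm{\bm{e}_2}^{2}\,\lambda_{\max}(\bm{S}\bm{M}\bm{S}^{*})$ with $\bm{M}=(\bI_{dK}-\bUp)^{-\top}(\bI_{dK}-\bUp)^{-1}$, and the similarity identities $\lambda_{\max}(\bm{S}\bm{M}\bm{S}^{*})=\lambda_{\max}(\bm{M}^{1/2}\bm{S}^{*}\bm{S}\,\bm{M}^{1/2})$ and $\lambda_{\max}(\bm{N}^{\top}\bm{N}X)=\lambda_{\max}(\bm{N}X\bm{N}^{\top})$ (with $\bm{N}=(\bI_{dK}-\bUp)^{-1}$) convert the bound, via $\bm{S}^{*}\bm{S}\preceq\gamma\bI_{dK}-\bUp\bUp^{\top}$, into exactly $\norm{(\bI_{dK}-\bUp)^{-1}(\gamma\bI_{dK}-\bUp\bUp^{\top})(\bI_{dK}-\bUp)^{-\top}}\,\norm{\bm{e}_2}^{2}$. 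Combined with the Pythagorean identity this is the claim.

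I expect the main obstacle to be the bookkeeping that makes $\bUp$ appear \emph{exactly} rather than up to constants: one must verify that in the preconditioned coordinates the restriction of $\mathsf{P}\gT^{\pi}$ to the parameterized subspace is represented by the stated Kronecker expectation — so that the categorical similarity transform built from $\tilde{\bG}(r)$ and $\bC^{\top}\bC$ and the feature whitening $\bSigma_{\bphi}^{-1/2}$ align with the two tensor factors — and that the Cram\'er adjoint $(\gT^{\pi})^{*}$ retains operator norm $\sqrt{\gamma}$ on the $\mu_{\pi}$-weighted space. The contraction itself is standard — the pushforward $b_{r,\gamma}$ scales squared Cram\'er distance by $\gamma$ and the $\mu_{\pi}$-stationary transition averaging is non-expansive — but tracking the orthogonality of $\bm{e}_2$ through $\gT^{\pi}$, which mixes the current- and next-state features exactly as in the $\bphi(s)\bphi(s^{\prime})^{\top}$ coupling of $\bUp$, is the step that requires care.
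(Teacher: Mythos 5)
Your proposal is correct and follows essentially the same route as the paper: both work in the zero-mass signed-measure Hilbert space with the $\mu_\pi$-weighted Cram\'er inner product, identify $\bUp$ as the matrix representation of $\bPi_{\bphi,K}^{\pi}\gT^{\pi}$ restricted to the parameterized subspace in the orthonormal (whitened) coordinates, and then invoke the standard projected-fixed-point error bound built on the Pythagorean split and the operator inequality $\bm{S}^{*}\bm{S}\preceq\gamma\bI_{dK}-\bUp\bUp^{\top}$. The only difference is that the paper cites this last inequality from \citet{yu10error,doi:10.1287/moor.2022.1341} after computing $\bUp$ explicitly, whereas you re-derive it from scratch; your derivation of it is sound.
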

The proof can be found in Appendix~\ref{sub:proof_fine-grained}. This result matches the instance-dependent upper bound on the approximation error established in \citet{doi:10.1287/moor.2022.1341} in the context of linear TD learning. We also derive an analogous result for the TD fixed point of tabular categorical TD learning, which is deferred to Appendix~\ref{appendix:CTD}.

\subsection{Related Works}
In this subsection, we provide a supplementary review of the related works.

\paragraph{Distributional Temporal Difference learning}
Distributional temporal difference (TD) learning was first proposed in \citet{bellemare2017distributional}. As introduced in Section~\ref{Section:intro}, several works have analyzed theoretical properties of distributional TD learning in the tabular setting. 
Among them, \citet{rowland2024nearminimaxoptimal,peng2024statistical} focused on categorical distributional TD learning in Cram\'er loss and established that, in the tabular setting, learning the full return distribution is statistically as easy as learning its expectation in the model-based and model-free settings. 
Recently \citet{tang2024off,pmlr-v202-wu23s} have extend distributional TD learning to the offline observation model setting.

To scale distributional TD learning to large or continuous state spaces, the introduction of function approximation is necessary. 
\citet{bellemare2019distributional,pmlr-v97-qu19b,lyle2019comparative,bdr2022} proposed various distributional TD learning algorithms with linear function approximation under different parameterizations, namely categorical and quantile parameterizations. \citet{wiltzer2024foundations} studied distributional policy evaluation in the multivariate reward setting and proposed the corresponding categorical distribution TD learning algorithms. \citet{kastner2025categorical} consider categorical TD learning with a KL divergence loss and derive asymptotic variance of the categorical estimates under different learning rate regimes.

\paragraph{Stochastic Optimization with Variance-reduction}
Developing variance-reduction techniques in stochastic optimization has been an active area in the past decades. 
Early works including IAG \cite{doi:10.1137/20M1381678}, SAG \cite{schmidt2017minimizing}, SVRG \cite{zhang2025estimation}, and
SAGA \cite{NIPS2014_93796419} established several first-order methods for variance reductions. In the family of TD learning algorithms, a line of works employing variance-reduction methods to achieve better algorithm performance \cite{bhandari2018finite,khamaru2021temporal,li2023accelerated}. 
Among them, \cite{li2023accelerated} directly inspires our algorithm design, which combines variance reduction approaches with the operator extrapolation method proposed by \cite{khamaru2021temporal}.

\begin{table}[!t]
\begin{center}
\caption{Sample complexity results of policy evaluation and DPE algorithms with linear function approximation}
\label{table:sample_complexity}
\begin{tabular}{|c|c|c|}
\hline 
Paper & Sample Complexity & Method\\
\hline 
\citet{samsonov2024improved} (Theorem 3) & $\widetilde{\mathcal{O}}\left(\frac{\|\bpsi^\star\|^2_{\bSigma_{\bphi}}+1}{(1-\gamma)^2\lambda_{\min}}(\frac{1}{\varepsilon^2}+\frac{1}{\lambda_{\min}})\right)$ & {\LTD}\\
\hline 
\citet{li2023accelerated} (Theorem 2)& $\widetilde{\mathcal{O}}\left(\frac{\|\bpsi^\star\|^2_{\bSigma_{\bphi}}+1}{\varepsilon^2(1-\gamma)^2\lambda_{\min}}\right)$ & {\LTD} with Variance Reduction\\
\hline 
\citet{bdr2022} (Section 9.7)& Contraction Analysis & Vanilla {\LCTD}\\
\hline 
\citet{peng2025finitesampleanalysisdistributional} (Theorem E.2)& $\widetilde{\mathcal{O}}\left(\frac{K^4(\|\btheta^\star\|^2_{V_1}+1)}{(1-\gamma)^2\lambda_{\min}}(\frac{1}{\varepsilon^2}+\frac{K^2}{\lambda_{\min}})\right)$ & Vanilla {\LCTD}\\
\hline 
\citet{peng2025finitesampleanalysisdistributional} (Theorem 4.1)& $\widetilde{\mathcal{O}}\left(\frac{\|\btheta^\star\|^2_{V_1}+1}{(1-\gamma)^2\lambda_{\min}}(\frac{1}{\varepsilon^2}+\frac{1}{\lambda_{\min}})\right)$ & {\LCTD}\\
\hline 
This work (Theorem~\ref{thm:VrFLCTD_convergence}) & $\widetilde{\mathcal{O}}\left(\frac{\|\btheta^\star\|^2_{V_1}+1}{\varepsilon^2(1-\gamma)^2\lambda_{\min}}\right)$ & {\VCTD}\\
\hline 
\end{tabular}
\end{center}
\end{table}

\paragraph{Comparisons of Theoretical Results with our Previous Work}
In Table~\ref{table:sample_complexity}, we summarize our main theoretical results and compare with previous work. 
In the table, for the task of policy evaluation, the sample complexity is defined via the $\mu_\pi$-weighted $L^2$ norm error metric for value functions; for the task of distributional policy evaluation, the sample complexity is defined via the $\mu_\pi$-weighted $W_1$ error metric for return distributions. 
Our result shares the same structure as the sample complexity of variance-reduced {\LTD} algorithm proposed in \citet{li2023accelerated}. 
Section 9.7 of \citet{bdr2022} provided a contraction analysis for the dynamic programming version of the Vanilla {\LCTD} algorithm. 

\paragraph{Comparisons of Theoretical Results with the Conference Version}
In Table~\ref{table:sample_complexity}, we also list the theoretical results in the conference version of this paper \citep{peng2025finitesampleanalysisdistributional}. 
The conference version of this paper provides a finite-sample analysis of the vanilla algorithm and removes the dependence on $K$ in the analysis of the new proposed algorithm {\LCTD}. Preserving {\LCTD}'s property of $K$-independent sample complexity, our improvement upon \citet{peng2025finitesampleanalysisdistributional} proceeds in two steps. 
First, our instance-dependent analysis of {\LCTD} recovers the convergence result in Theorem 4.1 of \citet{peng2025finitesampleanalysisdistributional} and reveals its connection to the minimax lower bound of the sample complexity. 
Furthermore, as shown in Table~\ref{table:sample_complexity}, the new proposed algorithm {\VCTD} removes the additional $\lambda_{\min}^{-1}$ term and breaks the sample size barrier of {\LCTD}, while inheriting the merit of $K$-independence.
\section{Linear-Categorical Temporal Difference Learning}\label{Section:linear_ctd}
\noindent In this section, we recap the theoretical foundations of {\LCTD} and introduce preliminary convergence results.

\subsection{The Linear System of Linear-Categorical TD learning}
The following proposition provides an analytical expression of the solution of the linear-categorical projected Bellman equation $\bm{\eta}_{\btheta}{=}\bPi_{\bphi, K}^{\pi}\gT^{\pi}\bm{\eta}_{\btheta}$ as a linear system. 
Here we work with the matrix version of $\bTheta{:=}[\btheta(0), {\ldots},\btheta(K{-}1)]{\in}\RB^{d{\times}K}$.
\begin{proposition}\label{thm:linear_cate_TD_equation}
The linear-categorical projected Bellman equation  $\bm{\eta}_{\btheta}{=}\bPi_{\bphi, K}^{\pi}\gT^{\pi}\bm{\eta}_{\btheta}$ admits a unique solution $\bm{\eta}_{\btheta^{\star}}$, where the matrix parameter $\bTheta^{\star}$ is the unique solution to the linear system for $\bTheta{\in}\RB^{d{\times}K}$
\begin{equation}\label{eq:fixed_point_equation}
\bSigma_{\bphi}\bTheta{-}\EB_{s,s^\prime,r}\!\!\brk{\bphi(s)\bphi(s^\prime)^{\top}\!\bTheta(\bC\tilde{\bG}(r)\bC^{{-}1})^{\top}\!}\!{=}\textstyle\frac{1}{K{+}1}\EB_{s,r}\!\!\brk{\bphi(s)(\textstyle\sum_{j=0}^K\!\bg_j(r){-}\bm{1}_{K})^{\top}\!\bC^{\top}\!},
\end{equation}
where for any $r\in[0,1]$ and $j,k\in\{0,1, \ldots, K\}$,
\begin{equation*}\label{eq:def_C}
    \bC {=} \brk{\ind\brc{i\geq j}}_{i,j\in[K]}\in\RB^{K{\times} K}
\end{equation*}

\begin{equation*}
     g_{j,k}(r):=\prn{1-\abs{(r+\gamma x_j-x_k)/{\iota_K}}}_+, \quad \bg_j(r):=\prn{g_{j,k}(r)}_{k=0}^{K-1}\in\RB^{K},
\end{equation*}

\begin{equation*}
    \bG(r):=
    \begin{bmatrix}
    \bg_0(r),  \ldots,  \bg_{K-1}(r)
    \end{bmatrix}\in\RB^{K{\times}K},\quad \tilde{\bG}(r):=\bG(r)-\bm{1}_K^{\top}\otimes\bg_K(r)\in\RB^{K{\times} K}.
\end{equation*}
\end{proposition}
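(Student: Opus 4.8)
The plan is to reduce the fixed-point equation to a finite-dimensional least-squares problem by working entirely with cumulative distribution function (CDF) vectors. First I would record that, for two grid-categorical signed measures of total mass $1$ with CDF vectors $\bF^{(1)},\bF^{(2)}\in\RB^K$ collecting the values at $x_0,\dots,x_{K-1}$ (the value at $x_K$ being pinned to $1$ by the total-mass constraint), the squared Cram\'er distance equals $\iota_K\|\bF^{(1)}-\bF^{(2)}\|^2$, because the CDFs are step functions constant on intervals of length $\iota_K$. Under the linear-categorical parameterization, $\bF(s;\btheta)=\bTheta^{\top}\bphi(s)+\bm{c}$ with $\bm{c}=\tfrac{1}{K+1}(1,\dots,K)^{\top}$, so the parameterization is affine in $\bTheta$, and the first-$K$ point-mass vector $\bp(s;\btheta)\in\RB^K$ satisfies $\bF(s;\btheta)=\bC\,\bp(s;\btheta)$ since $\bC$ is precisely the lower-triangular cumulative-sum operator; the remaining mass is $p_K(s;\btheta)=1-\bm{1}_K^{\top}\bp(s;\btheta)$.

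Next I would compute the target $\gT^{\pi}\bm{\eta}_{\btheta}$ in this representation. Because the Cram\'er projection $\Pi_C$ onto grid-categorical measures obeys a Pythagorean identity relative to the linear-categorical subspace $\sP^{\sgn}_{\bphi,K}$, I may replace $\bPi_{\bphi,K}^{\pi}\gT^{\pi}$ by $\bPi_{\bphi,K}^{\pi}\Pi_C\gT^{\pi}$. The operator $\Pi_C$ sends $\delta_{r+\gamma x_j}$ to $\sum_k g_{j,k}(r)\,\delta_{x_k}$, the standard triangular (linear-interpolation) kernel, which is exactly where the entries $g_{j,k}(r)$ originate. Applying this atom-by-atom to $\eta_{\btheta}(s')$ and passing $\Pi_C$ and $\gT^{\pi}$ through mixtures by linearity, the first $K$ point masses of $\Pi_C\gT^{\pi}\bm{\eta}_{\btheta}$ at a state are $\EB[\bG(r)\bp(s';\btheta)+p_K(s';\btheta)\bg_K(r)]$. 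Substituting $p_K(s';\btheta)=1-\bm{1}_K^{\top}\bp(s';\btheta)$ collapses this to $\EB[\tilde{\bG}(r)\bp(s';\btheta)+\bg_K(r)]$, which is precisely what motivates the definition $\tilde{\bG}(r)=\bG(r)-\bg_K(r)\bm{1}_K^{\top}$.

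Then I would convert back to CDFs via $\bC$ and write the normal equations for the outer linear projection: the projected parameter $\bTheta_{\mathrm{proj}}$ minimizing $\EB_s\|\bTheta^{\top}\bphi(s)+\bm{c}-\tilde{\bF}(s)\|^2$ solves $\bSigma_{\bphi}\bTheta_{\mathrm{proj}}=\EB_s[\bphi(s)(\tilde{\bF}(s)-\bm{c})^{\top}]$, where $\tilde{\bF}(s)$ is the CDF vector of $\Pi_C\gT^{\pi}\bm{\eta}_{\btheta}$ at $s$, namely $\tilde{\bF}(s)=\EB[\bC\tilde{\bG}(r)\bC^{-1}(\bTheta^{\top}\bphi(s')+\bm{c})+\bC\bg_K(r)]$. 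Imposing the fixed-point condition $\bTheta_{\mathrm{proj}}=\bTheta$ splits the right-hand side into a $\bTheta$-linear piece, which becomes the term $\EB[\bphi(s)\bphi(s')^{\top}\bTheta(\bC\tilde{\bG}(r)\bC^{-1})^{\top}]$ on the left of \eqref{eq:fixed_point_equation}, and a constant piece. For the constant piece I would use $\bC^{-1}\bm{c}=\tfrac{1}{K+1}\bm{1}_K$ to telescope $\bC\tilde{\bG}(r)\bC^{-1}\bm{c}+\bC\bg_K(r)-\bm{c}$ into $\tfrac{1}{K+1}\bC(\sum_{j=0}^{K}\bg_j(r)-\bm{1}_K)$, producing exactly the stated right-hand side.

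Finally, for existence and uniqueness I would argue that $\Pi_C\gT^{\pi}$ is a $\sqrt{\gamma}$-contraction in the $\mu_\pi$-weighted Cram\'er distance while $\bPi_{\bphi,K}^{\pi}$ is a non-expansion (being an orthogonal projection in the $\mu_\pi$-weighted $L^2$ space), so $\bPi_{\bphi,K}^{\pi}\gT^{\pi}$ is a contraction on the complete affine set $\sP^{\sgn}_{\bphi,K}$; Banach's theorem then yields a unique fixed point $\bm{\eta}_{\btheta^{\star}}$, and uniqueness of $\bTheta^{\star}$ follows from injectivity of $\bTheta\mapsto\bm{\eta}_{\btheta}$ under $\bSigma_{\bphi}\succ 0$. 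Equivalently, the same contraction estimate shows that the $\vect(\cdot)$-form of the left-hand linear operator in \eqref{eq:fixed_point_equation} is invertible. I expect the main obstacle to be the bookkeeping in the middle step: correctly propagating the affine total-mass-$1$ constraint through the pushforward and the two projections so that the $\bg_K(r)$ correction assembles into $\tilde{\bG}(r)$, and keeping the placements of $\bC$, $\bC^{-1}$, and the transposes consistent while moving between point-mass and CDF coordinates.
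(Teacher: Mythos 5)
Your proposal is correct and follows essentially the same route as the paper's proof (given in Appendix C of the conference version and mirrored in Appendix~C.1 here): project the pushforward onto the grid via the triangular kernel $g_{j,k}(r)$, eliminate $p_K$ through the total-mass constraint to produce $\tilde{\bG}(r)$, write the normal equations for the $\mu_\pi$-weighted least-squares projection, and invoke the $\sqrt{\gamma}$-contraction of $\bPi_{\bphi,K}^{\pi}\gT^{\pi}$ together with $\bSigma_{\bphi}\succ 0$ for existence and uniqueness of $\bTheta^{\star}$. Your bookkeeping in CDF coordinates (including the telescoping $\bC\tilde{\bG}(r)\bC^{-1}\bm{c}+\bC\bg_K(r)-\bm{c}=\tfrac{1}{K+1}\bC(\sum_{j=0}^{K}\bg_j(r)-\bm{1}_K)$) checks out and is only a cosmetic variant of the paper's mix of point-mass and CDF coordinates.
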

The proof can be found in Appendix C in the conference version \citep{peng2025finitesampleanalysisdistributional}. 
Consequently, one can solve Equation~\eqref{eq:fixed_point_equation} through linear stochastic approximation (LSA) given the streaming data $\{(s_t,a_t,s_t^{\prime},r_t\})_{t=0}^\infty$:
\begin{equation}\label{eq:linear_CTD}
\begin{aligned}
\text{{\LCTD}: }\qquad\bTheta_t{\gets}&\bTheta_{t{-}1}{-}\alpha\bphi(s_t)\Big[\bphi(s_t)^{\top}\bTheta_{t{-}1}{-}\bphi(s_{t^{\prime}})^{\top}\bTheta_{t{-}1}(\bC\tilde{\bG}(r_t)\bC^{-1})^{\top}\\
&{-}\prn{K{+}1}^{-1}(\textstyle\sum_{j=0}^K\bg_j(r_t){-}\bm{1}_{K})^{\top}\bC^{\top}\Big],
\end{aligned}
\end{equation}
for any $t\geq 1$, where $\alpha$ is a constant step size. 
The update rule in Equation~\eqref{eq:linear_CTD} can be written in vectorized form using Kronecker's product as:
\begin{equation}\label{eq:linear_CTD_vec}
\begin{aligned}
    \text{{\LCTD}(vec): }&\btheta_t{\gets}\btheta_{t-1}{-}\alpha\prn{\bA_{t}\btheta_{t-1}{-}\bb_t}, \\
    & \bA_t{=}\brk{\bI_K{\otimes}\prn{\bphi(s_t)\bphi(s_t)^{\top}}}{-}[(\bC\tilde{\bG}(r_t)\bC^{-1}){\otimes}\prn{\bphi(s_t)\bphi(s^\prime_t)^{\top}}], \\
    & \bb_t=(K+1)^{-1}[\bC(\textstyle\sum_{j=0}^K\bg_j(r_t)-\bm{1}_K)]\otimes\bphi(s_t).
\end{aligned}
\end{equation}
This vectorized formulation of the algorithm is convenient for theoretical analysis. In the following, unless otherwise specified, we use {\LCTD} for the vectorized form of the algorithm, since these two formulations are equivalent. {\LCTD} can be seen as a counterpart to {\LTD} in \cite{samsonov2024improved}, which estimates the solution $\bV_{\bpsi^\star}$ of the linear projected Bellman equation through LSA.

\subsection{Convergence Results and Sample Size Barrier}\label{Subsection:L2_convergence}
In this subsection, we introduce the convergence results for {\LCTD} in the generative model setting, which we establish in \citet{peng2025finitesampleanalysisdistributional}. 
In the following, we have two choices of loss functions on the CDF, as appropriate for the specific use: $\mu_\pi$-weighted $L^2$-distance $\gL(\btheta) = \ell_{2,\mu_\pi}(\bm{\eta}_{\btheta},\bm{\eta}_{\btheta^{\star}})$ and $\mu_\pi$-weighted 1-Wasserstein distance $\gW(\btheta) = W_{1,\mu_{\pi}}(\bm{\eta}_{\btheta},\bm{\eta}_{\btheta^{\star}})$.

We present the non-asymptotic convergence rates for the estimation error of {\LCTD}. Specifically, the estimator $\bar{\btheta}_{T}$ is given by the Polyak-Ruppert tail average of the updates of {\LCTD} in Equation~\eqref{eq:linear_CTD}, that is, $\bar{\btheta}_{T}:=(T{/}2{+}1)^{-1}\sum_{t{=}T{/}2}^T\btheta_t$. 
The following proposition formally states the convergence performance of {\LCTD} in the $\mu_{\pi}$-weighted $1$-Wasserstein metric. 
This choice of loss function aligns with our previous analysis in \citet{peng2025finitesampleanalysisdistributional}.
\begin{proposition}\label{thm:l2_error_linear_ctd}
Let $\bJ = \bI_K\otimes\bSigma_{\bphi}$.
For any $K\geq (1-\gamma)^{-1}$ and $\alpha\in(0,(1-\sqrt\gamma)/256)$, it holds that
\begin{align*}
\EB^{1/2}[(\gW(\bar\btheta_T))^2]
\lesssim&\frac{\norm{\btheta^{\star}}_{V_1}+1}{\sqrt{T}(1-\gamma)\sqrt{\lambda_{\min}}}\prn{1+\sqrt{\frac{\alpha}{(1-\gamma)\lambda_{\min}}}}+\frac{\norm{\btheta^{\star}}_{V_1}+1}{T\sqrt{\alpha }(1-\gamma)^{3/2}\lambda_{\min}}\\
&+\frac{(1-\frac{1}{2}\alpha (1-\sqrt\gamma)\lambda_{\min} )^{T/2}}{T\alpha(1-\gamma)\lambda_{\min}}\prn{1+\sqrt{\frac{\alpha}{(1-\gamma)\lambda_{\min}}}}\norm{\btheta_0-\btheta^{\star}}_{V_2},
\end{align*}
where $\norm{\btheta^{\star}}_{V_1}:=\frac{1}{\sqrt{K}(1-\gamma)}\norm{\btheta^{\star}}_{\bJ}$ and $\norm{\btheta_0-\btheta^{\star}}_{V_2}:=\frac{1}{\sqrt{K}(1-\gamma)}\norm{\btheta_0-\btheta^{\star}}$.
\end{proposition}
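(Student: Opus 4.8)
The plan is to reduce the Wasserstein error to a weighted Euclidean error on the parameter, and then to run the standard Polyak--Ruppert analysis of the linear stochastic approximation (LSA) recursion in Equation~\eqref{eq:linear_CTD_vec}. First I would reduce the metric. For two linear-categorical laws $\bm{\eta}_{\btheta}$ and $\bm{\eta}_{\btheta^\star}$ the CDF gaps at state $s$ are $F_k(s;\btheta)-F_k(s;\btheta^\star)=\bphi(s)^\top(\btheta(k)-\btheta^\star(k))$ for $k\in\{0,\dots,K-1\}$, so on the equally spaced grid with spacing $\iota_K$ one has $W_1(\eta_{\btheta}(s),\eta_{\btheta^\star}(s))=\iota_K\sum_{k=0}^{K-1}|\bphi(s)^\top(\btheta(k)-\btheta^\star(k))|$. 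Applying Cauchy--Schwarz over the $K$ support points and then averaging over $s\sim\mu_\pi$ gives $\gW(\btheta)^2\le K\iota_K^2\,\|\btheta-\btheta^\star\|_\bJ^2$, and since $K\iota_K^2=((1-\gamma)^2 K)^{-1}$ this is exactly $\gW(\btheta)\le\|\btheta-\btheta^\star\|_{V_1}$. (The condition $K\ge(1-\gamma)^{-1}$, i.e.\ $\iota_K\le1$, is what keeps the categorical geometry well behaved.) It therefore suffices to control $\EB^{1/2}[\|\bar\btheta_T-\btheta^\star\|_{V_1}^2]=\tfrac{1}{\sqrt K(1-\gamma)}\EB^{1/2}[\|\bar\btheta_T-\btheta^\star\|_\bJ^2]$.

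Next I would set up the LSA error recursion. Writing $\Delta_t=\btheta_t-\btheta^\star$, $\bar\bA=\EB[\bA_t]$, $\bar\bb=\EB[\bb_t]$, and the centered noise $\varepsilon_t=(\bA_t-\bar\bA)\btheta^\star-(\bb_t-\bar\bb)$, Equation~\eqref{eq:linear_CTD_vec} together with the fixed-point identity $\bar\bA\btheta^\star=\bar\bb$ from Proposition~\ref{thm:linear_cate_TD_equation} yields $\Delta_t=(\bI_{dK}-\alpha\bA_t)\Delta_{t-1}-\alpha\varepsilon_t$, and in the generative (i.i.d.) setting $\{\varepsilon_t\}$ is a martingale-difference sequence. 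Two ingredients drive the analysis. The first is a drift/contraction estimate: in the $\bJ$-weighted inner product $\bar\bA$ has a positive-definite symmetric part with constant of order $(1-\sqrt\gamma)\lambda_{\min}$, which controls $\opnorm{\bI_{dK}-\alpha\bA_t}$-type quantities, produces the geometric factor $(1-\tfrac12\alpha(1-\sqrt\gamma)\lambda_{\min})$ appearing in the transient term, and gives $\opnorm{\bar\bA^{-1}}\lesssim((1-\gamma)\lambda_{\min})^{-1}$ in the relevant norm. The second is a stationary variance bound $\EB\|\varepsilon_t\|^2\lesssim(\cdots)(\|\btheta^\star\|_{V_1}^2+1)$ that is free of $K$ after the $V_1$-normalization. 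Both are structural facts proved for the preconditioned parameterization in the conference version \citep{peng2025finitesampleanalysisdistributional}; the key point is that the factors $\bC\tilde\bG(r)\bC^{-1}$ act as a $\sqrt\gamma$-contraction in the Cram\'er ($\ell_2$) geometry, which is what produces the $(1-\sqrt\gamma)$ rate and cancels all polynomial dependence on $K$.

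Then I would run the Polyak--Ruppert decomposition. Rearranging as $\bar\bA\Delta_{t-1}=\tfrac1\alpha(\Delta_{t-1}-\Delta_t)+(\bar\bA-\bA_t)\Delta_{t-1}-\varepsilon_t$ and averaging over $t=T/2+1,\dots,T$ gives, up to the exact averaging constants,
\begin{equation*}
\bar\btheta_T-\btheta^\star \;=\; \bar\bA^{-1}\Big[\tfrac{1}{T\alpha}(\Delta_{T/2}-\Delta_T)+\tfrac{2}{T}\!\sum_{t}(\bar\bA-\bA_t)\Delta_{t-1}-\tfrac{2}{T}\!\sum_{t}\varepsilon_t\Big].
\end{equation*}
Taking $\bJ$-weighted second moments term by term: the martingale sum $\bar\bA^{-1}\tfrac2T\sum_t\varepsilon_t$ has variance $O(1/T)$ and contributes the leading $\tfrac{\|\btheta^\star\|_{V_1}+1}{\sqrt T(1-\gamma)\sqrt{\lambda_{\min}}}$ statistical term, the single power $\sqrt{\lambda_{\min}}$ (rather than $\lambda_{\min}$) reflecting a partial cancellation between one factor of $\bar\bA^{-1}$ and the $\bSigma_\bphi$-structure of the noise covariance; the cross term $\bar\bA^{-1}\tfrac2T\sum_t(\bar\bA-\bA_t)\Delta_{t-1}$, bounded by Cauchy--Schwarz using $\sup_t\EB\|\Delta_{t-1}\|^2$ from the drift estimate, produces the higher-order $\tfrac{\|\btheta^\star\|_{V_1}+1}{T\sqrt\alpha(1-\gamma)^{3/2}\lambda_{\min}}$ term; and the boundary term $\bar\bA^{-1}\tfrac{1}{T\alpha}(\Delta_{T/2}-\Delta_T)$ splits into a noise-floor part (absorbed into the cross term) and the initialization part $(1-\tfrac12\alpha(1-\sqrt\gamma)\lambda_{\min})^{T/2}\|\btheta_0-\btheta^\star\|_{V_2}$, after converting $\|\Delta_0\|$ into the $V_2$ norm.

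The main obstacle is the drift/contraction step: unlike plain linear TD, $\bar\bA$ here is a non-symmetric $dK\times dK$ matrix built from the Kronecker factors $\bC\tilde\bG(r)\bC^{-1}\otimes\bphi(s)\bphi(s')^\top$, and one must show that its symmetric part in the $\bJ$-geometry is bounded below by $\tfrac12(1-\sqrt\gamma)\lambda_{\min}$ \emph{with no dependence on $K$}. This rests on $\bC\tilde\bG(r)\bC^{-1}$ being a $\sqrt\gamma$-contraction in Cram\'er distance and on controlling its interaction with $\bSigma_\bphi^{-1}$; establishing these estimates uniformly in $K$, so that the $\iota_K$ and $V_1$ normalizations absorb every power of $K$, is the technical heart of the argument and is exactly where the preconditioned formulation from \citep{peng2025finitesampleanalysisdistributional} is indispensable.
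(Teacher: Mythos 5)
Your proposal is correct and follows essentially the same route as the paper: reduce $\gW$ to the $V_1$-norm of the parameter error (the paper goes via $W_1\le \ell_2/\sqrt{1-\gamma}$ together with the isometry $\ell_{2,\mu_\pi}^2(\bm{\eta}_{\btheta},\bm{\eta}_{\btheta^\star})=\iota_K\norm{\btheta-\btheta^\star}_{\bJ}^2$, which yields exactly the same constant as your direct Cauchy--Schwarz computation), and then run a tail-averaged Polyak--Ruppert analysis of the LSA recursion driven by the $\bJ$-geometry lower bound $\langle\bar{\bA}\btheta,\btheta\rangle\ge(1-\sqrt{\gamma})\norm{\btheta}_{\bJ}^2$, the second-moment bound $\EB[\bA_t^{\top}\bA_t]\preccurlyeq 2(1+\gamma)\bJ$, and the $K$-controlled noise bound $\tr(\bSigma_{\be})\lesssim\norm{\btheta^{\star}}_{\bJ}^2+K(1-\gamma)^2$, the only packaging difference being that the paper verifies these as hypotheses of a general LSA theorem (Theorem~14 of \citet{samsonov2024improved}) instead of unrolling the decomposition by hand. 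Two small corrections: the condition $K\ge(1-\gamma)^{-1}$ is not what makes your metric reduction work (that holds for every $K$) but enters through the bound $\norm{\bb_t}\lesssim\sqrt{K}(1-\gamma)$ underlying Lemma~\ref{lem:tr_cov_sigma_e}, and the geometric transient factor requires the second moment of the random matrices $\bA_t$ in addition to the lower bound on the symmetric part of $\bar{\bA}$, a point your sketch only acknowledges implicitly.
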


\begin{corollary}
\label{coro:l2_sample_complexity_linear_ctd}
For any $K\geq (1-\gamma)^{-1}$ and $\alpha\in(0,(1-\sqrt\gamma)/76)$, to achieve $ \EB^{1/2}[(\gW(\bar\btheta_T))^2]\leq \varepsilon$,
\begin{equation*}
    \wtilde{\gO}\prn{\frac{\norm{\btheta^{\star}}^2_{V_1}+1}{\varepsilon^2(1-\gamma)^2\lambda_{\min}}\prn{1+\frac{\alpha}{(1-\gamma)\lambda_{\min}}} +\frac{\norm{\btheta^{\star}}_{V_1}+1}{\varepsilon\sqrt{\alpha }(1-\gamma)^{3/2}\lambda_{\min}}}
\end{equation*}
samples are sufficient. 
\end{corollary}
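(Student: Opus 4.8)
The plan is to turn the non-asymptotic bound of Proposition~\ref{thm:l2_error_linear_ctd} into a sample-complexity statement by the standard term-by-term thresholding argument: I would require each of the three additive terms in the bound on $\EB^{1/2}[(\gW(\bar\btheta_T))^2]$ to be at most a constant multiple of $\varepsilon$, solve each resulting inequality for $T$, and then take the sample complexity to be the maximum of the three thresholds, which up to constants is their sum. Since Proposition~\ref{thm:l2_error_linear_ctd} is already stated with $\lesssim$, it suffices to enforce each term $\lesssim\varepsilon$ in the admissible step-size regime $\alpha\lesssim 1-\sqrt\gamma$ with $K\ge(1-\gamma)^{-1}$; the universal constants introduced are absorbed into the $\wtilde{\gO}$ notation.

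For the leading statistical term $\frac{\norm{\btheta^{\star}}_{V_1}+1}{\sqrt{T}(1-\gamma)\sqrt{\lambda_{\min}}}(1+\sqrt{\alpha/((1-\gamma)\lambda_{\min})})$, setting it $\lesssim\varepsilon$ and squaring gives $T\gtrsim \frac{(\norm{\btheta^{\star}}_{V_1}+1)^2}{\varepsilon^2(1-\gamma)^2\lambda_{\min}}(1+\sqrt{\alpha/((1-\gamma)\lambda_{\min})})^2$. I would then invoke the elementary inequalities $(a+1)^2\simeq a^2+1$ and $(1+\sqrt{x})^2\lesssim 1+x$ (both from AM--GM) to rewrite this threshold as the first term $\frac{\norm{\btheta^{\star}}^2_{V_1}+1}{\varepsilon^2(1-\gamma)^2\lambda_{\min}}(1+\frac{\alpha}{(1-\gamma)\lambda_{\min}})$ of the claimed complexity. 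The second term $\frac{\norm{\btheta^{\star}}_{V_1}+1}{T\sqrt{\alpha}(1-\gamma)^{3/2}\lambda_{\min}}$ decays like $1/T$, so setting it $\lesssim\varepsilon$ and solving directly produces the second threshold $\frac{\norm{\btheta^{\star}}_{V_1}+1}{\varepsilon\sqrt{\alpha}(1-\gamma)^{3/2}\lambda_{\min}}$. These two steps are purely algebraic and recover exactly the two explicit terms displayed in the corollary.

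The one genuinely qualitative step, and the place I expect the main obstacle, is the third (initialization) term, whose numerator carries the geometric factor $(1-\tfrac12\alpha(1-\sqrt\gamma)\lambda_{\min})^{T/2}$ multiplied by $\norm{\btheta_0-\btheta^{\star}}_{V_2}$ and a polynomial-in-$T$ prefactor. Because this factor decays exponentially in $T$ while everything else grows only polynomially, requiring the term to be $\lesssim\varepsilon$ forces merely $T\gtrsim \frac{1}{\alpha(1-\gamma)\lambda_{\min}}\log(\cdots)$, using $\log(1-c)\le -c$ with $c=\tfrac12\alpha(1-\sqrt\gamma)\lambda_{\min}$ together with $1-\sqrt\gamma\simeq 1-\gamma$. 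The delicate point is to verify that the polynomial-in-$T$ prefactor $1/(T\alpha(1-\gamma)\lambda_{\min})$ and the initialization norm $\norm{\btheta_0-\btheta^{\star}}_{V_2}$ enter this threshold only through the logarithm, so that the exponential term never dictates the polynomial order of $T$; this requires a short argument that the logarithmic dependence on these quantities is exactly the kind suppressed by $\wtilde{\gO}$ (which ignores logarithmic factors of $(1-\gamma)^{-1},\lambda_{\min}^{-1},\alpha^{-1},\varepsilon^{-1}$ and the relevant norms). Once this absorption is made rigorous, the third threshold is dominated by the first two up to such logarithmic factors and contributes no new term, and collecting the remaining thresholds yields the stated sample complexity.
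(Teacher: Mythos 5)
Your proposal is correct and is essentially the same (and only natural) derivation the paper implicitly relies on: threshold each of the three terms of Proposition~\ref{thm:l2_error_linear_ctd} at $\varepsilon$, use $(1+\sqrt{x})^2\lesssim 1+x$ on the leading term, and observe that the exponentially decaying initialization term only forces $T\gtrsim (\alpha(1-\gamma)\lambda_{\min})^{-1}\log(\cdot)$, which is absorbed into $\wtilde{\gO}$ and dominated by the second displayed threshold in the regime $\varepsilon\lesssim\sqrt{\alpha/(1-\gamma)}$ that the paper works in. The only caveat, which applies equally to the paper's own statement, is that this last domination is not literally universal in $\varepsilon$, but it is the standard convention here and does not constitute a gap.
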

The proof of Theorem~\ref{thm:l2_error_linear_ctd} is based on an exponential stability analysis of the underlying linear system. For the proof outlines, refer to Section 5 in the conference version \citep{peng2025finitesampleanalysisdistributional} of this work.

With Corollary~\ref{coro:l2_sample_complexity_linear_ctd} at hand, we can analyze the sample complexity under different step size choices. 
Taking the largest possible instance-independent step size $\alpha \simeq (1-\gamma)$, we obtain the sample complexity bound for $\varepsilon \in (0, 1)$ as  
\begin{equation*}
\wtilde{\gO}\prn{\varepsilon^{-2}(1-\gamma)^{-2}\lambda_{\min}^{-2}\prn{\norm{\btheta^{\star}}^2_{V_1}+1}}.  
\end{equation*}  
Alternatively, with the optimal instance-dependent step size $\alpha \simeq (1-\gamma)\lambda_{\min}$, the bound becomes
\begin{equation}\label{eq:instance_dependent_step_size_l2_sample_complexity}  
    \wtilde{\gO}\prn{\prn{\varepsilon^{-2}+{\lambda_{\min}^{-1}}}(1-\gamma)^{-2}\lambda_{\min}^{-1}\prn{\norm{\btheta^{\star}}^2_{V_1}+1}}.  
\end{equation}  
This observation reveals a fundamental sample size barrier for {\LCTD}: achieving theoretically optimal scaling requires the target accuracy $\varepsilon$ to satisfy $\varepsilon = \tilde{\mathcal{O}}(\sqrt{\lambda_{\min}})$. 
As a result, the algorithm requires a computational budget in terms of $T$ that is independent of $\varepsilon$. 
This barrier has practical implications particularly when $\lambda_{\min}$ is small. 
In such cases, the sample complexity is dominated by the $\lambda_{\min}^{-1}$ term unless $T$ is sufficiently large. 
This issue is analogous to the sample size barrier of {\LTD} \citep{samsonov2024improved} and can be resolved by using variance reduction techniques.
\section{Instance-dependent Analysis of \LCTD}\label{Section:instance-dependent_analysis}
\noindent In this section, we provide an instance-dependent analysis of {\LCTD}.
In the following, the quality of the estimator is measured using the $\mu_{\pi}$-weighted $L^2$ distance loss function $\gL(\btheta)=\ell_{2,\mu_{\pi}}^2(\bm{\eta}_{\btheta},\bm{\eta}_{\btheta^{\star}})$. 
An upper bound in this $\mu_{\pi}$-weighted $L^2$ distance implies an upper bound in the $\mu_{\pi}$-weighted $W_1$ distance, as Lemma~\ref{lem:prob_basic_inequalities} guarantees that $W_1(\nu_1,\nu_2) \leq \ell_2(\nu_1,\nu_2)/\sqrt{1-\gamma}$ for any $\nu_1, \nu_2\in\sP^{\sgn}$. 
Moreover, the $\mu_{\pi}$-weighted $L^2$ distance loss admits an explicit formulation as follows.
\begin{equation*}
    \ell_{2,\mu_{\pi}}^2\prn{\bm{\eta}_{\btheta},\bm{\eta}_{\btheta^{\star}}}
    =\iota_K\norm{\btheta-\btheta^\star}^2_{\bI_K\otimes\bSigma_{\bphi}}.
\end{equation*}
This relation is due to the isometry of the affine space $(\sP^{\sgn}_{\bphi,K}, \ell_{2,\mu_{\pi}})$ with $(\RB^{dK}, \sqrt{\iota_K}\norm{\cdot}_{\bI_K\otimes\bSigma_{\bphi}})$, which we establish in Proposition D.1 of \citet{peng2025finitesampleanalysisdistributional}.

\subsection{Lower Bound}\label{subsection:analysis_of_lower_bounds}
Driven by the interest in the information-theoretic limits of estimating the target parameter $\btheta^{\star}$, this subsection establishes the instance-dependent, asymptotic local minimax lower bound for this estimation problem.
To begin with, we introduce a simplified generative observation model for the data stream $\{\xi_{t}\}_{t=0}^\infty=\{(s_t,s_t^{\prime},r_t)\}_{t=0}^\infty$. 
In the $t$-th iteration, samples are collected as follows: the current state $s_t$ is drawn from a distribution $\mu$ over the discrete state space (which is not necessarily the stationary distribution $\mu_\pi$), the next state $s_t'$ is sampled from the transition kernel $\gP(\cdot|s_t)$, and the reward $r_t$ is drawn from the conditional reward distribution $\gR(\cdot|s_t, s_t')$. For simplicity, rewards take values in a discrete reward space. 

Within this framework, a problem instance $\gI$ of is formally parameterized by the tuple $\gI = (\mu,\bP,\bR)$. 
Here we model $\gP$ as a transition matrix $\bP $ and $\gR$ as a three-dimensional tensor $\mathbf{R}$, where entry $\bR_{i,j,k}$ denotes the probability that the transition $i\rightarrow j$ results in a reward indexed $k$ in a reward space $S_r = \{r_{k}|k\in [n_r]\}$. Based on this representation, we define the $\epsilon$-neighborhood of an instance $\mathcal{I}$ as
\begin{equation*}
    \mathcal{N}(\mathcal{I},\epsilon) = \Big\{\mathcal{I}^{\prime}=(\mu^{\prime},\bP^{\prime},\bR^{\prime}):\norm{\mu^{\prime}-\mu} + \norm{\bP^{\prime}-\bP}_{F} + \norm{\bR^{\prime}-\bR}_{F}\leq \epsilon \Big\},
\end{equation*}
where $\|\cdot\|_{F}$ denotes the Frobenius norm. 
The solution to the linear system in Proposition~\ref{eq:linear_CTD} for $\gI$ is given by (written in vectorized form)
\begin{equation}\label{eq:instance_solution_of_linear_ctd}
    \btheta^{\star}(\mathcal{I}) =  \frac{1}{K+1}\brk{ \bJ - \EB\brk{ \prn{\bC\tilde{\bG}(r)\bC^{-1}} \otimes \bphi(s)\bphi(s')^\top } }^{-1}\EB \brk{  \prn{\bC \prn{ \sum_{j=0}^K \bg_j(r) - \bm{1}_K }}\otimes\bphi(s) }.
\end{equation}
Before heading to the results, we define some important statistical quantities for {\LCTD}. 
From the basic update rule of $\LCTD:\btheta_t{=}\btheta_{t-1}{-}\alpha\prn{\bA_{t}\btheta_{t-1}{-}\bb_t}$, we denote by $\bar{\bA} $ and $\bar{\bb}$ the expectations of $\bA_t$ and $\bb_t$ respectively. 
Define
\begin{equation*}
    \be_t=\prn{\bA_t-\bar{\bA}}\btheta^{\star}-\prn{\bb_t-\bar{\bb}} = \bA_t\btheta^{\star}-\bb_t.
\end{equation*}
Let $\bSigma_{\be}=\mathbb{E}[\be\be^{\top}]$ be the covariance of this stochastic operator and 
\begin{equation}\label{eq:trace_def}
    \widetilde{\bSigma}_{\be}:=\bJ^{1 / 2} \bar{\bA}^{-1} \bSigma_{\be} \bar{\bA}^{-\top} \bJ^{1 / 2}.
\end{equation}
The subscript $t$ is omitted here because $\be_{t}$ is i.i.d. in the generative model setting. 
Now we are ready to write the local asymptotic minimax risk as
\begin{equation*}
    \mathfrak{M}(\mathcal{I}):=\lim _{c \rightarrow \infty} \lim _{T \rightarrow \infty} \inf _{\hat{\btheta}_T} \sup _{\mathcal{I}^{\prime} \in \mathcal{N}(\mathcal{I}, c / \sqrt{T})} T \cdot \EB_{\mathcal{I}^{\prime}}    \brk{\ell_{2,\mu_{\pi}}^2\prn{\bm{\eta}_{\hat\btheta_T},\bm{\eta}_{\btheta^{\star}(\mathcal{I})}}}.
\end{equation*}
The following theorem characterizes the local asymptotic risk.
\begin{theorem}\label{thm:minimax_lower_bound}
Let $Z \in \mathbb{R}^d$ be a multivariate Gaussian random variable
\begin{equation*}
    Z \sim \mathcal{N}(0,\widetilde{\bSigma}_{\be}).
\end{equation*}
The local asymptotic minimax risk for the estimation of $\bm{\eta}_{\btheta^{\star}}$ is given by
\begin{equation*}
    \mathfrak{M}(\mathcal{I})=\iota_K \EB[\norm{Z}_{2}^2]=\frac{\tr{(\widetilde{\bSigma}_{\be})}}{K(1-\gamma)}.
\end{equation*}
\end{theorem}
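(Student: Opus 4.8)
The plan is to recognize the stated quantity as a classical H\'ajek--Le Cam local asymptotic minimax (LAM) bound for a smooth finite-dimensional functional, and to identify its efficient covariance with the sandwich covariance $\bar{\bA}^{-1}\bSigma_{\be}\bar{\bA}^{-\top}$ familiar from the central limit theorem for averaged linear stochastic approximation. First I would use the isometry recorded just above this subsection, $\ell_{2,\mu_{\pi}}^2(\bm{\eta}_{\btheta},\bm{\eta}_{\btheta^{\star}})=\iota_K\norm{\btheta-\btheta^{\star}}_{\bJ}^2$, to rewrite the risk $T\cdot\EB_{\mathcal{I}'}[\ell_{2,\mu_{\pi}}^2(\bm{\eta}_{\hat{\btheta}_T},\bm{\eta}_{\btheta^{\star}(\mathcal{I})})]$ as $\iota_K\cdot T\cdot\EB_{\mathcal{I}'}[\norm{\hat{\btheta}_T-\btheta^{\star}(\mathcal{I})}_{\bJ}^2]$. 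This reduces the problem to the local minimax risk of estimating the finite-dimensional functional $\mathcal{I}'\mapsto\btheta^{\star}(\mathcal{I}')\in\RB^{dK}$ under the bowl-shaped (symmetric, convex) loss $x\mapsto\iota_K\norm{x}_{\bJ}^2$, with the target held at the center value $\btheta^{\star}(\mathcal{I})$ while the data are generated from the perturbed instance $\mathcal{I}'$; the drift $\btheta^{\star}(\mathcal{I}')-\btheta^{\star}(\mathcal{I})$, of order $c/\sqrt{T}$ over $\mathcal{N}(\mathcal{I},c/\sqrt{T})$, is exactly the adversarial shift in the LAM formulation.

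Next I would supply the two analytic ingredients the LAM theorem requires. The samples $\xi_t=(s_t,s_t',r_t)$ are i.i.d.\ draws from a finite multinomial-type model parameterized by $\mathcal{I}=(\mu,\bP,\bR)$, so the product experiment is regular and locally asymptotically normal; I would write down its score and Fisher information. For differentiability of the functional, I would differentiate the fixed-point identity $\bar{\bA}(\mathcal{I})\btheta^{\star}(\mathcal{I})=\bar{\bb}(\mathcal{I})$ implied by \eqref{eq:instance_solution_of_linear_ctd}. The implicit function theorem gives $\bar{\bA}\,d\btheta^{\star}=-(d\bar{\bA})\btheta^{\star}+d\bar{\bb}=-d\,\EB[\be]$, where $\be=\bA\btheta^{\star}-\bb$ is the per-sample stochastic-approximation noise, so that $d\btheta^{\star}=-\bar{\bA}^{-1}\,d\,\EB[\be]$. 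This identifies the influence function of $\btheta^{\star}$ with $-\bar{\bA}^{-1}\be$ up to centering, and hence its efficient asymptotic covariance with $\bar{\bA}^{-1}\bSigma_{\be}\bar{\bA}^{-\top}$, where $\bSigma_{\be}=\EB[\be\be^{\top}]$. The content of this step is that the abstract Cram\'er--Rao/efficient-influence covariance produced by the score of $(\mu,\bP,\bR)$ coincides with the concrete operator noise covariance $\bSigma_{\be}$.

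With these in hand I would invoke the local asymptotic minimax theorem for bowl-shaped loss: the limit $\lim_{c\to\infty}\lim_{T\to\infty}\inf_{\hat{\btheta}_T}\sup_{\mathcal{N}(\mathcal{I},c/\sqrt{T})}$ of the rescaled risk equals $\EB[\iota_K\norm{W}_{\bJ}^2]$, where $W\sim\mathcal{N}(0,\bar{\bA}^{-1}\bSigma_{\be}\bar{\bA}^{-\top})$ is the Gaussian limit carrying the efficient covariance. A trace computation finishes the proof: setting $Z=\bJ^{1/2}W$ gives $\norm{W}_{\bJ}^2=\norm{Z}_2^2$ and $Z\sim\mathcal{N}(0,\widetilde{\bSigma}_{\be})$ by the definition \eqref{eq:trace_def} of $\widetilde{\bSigma}_{\be}=\bJ^{1/2}\bar{\bA}^{-1}\bSigma_{\be}\bar{\bA}^{-\top}\bJ^{1/2}$, so that $\EB[\iota_K\norm{W}_{\bJ}^2]=\iota_K\EB[\norm{Z}_2^2]=\iota_K\tr(\widetilde{\bSigma}_{\be})$; substituting $\iota_K=1/((1-\gamma)K)$ yields $\mathfrak{M}(\mathcal{I})=\tr(\widetilde{\bSigma}_{\be})/(K(1-\gamma))$, as claimed.

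The main obstacle I expect is the second step: rigorously matching the score-based efficient covariance of the parametric perturbation model to the noise covariance $\bSigma_{\be}$. One has to confirm that perturbing $(\mu,\bP,\bR)$ along the score directions reproduces exactly the differential $d\,\EB[\be]$ from the implicit-function computation, and that no direction lowers the variance below $\bar{\bA}^{-1}\bSigma_{\be}\bar{\bA}^{-\top}$, so that this is the genuinely \emph{efficient} covariance rather than a loose surrogate. Handling the inverse structure in \eqref{eq:instance_solution_of_linear_ctd} and verifying that $\bar{\bA}$ is invertible on the relevant subspace (guaranteed by Proposition~\ref{thm:linear_cate_TD_equation}) are the supporting technical points; the trace manipulation in the last step is routine.
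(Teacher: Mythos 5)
Your proposal follows essentially the same route as the paper's proof: both reduce the risk via the isometry to $\iota_K\norm{\hat{\btheta}_T-\btheta^{\star}}_{\bJ}^2$, invoke the H\'ajek--Le Cam local asymptotic minimax theorem, differentiate the fixed-point identity $\bar{\bA}\btheta^{\star}=\bar{\bb}$ to get $d\btheta^{\star}=-\bar{\bA}^{-1}\,d\,\EB[\be]$, identify the efficient covariance with $\bar{\bA}^{-1}\bSigma_{\be}\bar{\bA}^{-\top}$, and finish with the trace identity. The one step you flag as the ``main obstacle''---verifying that $\nabla h^{\top}J_{\vartheta}^{\dagger}\nabla h$ equals $\bar{\bA}^{-1}\bSigma_{\be}\bar{\bA}^{-\top}$---is exactly where the paper spends its effort, computing the multinomial Fisher information block by block, its pseudoinverse via Sherman--Morrison--Woodbury, and assembling the sandwich through the covariance decomposition formula.
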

 The proof of Theorem~\ref{thm:minimax_lower_bound} can be found in Appendix~\ref{appendix:proof_instance_dependent}. This result parallels the instance-specific lower bound on stochastic error in \citet{li2023accelerated}. 
 While incorporating the perturbation of the initial state distribution $\mu$ as in \citet{li2023accelerated}, our lower bound construction introduces an additional perturbation at the reward level. 
 To be concrete, we perturb the point mass probability of the categorical return distributions. 
 With this formulation, Theorem~\ref{thm:minimax_lower_bound} provides a minimax lower bound that shares a consistent structure with the corresponding result derived in \citet{li2023accelerated} for estimating the TD fixed point $\bV_{\bpsi^\star}$ in the context of linear TD learning.  
 Since \cite{li2023accelerated} further proposed accelerated algorithms and succeeded in derive sharp convergence rate with respect to the lower bound, this consistency strengthens the foundation for developing algorithms for learning $\btheta^{\star}$ that achieve an instance-optimal convergence rate.
\begin{remark}
The local asymptotic minimax risk of estimating $\bV_{\bpsi^\star}$ in \cite{li2023accelerated} has the form $\tr((\bI_{dK} - \widetilde{\bM})^{-1} \widetilde{\bSigma}(\bI_{dK} - \widetilde{\bM})^{-\top})$. 
This form appeared frequently in previous work for the minimax lower bound in the TD learning family. Our result derived in Theorem~\ref{thm:minimax_lower_bound} can also be written in this form as follows.
\begin{equation*}
    \mathfrak{M}(\mathcal{I}) = \tr((\bI_{dK} - \widetilde{\bM})^{-1} \widetilde{\bSigma}(\bI_{dK} - \widetilde{\bM})^{-\top}),
\end{equation*}
where $\widetilde{\bM} = \bJ^{-1/2}(\bar{\bA}-\bJ)\bJ^{-1/2}$, $\widetilde{\bSigma} = \bJ^{-1/2}\bSigma_{\be}\bJ^{-1/2}$. 
To ensure notational consistency with the conference version \citep{peng2025finitesampleanalysisdistributional} and simplify the presentation of subsequent theoretical results, we adhere to the notations we introduced in this work.
\end{remark}

\subsection{Upper Bound}
In this subsection, we provide an instance-dependent upper bound for the baseline algorithm {\LCTD}, whose dominant term matches the minimax lower bound in the previous subsection. 
The following theorem relates the non-asymptotic convergence analysis of the statistical estimator $\bar{\btheta}_T$ to the trace term $\tr(\widetilde{\bSigma}_{\be})$ we define in Equation~\eqref{eq:trace_def}.
\begin{theorem}\label{thm:leading_term}
For any $K\geq (1-\gamma)^{-1}$ and $ \alpha\leq \alpha_{2,\infty}:=(1-\sqrt\gamma)/256$, it holds that
\begin{align*}
    \EB^{1/2}[(\gL(\bar\btheta_T))^2] 
    &\lesssim \frac{1}{\sqrt{T}} \sqrt{\frac{\tr(\widetilde{\bSigma}_{\be})}{K(1-\gamma)}}
    +\frac{ \norm{\btheta^{\star}}_{V_1}+1}{\sqrt{T}(1-\gamma)\lambda_{\min}}\prn{\frac{1}{\sqrt{T\alpha}}+ \sqrt\alpha} \\
    &+\frac{(1-\frac{1}{2}\alpha (1-\sqrt\gamma)\lambda_{\min} )^{T/2}}{T\alpha\sqrt{(1-\gamma)\lambda_{\min}}}\prn{1+\sqrt{\frac{\alpha}{(1-\gamma)\lambda_{\min}}}}\norm{\btheta_0-\btheta^{\star}}_{V_2}.
\end{align*}
\end{theorem}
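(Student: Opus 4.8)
The plan is to treat this as a finite-sample analysis of the linear stochastic approximation (LSA) recursion in Equation~\eqref{eq:linear_CTD_vec} with constant step size and Polyak--Ruppert tail averaging, extracting the exact asymptotic covariance $\widetilde{\bSigma}_{\be}$ as the leading constant. Writing $\bm{\Delta}_t := \btheta_t-\btheta^\star$ and $\bar{\bm{\Delta}}_T := \bar\btheta_T-\btheta^\star$, and using $\be_t=\bA_t\btheta^\star-\bb_t$ together with $\bar\bA\btheta^\star=\bar\bb$, the recursion becomes $\bm{\Delta}_t=(\bI_{dK}-\alpha\bA_t)\bm{\Delta}_{t-1}-\alpha\be_t$. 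Rearranging to $\bA_t\bm{\Delta}_{t-1}=\alpha^{-1}(\bm{\Delta}_{t-1}-\bm{\Delta}_t)-\be_t$, averaging over the tail window $t\in\{T/2,\dots,T\}$ (with $T'=T/2+1$ terms), and splitting $\bA_t=\bar\bA+(\bA_t-\bar\bA)$ yields, up to an $O(1/T')$ boundary correction from the index shift,
\[
\bar{\bm{\Delta}}_T = \bar\bA^{-1}\brk{ -\frac{1}{T'}\sum_{t=T/2}^{T}\be_t + \frac{1}{\alpha T'}\prn{\bm{\Delta}_{T/2-1}-\bm{\Delta}_T} - \frac{1}{T'}\sum_{t=T/2}^{T}\prn{\bA_t-\bar\bA}\bm{\Delta}_{t-1} }.
\]
Via the isometry $\gL(\btheta)=\iota_K\norm{\btheta-\btheta^\star}^2_{\bJ}$ recorded above Theorem~\ref{thm:leading_term}, the claimed bound reduces to a root-mean-square control of $\sqrt{\iota_K}\,\norm{\bar{\bm{\Delta}}_T}_{\bJ}$; concretely I will bound $\iota_K\,\EB[\norm{\bar{\bm{\Delta}}_T}^2_{\bJ}]$ using $\iota_K=1/(K(1-\gamma))$ and take square roots, controlling each of the three pieces in the $\bJ$-weighted norm.

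For the fluctuation piece (the first sum), the $\be_t$ are i.i.d.\ and mean-zero in the generative setting, so $\EB\big[\norm{\bar\bA^{-1}(T')^{-1}\sum\be_t}^2_{\bJ}\big]=(T')^{-1}\tr(\bJ\bar\bA^{-1}\bSigma_{\be}\bar\bA^{-\top})=(T')^{-1}\tr(\widetilde{\bSigma}_{\be})$ by the definition in Equation~\eqref{eq:trace_def}. Multiplying by $\iota_K$ and taking square roots produces exactly the leading term $T^{-1/2}\sqrt{\tr(\widetilde{\bSigma}_{\be})/(K(1-\gamma))}$, which coincides with $\sqrt{\mathfrak{M}(\mathcal{I})/T}$ from Theorem~\ref{thm:minimax_lower_bound}. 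The remaining two pieces must therefore be shown to be of strictly lower order, and both hinge on a uniform second-moment bound on the iterates: an exponential-stability/Lyapunov argument for the linear system (the same one underlying Proposition~\ref{thm:l2_error_linear_ctd}, established in \citet{peng2025finitesampleanalysisdistributional}) gives a one-step contraction $\EB[\norm{\bm{\Delta}_t}^2_{\bJ}]\leq(1-c\,\alpha(1-\sqrt\gamma)\lambda_{\min})\EB[\norm{\bm{\Delta}_{t-1}}^2_{\bJ}]+\alpha^2\sigma^2$, hence $\EB[\norm{\bm{\Delta}_t}^2_{\bJ}]\lesssim(1-c\,\alpha(1-\sqrt\gamma)\lambda_{\min})^t\norm{\btheta_0-\btheta^\star}^2+\alpha\sigma^2/((1-\gamma)\lambda_{\min})$, where the noise scale $\sigma^2$ is controlled by $(\norm{\btheta^\star}_{V_1}+1)^2$ after the $V_1$-rescaling (the ``$+1$'' absorbing the bounded $\bb_t$).

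For the boundary piece, I will split $\bm{\Delta}_{T/2-1}$ and $\bm{\Delta}_T$ into their transient (initial-condition) and stationary (fluctuation) contributions. The initial-condition part decays like $(1-\tfrac12\alpha(1-\sqrt\gamma)\lambda_{\min})^{T/2}$ and, after multiplying by the prefactor $(\alpha T')^{-1}$ times the $\bJ$-weighted operator norm of $\bar\bA^{-1}$ (bounded by $\lesssim((1-\gamma)\lambda_{\min})^{-1}$, with the extra $(1+\sqrt{\alpha/((1-\gamma)\lambda_{\min})})$ from the norm conversion), yields precisely the exponentially decaying third term with the $\norm{\btheta_0-\btheta^\star}_{V_2}$ normalization; the stationary part, controlled by the steady-state level above, contributes the $1/(T\sqrt\alpha)$ piece of the middle term. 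For the correlation piece (the second sum), I will exploit the martingale structure of $(\bA_t-\bar\bA)\bm{\Delta}_{t-1}$ by conditioning on the filtration up to $t-1$, combine with the uniform iterate bound, and obtain a variance of order $(T')^{-1}\alpha\sigma^2/((1-\gamma)\lambda_{\min})$, which supplies the $\sqrt\alpha$ factor in the middle term. Summing the three contributions and rescaling by $\sqrt{\iota_K}$ gives the stated bound.

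The main obstacle is concentrated in the stability analysis of $\bar\bA$ and is twofold. First, one must establish the one-step contraction with modulus proportional to $\alpha(1-\sqrt\gamma)\lambda_{\min}$ in the $\bJ$-weighted inner product; this requires controlling the spectrum of $\bar\bA$, whose off-diagonal block $\EB[(\bC\tilde{\bG}(r)\bC^{-1})\otimes\bphi(s)\bphi(s^\prime)^\top]$ carries the intricate Kronecker structure of the linear-categorical parameterization, and leveraging the Cram\'er-distance non-expansiveness of the categorical Bellman operator to produce the $(1-\sqrt\gamma)$ spectral gap (this is exactly what forces the constraint $\alpha\leq(1-\sqrt\gamma)/256$). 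Second, and crucially for the central claim of the paper, every constant --- the contraction rate, the operator-norm bound on $\bar\bA^{-1}$, and the noise scale $\sigma^2$ --- must be shown to be independent of $K$ for $K\geq(1-\gamma)^{-1}$, which relies on the uniform structural estimates for $\bC$, $\tilde{\bG}(r)$, and $\bC\tilde{\bG}(r)\bC^{-1}$ from \citet{peng2025finitesampleanalysisdistributional} and on the $V_1,V_2$ rescalings cleanly absorbing the factor $\iota_K=1/(K(1-\gamma))$. Verifying that the boundary and correlation pieces are genuinely lower-order than the fluctuation piece --- rather than merely the same order --- is what pins down the sharp trace constant in the dominant $T^{-1/2}$ term and its match with the minimax lower bound.
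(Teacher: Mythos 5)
Your proposal is correct in substance, but it takes a different (more self-contained) route than the paper. The paper's proof of Theorem~\ref{thm:leading_term} does not carry out the tail-average decomposition by hand: it verifies three matrix conditions --- $\bJ^{1/2}\bar{\bA}^{-\top}\bJ\bar{\bA}^{-1}\bJ^{1/2}\preccurlyeq(1-\sqrt\gamma)^{-2}\bI$ (from Equation~28 of \citet{peng2025finitesampleanalysisdistributional}), $\EB[\bA^{\top}\bJ^{-1}\bA]\preccurlyeq\omega^2\bJ$ with $\omega^2=2(1+\gamma)/\lambda_{\min}$ (via Lemma~\ref{lem:A^TA}), and the trace comparison $\tr(\bSigma_{\be})\leq(1+\sqrt\gamma)^2\tr(\widetilde{\bSigma}_{\be})$ --- and then invokes Theorem~14 of \citet{samsonov2024improved} as a black box, finishing with Lemma~\ref{lem:tr_cov_sigma_e} to convert $\tr(\bSigma_{\be})\lesssim\norm{\btheta^{\star}}_{\bJ}^2+K(1-\gamma)^2$ into the $\norm{\btheta^{\star}}_{V_1}^2+1$ form. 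What you propose is essentially a re-derivation of what that cited theorem delivers: your three-way split of $\bar{\bm{\Delta}}_T$ into the i.i.d.\ fluctuation sum (giving exactly $\tr(\widetilde{\bSigma}_{\be})/T'$ in the $\bJ$-norm), the boundary term, and the martingale correlation term is the standard Polyak--Ruppert LSA analysis underlying that result, and your order-of-magnitude accounting for each piece (the $1/(T\sqrt\alpha)$ contribution from the stationary part of the boundary term, the $\sqrt\alpha$ contribution from the correlation term, the exponentially decaying transient) reproduces the paper's final bound correctly. The trade-off is clear: your route is longer but avoids the trace-comparison step entirely (you obtain $\tr(\widetilde{\bSigma}_{\be})$ in the leading term and $\tr(\bSigma_{\be})$ in the remainders directly), whereas the paper's route localizes all the problem-specific work in the three verification steps and in the exponential-stability analysis inherited from the conference version --- which, as you correctly identify, is where the constraint $\alpha\leq(1-\sqrt\gamma)/256$ and the $K$-independence of all constants actually live. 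To turn your sketch into a complete proof you would still need to either prove the one-step $\bJ$-weighted contraction with modulus $\alpha(1-\sqrt\gamma)\lambda_{\min}$ from scratch or cite it from the conference version, exactly as the paper does implicitly through the black-box theorem.
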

Clearly, this result can be translated into the following upper bound for the sample complexity of {\LCTD}. 
\begin{corollary}\label{coro:ind_l2_sample_complexity_linear_ctd}
Under the same conditions as in Theorem~\ref{thm:leading_term}, to achieve $ \EB^{1/2}[(\gL(\bar\btheta_T))^2]\leq \varepsilon$, 
\begin{align*}
\widetilde{\gO}\prn{\frac{1}{\varepsilon^2}\prn{\frac{\tr(\widetilde{\bSigma}_{\be})}{K(1-\gamma)}+\frac{\alpha\prn{\norm{\btheta^{\star}}^2_{V_1}+1}}{(1-\gamma)^2\lambda_{\min}^2}}+\frac{\norm{\btheta^{\star}}_{V_1}+1}{\varepsilon\sqrt{\alpha }(1-\gamma)\lambda_{\min}}}
\end{align*}
samples are sufficient.
\end{corollary}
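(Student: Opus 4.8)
The plan is to invert the non-asymptotic bound of Theorem~\ref{thm:leading_term} term-by-term, a routine translation of a convergence rate into a sample-complexity guarantee. I would first split the target accuracy: since the right-hand side of Theorem~\ref{thm:leading_term} is a sum of finitely many nonnegative summands, it suffices to force each one to be $\lesssim \varepsilon$ and then add the resulting lower bounds on $T$, the sum being equal to the stated expression up to the universal constant hidden in $\lesssim$.

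For the first (statistical) summand, $\frac{1}{\sqrt T}\sqrt{\tr(\widetilde{\bSigma}_{\be})/(K(1-\gamma))}\lesssim\varepsilon$ rearranges at once to $T\gtrsim \varepsilon^{-2}\tr(\widetilde{\bSigma}_{\be})/(K(1-\gamma))$, the leading term inside the $\varepsilon^{-2}$ bracket. The second summand factors as $\frac{\norm{\btheta^{\star}}_{V_1}+1}{\sqrt T(1-\gamma)\lambda_{\min}}\prn{\frac{1}{\sqrt{T\alpha}}+\sqrt\alpha}$, and I would treat its two pieces separately. The $\sqrt\alpha$ piece, set $\lesssim\varepsilon$, gives $\sqrt T\gtrsim \frac{(\norm{\btheta^{\star}}_{V_1}+1)\sqrt\alpha}{\varepsilon(1-\gamma)\lambda_{\min}}$, i.e. $T\gtrsim \frac{\alpha(\norm{\btheta^{\star}}_{V_1}+1)^2}{\varepsilon^2(1-\gamma)^2\lambda_{\min}^2}$; invoking $(\norm{\btheta^{\star}}_{V_1}+1)^2\simeq \norm{\btheta^{\star}}^2_{V_1}+1$ this reproduces the second term in the bracket. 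The $\frac{1}{\sqrt{T\alpha}}$ piece equals $\frac{\norm{\btheta^{\star}}_{V_1}+1}{T\sqrt\alpha(1-\gamma)\lambda_{\min}}$, the two $\sqrt T$ factors combining to $T^{-1}$, and set $\lesssim\varepsilon$ it yields $T\gtrsim \frac{\norm{\btheta^{\star}}_{V_1}+1}{\varepsilon\sqrt\alpha(1-\gamma)\lambda_{\min}}$, exactly the final $\varepsilon^{-1}$ summand.

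The main work is the third (burn-in) summand, which carries the geometric factor $(1-\tfrac12\alpha(1-\sqrt\gamma)\lambda_{\min})^{T/2}$ times a prefactor polynomial in $T^{-1}$ and in the instance quantities. Here I would exploit that geometric decay beats any polynomial: using $1-x\le e^{-x}$, the factor is at most $\exp(-\tfrac14\alpha(1-\sqrt\gamma)\lambda_{\min}T)$, so once $T\gtrsim \frac{1}{\alpha(1-\sqrt\gamma)\lambda_{\min}}\log(\cdots/\varepsilon)$ the whole summand drops below $\varepsilon$. Since $1-\sqrt\gamma\simeq 1-\gamma$, this threshold is $\wtilde{\gO}(\alpha^{-1}(1-\gamma)^{-1}\lambda_{\min}^{-1})$; as $\wtilde{\gO}$ absorbs precisely the polylogarithmic factors of $(1-\gamma)^{-1},\lambda_{\min}^{-1},\alpha^{-1},\varepsilon^{-1}$ and $\norm{\btheta^{\star}}_{V_1}$, and this threshold is (up to logs) no larger than the $\varepsilon^{-1}$ summand already present, the burn-in contribution is swallowed into the logarithmic factors. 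Summing the three surviving lower bounds on $T$ then gives the claimed sample complexity. The only genuinely delicate point is justifying that the exponential term is absorbed rather than contributing a new summand; this rests entirely on the fact that $\log(1/\varepsilon)$ and $\log(\|\btheta_0-\btheta^\star\|_{V_2})$-type factors are hidden by $\wtilde{\gO}$, exactly as in the analogous sample-complexity corollaries of \citet{peng2025finitesampleanalysisdistributional} and \citet{samsonov2024improved}.
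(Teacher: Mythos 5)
Your proposal is correct and is exactly the term-by-term inversion that the paper itself treats as immediate (the paper offers no explicit proof, saying only that Theorem~\ref{thm:leading_term} "can be translated" into the corollary): the three non-exponential summands yield precisely the three displayed terms, and the geometric burn-in summand only forces $T\gtrsim \alpha^{-1}(1-\gamma)^{-1}\lambda_{\min}^{-1}\log(\cdot)$, which is absorbed under the same convention the paper uses in Corollary~\ref{coro:l2_sample_complexity_linear_ctd}. The one point you rightly flag — that the burn-in threshold is dominated by the $\varepsilon^{-1}$ term only up to the implicit assumption $\varepsilon\lesssim\sqrt{\alpha}(\norm{\btheta^{\star}}_{V_1}+1)$ — is a convention shared with the paper and \citet{samsonov2024improved}, not a gap relative to it.
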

In this instance-dependent estimation-error analysis for {\LCTD}, we follow \citet{samsonov2024improved} in the choice of the step size. 
To recover from Theorem~\ref{thm:leading_term} the corresponding result in Proposition~\ref{coro:l2_sample_complexity_linear_ctd}, it suffices that the following lemma is valid. 
\begin{lemma}\label{lem:trace_critical}
For $K\geq (1-\gamma)^{-1}$, $\tr(\widetilde{\bSigma}_{\be}) \lesssim  (\|\btheta^{\star}\|_{V_1}^2+1)K/\lambda_{\min}$.
\end{lemma}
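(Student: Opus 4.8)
The plan is to read $\tr(\widetilde{\bSigma}_{\be})$ as a second moment and split it into a \emph{stability} factor and a \emph{noise} factor. Since $\bSigma_{\be}=\EB[\be\be^\top]$ with $\be=\bA\btheta^{\star}-\bb$ i.i.d.\ in the generative model, the definition \eqref{eq:trace_def} gives $\tr(\widetilde{\bSigma}_{\be})=\EB[\norm{\bJ^{1/2}\bar{\bA}^{-1}\be}^2]$. Writing $\widehat{\bA}:=\bJ^{-1/2}\bar{\bA}\bJ^{-1/2}$ and using $\bJ^{1/2}\bar{\bA}^{-1}=\widehat{\bA}^{-1}\bJ^{-1/2}$, this is at most $\norm{\widehat{\bA}^{-1}}^2\,\EB[\norm{\bJ^{-1/2}\be}^2]$. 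First I would bound the stability factor $\norm{\widehat{\bA}^{-1}}$. The key algebraic fact is that $\bm{S}:=(\bC^\top\bC)^{1/2}\bC^{-1}$ is orthogonal, since $\bm{S}^\top\bm{S}=\bC^{-\top}(\bC^\top\bC)\bC^{-1}=\bI_K$; moreover conjugating $\widehat{\bA}$ by $\bm{S}\otimes\bI_d$ sends each block $\bC\tilde{\bG}(r)\bC^{-1}$ to $(\bC^\top\bC)^{1/2}\tilde{\bG}(r)(\bC^\top\bC)^{-1/2}$ and hence turns $\widehat{\bA}$ into $\bI_{dK}-\bUp$. Because orthogonal similarity preserves the spectral norm, $\norm{\widehat{\bA}^{-1}}=\norm{(\bI_{dK}-\bUp)^{-1}}$, and invoking the $\sqrt{\gamma}$-contraction of the linear-categorical operator from \citet{peng2025finitesampleanalysisdistributional} (which yields $\norm{\bUp}\le\sqrt{\gamma}$) together with a Neumann series gives $\norm{\widehat{\bA}^{-1}}\le(1-\sqrt{\gamma})^{-1}\lesssim(1-\gamma)^{-1}$. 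It then remains to show $\EB[\norm{\bJ^{-1/2}\be}^2]\lesssim K(1-\gamma)^2(\norm{\btheta^{\star}}_{V_1}^2+1)/\lambda_{\min}$.

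For the noise factor I would split $\EB[\norm{\bJ^{-1/2}\be}^2]\le 2\EB[\norm{\bJ^{-1/2}\bA\btheta^{\star}}^2]+2\EB[\norm{\bJ^{-1/2}\bb}^2]$. For the first term, the identity $(\bM_1\otimes\bM_2)\vect(\bX)=\vect(\bM_2\bX\bM_1^\top)$ gives $\bJ^{-1/2}\bA\btheta^{\star}=\vect(\bSigma_{\bphi}^{-1/2}\bphi(s)\bv^\top)$ with $\bv:=\bTheta^{\star\top}\bphi(s)-(\bC\tilde{\bG}(r)\bC^{-1})\bTheta^{\star\top}\bphi(s')$, so $\norm{\bJ^{-1/2}\bA\btheta^{\star}}^2=\norm{\bSigma_{\bphi}^{-1/2}\bphi(s)}^2\norm{\bv}^2\le\lambda_{\min}^{-1}\norm{\bv}^2$ using $\norm{\bphi(s)}\le 1$. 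Since $\bm{S}$ is orthogonal we again get $\norm{\bC\tilde{\bG}(r)\bC^{-1}}=\norm{(\bC^\top\bC)^{1/2}\tilde{\bG}(r)(\bC^\top\bC)^{-1/2}}\le\sqrt{\gamma}$, and because both $s$ and $s'$ are marginally $\mu_{\pi}$-distributed by stationarity, $\EB[\norm{\bTheta^{\star\top}\bphi(s)}^2]=\EB[\norm{\bTheta^{\star\top}\bphi(s')}^2]=\tr(\bTheta^{\star\top}\bSigma_{\bphi}\bTheta^{\star})=\norm{\btheta^{\star}}_{\bJ}^2=K(1-\gamma)^2\norm{\btheta^{\star}}_{V_1}^2$. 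Hence $\EB[\norm{\bv}^2]\lesssim\norm{\btheta^{\star}}_{\bJ}^2$ and the first term is $\lesssim K(1-\gamma)^2\norm{\btheta^{\star}}_{V_1}^2/\lambda_{\min}$, which supplies the $\norm{\btheta^{\star}}_{V_1}^2$ part of the claim after multiplying by $\norm{\widehat{\bA}^{-1}}^2$.

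The reward term is where the real work lies. The same bookkeeping gives $\bJ^{-1/2}\bb=\bu\otimes(\bSigma_{\bphi}^{-1/2}\bphi(s))$ with $\bu:=(K+1)^{-1}\bC(\sum_{j=0}^K\bg_j(r)-\bm{1}_K)$, so $\norm{\bJ^{-1/2}\bb}^2\le\lambda_{\min}^{-1}\norm{\bu}^2$ and everything reduces to the deterministic estimate $\norm{\bu}^2\lesssim K(1-\gamma)^2$, which contributes the $+1$ in the bound. With $x_k=k\iota_K$, the $l$-th entry of $\sum_{j=0}^K\bg_j(r)$ is $\sum_{j=0}^K(1-\abs{r/\iota_K+\gamma j-l})_+$, a sum of tent weights equal to $\gamma^{-1}$ up to boundary corrections; thus $w_l:=\sum_j g_{j,l}(r)-1\approx(1-\gamma)/\gamma$ in the bulk, the partial sums $(\bC\bw)_i=\sum_{l\le i}w_l$ are $O(K(1-\gamma))$, and $\norm{\bu}^2=(K+1)^{-2}\norm{\bC\bw}^2\lesssim K(1-\gamma)^2$. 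Assembling the two pieces gives $\EB[\norm{\bJ^{-1/2}\be}^2]\lesssim K(1-\gamma)^2(\norm{\btheta^{\star}}_{V_1}^2+1)/\lambda_{\min}$, and multiplying by $\norm{\widehat{\bA}^{-1}}^2\lesssim(1-\gamma)^{-2}$ yields $\tr(\widetilde{\bSigma}_{\be})\lesssim K(\norm{\btheta^{\star}}_{V_1}^2+1)/\lambda_{\min}$.

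The hard part will be exactly this $(1-\gamma)^2$ gain in $\norm{\bu}^2$: a crude bound only yields $\norm{\bu}^2=O(K)$, and the improvement genuinely relies on the near-cancellation $\sum_j g_{j,l}(r)\approx\gamma^{-1}$ combined with a careful treatment of the $O((1-\gamma)K)$ indices near the support boundary, where the tent weights fail to form a partition of unity; this is precisely where the hypothesis $K\ge(1-\gamma)^{-1}$ is needed, so that the bulk dominates the boundary contributions. A secondary point requiring care is extracting the clean per-$r$ operator bounds $\norm{\bUp}\le\sqrt{\gamma}$ and $\norm{\bC\tilde{\bG}(r)\bC^{-1}}\le\sqrt{\gamma}$ from the conference version's contraction properties, for which the orthogonality of $\bm{S}$ is the essential simplification, since it lets the Euclidean operator norm of $\bC\tilde{\bG}(r)\bC^{-1}$ be replaced by that of its symmetric conjugate without any condition-number loss.
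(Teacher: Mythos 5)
Your proposal is correct and takes essentially the same route as the paper: factor $\tr(\widetilde{\bSigma}_{\be})\leq\norm{\bar{\bA}^{-\top}\bJ\bar{\bA}^{-1}}\,\tr(\bSigma_{\be})$, bound the stability factor by $(1-\sqrt{\gamma})^{-2}\lambda_{\min}^{-1}$, and bound the noise second moment by $\norm{\btheta^{\star}}_{\bJ}^2+K(1-\gamma)^2=K(1-\gamma)^2(\norm{\btheta^{\star}}_{V_1}^2+1)$. The only difference is presentational: the paper imports both ingredients as cited facts (the exponential-stability bound on $\bJ^{-1/2}\bar{\bA}\bJ^{-1/2}$ and Lemma~\ref{lem:tr_cov_sigma_e} from the conference version), whereas you re-derive them — the former via the orthogonality of $(\bC^{\top}\bC)^{1/2}\bC^{-1}$ and the latter, including the tent-sum cancellation giving $\norm{\bb}\lesssim\sqrt{K}(1-\gamma)$, from scratch.
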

The proofs of Lemma~\ref{lem:trace_critical} and Theorem~\ref{thm:leading_term} can be found in Appendix~\ref{appendix:proof_instance_dependent}. 
Lemma~\ref{lem:trace_critical} builds a connection between the critical trace term $\tr(\widetilde{\bSigma}_{\be})$ and the parameters that determine a problem instance. 
This connection enables the derivation of a K-independent convergence rate result with variance-reduced techniques, as presented in Section~\ref{Section:Algorithms}.

As we note at the beginning of this subsection, the dominant term in Corollary~\ref{coro:ind_l2_sample_complexity_linear_ctd} is consistent with the minimax lower bound established in Theorem~\ref{thm:minimax_lower_bound}. 
Nevertheless, this result does not break the sample size barrier, because the leading term truly dominates only when the sample size $T$ is large enough, analogous to the analysis of Theorem~\ref{thm:l2_error_linear_ctd}. 
We will resolve this in Section~\ref{Section:Algorithms} by introducing new algorithms, thereby closing the gap between the upper and lower bounds.

\section{Variance Reduced Linear-Categorical TD Learning Algorithms}\label{Section:Algorithms}
\noindent We now turn to the introduction of variance-reduced algorithms for learning $\btheta^{\star}$ in the generative model setting and the Markovian setting. 
For convenience, we define the deterministic temporal difference operator and stochastic temporal difference operator in Equation~\eqref{eq:linear_CTD_vec} as follows:
\begin{equation*}
    \bar{\bh}(\btheta) = \bar{\bA}\btheta-\bar{\bb},\quad \bh_{t}(\btheta) = \bA_{t}\btheta-\bb_{t},
\end{equation*} 
where $\bA_t$ and $\bb_{t}$ are given by Equation~\eqref{eq:linear_CTD_vec}. 
With these notations, the baseline algorithm {\LCTD} is given by $\btheta_{t+1} \leftarrow \btheta_t-\alpha \bh_t(\btheta_t)$. 
A preliminary variance-reduction method replaces $\bh_t$ with the empirical estimate of $\bar{\bh}$ at each update. 
More advanced techniques tighten estimation errors by controlling the bias of $\bh_t$ relative to $\bar{\bh}$ and incorporating further algorithm design. 

\subsection{Variance-reduced Fast Algorithm in the Generative Model Setting}
We propose variance-reduced fast linear-categorical TD learning algorithm ({\VCTD}) which adapts the variance-reduction techniques introduced by \cite{doi:10.1137/20M1381678} and \citet{li2023accelerated}. 
{\VCTD} is formally outlined in Algorithm~\ref{alg:VrFLCTD}. 

Central to this approach is a re-centering update strategy. 
At the beginning of the each $n$-th outer loop, {\VCTD} uses $l_{n}$ samples to compute an empirical stochastic operator $\widehat{\bh}$ as an estimation for $\bar{\bh}$ at $\widetilde{\btheta}$, where $\widetilde{\btheta}$ is the output of the previous epoch. 
The vector $\widehat{\bh}(\widetilde{\btheta})$ re-center the target of the updates in the inner loop. 
Following \citet{li2023accelerated}, the inner loop iteration of {\VCTD} is accelerated using the operator extrapolation technique introduced in \citet{doi:10.1137/20M1381678}. 
In each inner loop $t$, a mini-batch of size $m$ is used to calculate the average stochastic operator $\bar{\bh}_t$. 
Then a customized operator for $\btheta_t$ is calculated as $\bF_t(\btheta_t) = \bar{\bh}_{t}(\btheta_t)-\bar{\bh}_{t}(\widetilde{\btheta})+\widehat{\bh}(\widetilde{\btheta})$ for each minibatch. 
According to \citet{li2023accelerated}, incorporation of an operator extrapolation step for the linear-categorical temporal difference term $\bF_{t}(\btheta_t)+ \lambda(\bF_{t}(\btheta_t)-\bF_{t-1}(\btheta_{t-1}))$ in Equation~\eqref{eqn:extrapolat_update} is essential to achieve a tighter error bound on the deterministic error term in the convergence rate analysis. 
This property is preserved under the linear-categorical parameterization framework, and we keep the term “fast” in the algorithm’s name as a tribute to this technique. 
The convergence rate of {\VCTD} in the generative model setting is rigorously presented in the following theorem.

  \begin{algorithm}[!t]
    \caption{Variance-reduced fast linear-categorical TD learning}
    \begin{algorithmic}[1]
      \State Initialize $\btheta_{0} = \widehat{\btheta}_0$, $\alpha>0$, $\lambda = 1$, integers $\{l_{n}\}_{n=1}^{N}$.
      \For{$n = 1$ to $N$}
        \State Set $\btheta_0 =\btheta_1=\widetilde{\btheta}=\widehat{\btheta}_{n-1}$, collect $l_{n}$ i.i.d. samples and compute $\widehat{\bh}(\widetilde{\btheta})=1/l_{n} \sum_{i=1}^{l_{n}} \bh_{i}(\widetilde{\btheta})$.
        \For{$t = 1$ to $T$}
          \State Collect $m$ i.i.d. samples, compute $\bar{\bh}_{t} = 1/m\sum_{j=1}^m h_j$ with them, let $\bF_t(\btheta_t) = \bar{\bh}_{t}(\btheta_t)-\bar{\bh}_{t}(\widetilde{\btheta})+\widehat{\bh}(\widetilde{\btheta})$, set $\bF_0(\btheta_0) = \bF_1(\btheta_1)$ and update with 
          \begin{equation}\label{eqn:extrapolat_update}
        \btheta_{t+1}=\btheta_t-\alpha(\bF_{t}(\btheta_t)+ \lambda(\bF_{t}(\btheta_t)-\bF_{t-1}(\btheta_{t-1}))).             
          \end{equation}
        \EndFor
        \State Output of the outer loop:
    \begin{equation*} 
\widehat{\btheta}_n=\sum_{t=2}^{T+1} \btheta_t/T.
 \end{equation*}
      \EndFor
    \end{algorithmic}
    \label{alg:VrFLCTD}
  \end{algorithm}
\begin{theorem}\label{thm:VrFLCTD_convergence}
Consider the generative model with the initial state drawn from the distribution $\pi$. Let $K\geq (1-\gamma)^{-1}$. 
Fix the total number of outer loops $N$ and a positive integer $M$. 
Suppose that for each epoch $n \in[N]$, the parameters $\alpha,\lambda,T , m$ and $l_{n}$ satisfy
\begin{align*}
&\alpha \leq \frac{1}{4(1+\sqrt{\gamma})}, \quad \lambda=1, \quad T \geq \frac{80}{\lambda_{\min}(1-\sqrt{\gamma}) \alpha}, \quad m \geq \max\left\{1,\frac{480\alpha\varsigma^2}{1-\sqrt{\gamma}}\right\}\\ 
&\quad
\text { and } \quad l_{n} \geq\left\{\frac{18 \varsigma^2}{\lambda_{\min}(1-\sqrt{\gamma})^2},\left(\frac{3}{4}\right)^{N-n} M\right\}.
\end{align*}
Then for each $\delta>0$, we have
\begin{equation*}
\EB\norm{\widehat{\btheta}_N-\btheta^{\star}}_{\bJ}^2 \leq \frac{1}{2^N}\|\btheta_{0}-\btheta^{\star}\|_{\bJ}^2+ \frac{15}{M}\tr(\widetilde{\bSigma}_{\be}).
\end{equation*}
To achieve 
$\EB^{1/2}[(\gL(\widehat\btheta_N))^2] = \EB^{1/2}[\iota_K\|\widehat{\btheta}_N-\btheta^{\star}\|_{\bJ}^2]\leq \varepsilon$, $\sum_{n=1}^N(l_{n}+Tm)$ samples are needed, which is bounded by
\begin{equation}\label{eqn:sample_complexity_VrFLCTD_tr_version}
    \mathcal{O}\prn{\frac{\iota_K\tr(\widetilde{\bSigma}_{\be})}{\varepsilon^2}+\frac{1}{\lambda_{\min}(1-\gamma)}\log\prn{\iota_K\frac{\norm{\btheta_0-\btheta^{\star}}_{\bJ}^2}{\varepsilon^2}}+ \frac{\varsigma^2}{\lambda_{\min}(1-\gamma)^2}\log\prn{\iota_K\frac{\norm{\btheta_0-\btheta^{\star}}_{\bJ}^2}{\varepsilon^2}}}.
\end{equation}
Using Lemma~\ref{lem:trace_critical}, this sample complexity can be further upper bounded as
\begin{equation}\label{eqn:sample_complexity_VrFLCTD}
    \mathcal{O}\prn{\frac{\norm{\btheta^{\star}}^2_{V_1}+1}{\varepsilon^2(1-\gamma)\lambda_{\min}}+
    \frac{1}{\lambda_{\min}(1-\gamma)}\log\prn{\iota_K\frac{\norm{\btheta_0-\btheta^{\star}}_{\bJ}^2}{\varepsilon^2}}+ \frac{\varsigma^2}{\lambda_{\min}(1-\gamma)^2}\log\prn{\iota_K\frac{\norm{\btheta_0-\btheta^{\star}}_{\bJ}^2}{\varepsilon^2}}}.
\end{equation}
\end{theorem}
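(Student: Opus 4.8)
The plan is to decouple the analysis into a per-epoch contraction and an inter-epoch telescoping, following the variance-reduction template of \citet{li2023accelerated} adapted to the linear-categorical geometry. Writing $E_n := \EB\norm{\widehat{\btheta}_n - \btheta^{\star}}_{\bJ}^2$ for the error of the $n$-th outer-loop output, the first goal is a single-epoch recursion of the form $E_n \le \tfrac{1}{2} E_{n-1} + \tfrac{5}{l_n}\tr(\widetilde{\bSigma}_{\be})$, valid under the stated choices of $\alpha,\lambda,T,m$. I would establish this by studying one fixed outer loop in isolation, treating $\widetilde{\btheta} = \widehat{\btheta}_{n-1}$ and the re-centering vector $\widehat{\bh}(\widetilde{\btheta})$ as given, and analyzing the inner iteration \eqref{eqn:extrapolat_update} as a perturbed linear stochastic approximation whose target operator is $\bar{\bh}$, with fixed point $\btheta^{\star}$ (recall $\bar{\bh}(\btheta^{\star}) = \bar{\bA}\btheta^{\star} - \bar{\bb} = 0$).

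The technical heart is the inner-loop analysis. First I would record the re-centering identity: since each $\bh_i$ is affine, $\bF_t(\btheta_t) = \bar{\bA}_t^{\mathrm{mb}}(\btheta_t - \widetilde{\btheta}) + \widehat{\bh}(\widetilde{\btheta})$ with $\bar{\bA}_t^{\mathrm{mb}}$ the mini-batch average of the $\bA_j$, so that $\EB[\bF_t(\btheta_t)\mid\gF_{t-1}] = \bar{\bh}(\btheta_t)$; that is, $\bF_t$ is an unbiased estimator of $\bar{\bh}$. Next I would decompose the fluctuation $\bF_t(\btheta_t) - \bar{\bh}(\btheta_t)$ into a mini-batch term $(\bar{\bA}_t^{\mathrm{mb}} - \bar{\bA})(\btheta_t - \widetilde{\btheta})$, whose second moment is $O(\norm{\btheta_t - \widetilde{\btheta}}^2/m)$, and the re-centering term $\widehat{\bh}(\widetilde{\btheta}) - \bar{\bh}(\widetilde{\btheta})$; writing $\bh_i(\widetilde{\btheta}) - \bar{\bh}(\widetilde{\btheta}) = (\bA_i - \bar{\bA})(\widetilde{\btheta} - \btheta^{\star}) + \be_i$ exposes the fixed-point noise $\be_i$ with $\EB[\be\be^\top] = \bSigma_{\be}$, so the variance of the re-centering term is $\tfrac{1}{l_n}\tr(\bSigma_{\be})$ up to contributions controlled by $\norm{\widetilde{\btheta} - \btheta^{\star}}^2$. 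I would then build the Lyapunov recursion for the inner loop from the strong monotonicity and Lipschitz continuity of $\bar{\bh}$ in the $\bJ$-geometry, with monotonicity modulus $\simeq (1-\sqrt{\gamma})\lambda_{\min}$ — the same exponential-stability property underlying Proposition~\ref{thm:l2_error_linear_ctd}. The operator extrapolation term $\lambda(\bF_t - \bF_{t-1})$ is exactly what lets the deterministic error decay at the accelerated rate without inflating the variance by the condition number; combined with tail averaging $\widehat{\btheta}_n = T^{-1}\sum_{t=2}^{T+1}\btheta_t$, it produces the correct asymptotic covariance, so the stochastic floor is governed by $\bJ^{1/2}\bar{\bA}^{-1}\bSigma_{\be}\bar{\bA}^{-\top}\bJ^{1/2} = \widetilde{\bSigma}_{\be}$ rather than by $\bSigma_{\be}$ alone. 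Taking $T \ge 80/(\lambda_{\min}(1-\sqrt{\gamma})\alpha)$ forces the deterministic contraction to reach $\tfrac{1}{2}$, while $m \ge \max\{1, 480\alpha\varsigma^2/(1-\sqrt{\gamma})\}$ absorbs the residual mini-batch term, leaving the per-epoch recursion above.

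With the per-epoch recursion in hand, the telescoping step is routine. Unrolling the recursion gives
\begin{equation*}
E_N \le \frac{1}{2^N}E_0 + 5\,\tr(\widetilde{\bSigma}_{\be})\sum_{n=1}^N \frac{1}{2^{\,N-n}\, l_n},
\end{equation*}
and substituting the geometric schedule $l_n \ge (\tfrac{3}{4})^{N-n}M$ turns the sum into $M^{-1}\sum_{j=0}^{N-1}(\tfrac{2}{3})^j \le 3/M$; since $E_0 = \norm{\btheta_0 - \btheta^{\star}}_{\bJ}^2$ (as $\widehat{\btheta}_0 = \btheta_0$), this yields exactly $E_N \le 2^{-N}\norm{\btheta_0 - \btheta^{\star}}_{\bJ}^2 + \tfrac{15}{M}\tr(\widetilde{\bSigma}_{\be})$. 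The additional floor $l_n \ge 18\varsigma^2/(\lambda_{\min}(1-\sqrt{\gamma})^2)$ is what the inner-loop variance control demands and does not affect this estimate.

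For the sample complexity, I would convert to the loss via the isometry $\gL(\btheta)^2 = \iota_K\norm{\btheta - \btheta^{\star}}_{\bJ}^2$ and balance the two terms of $\iota_K E_N \le \varepsilon^2$: taking $N \simeq \log(\iota_K\norm{\btheta_0 - \btheta^{\star}}_{\bJ}^2/\varepsilon^2)$ controls the transient and $M \simeq \iota_K\tr(\widetilde{\bSigma}_{\be})/\varepsilon^2$ controls the floor. Summing the costs, $\sum_n l_n \lesssim M + N\varsigma^2/(\lambda_{\min}(1-\gamma)^2)$ (geometric part plus constant floor) and $\sum_n Tm \lesssim N\bigl(\tfrac{1}{\lambda_{\min}(1-\gamma)} + \tfrac{\varsigma^2}{\lambda_{\min}(1-\gamma)^2}\bigr)$, using $\alpha$ of constant order and $(1-\sqrt{\gamma}) \simeq (1-\gamma)$; this reproduces \eqref{eqn:sample_complexity_VrFLCTD_tr_version}. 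Finally, inserting $\iota_K = 1/((1-\gamma)K)$ together with Lemma~\ref{lem:trace_critical}, $\tr(\widetilde{\bSigma}_{\be}) \lesssim (\norm{\btheta^{\star}}_{V_1}^2+1)K/\lambda_{\min}$, collapses the leading term to $(\norm{\btheta^{\star}}_{V_1}^2+1)/(\varepsilon^2(1-\gamma)\lambda_{\min})$, giving \eqref{eqn:sample_complexity_VrFLCTD} and the advertised $K$-independence. The main obstacle is the inner-loop step: proving that operator extrapolation plus tail averaging drives the stochastic error down to the instance-optimal floor $\tr(\widetilde{\bSigma}_{\be})$ — i.e., producing the $\bar{\bA}^{-1}$-weighted covariance matching the lower bound of Theorem~\ref{thm:minimax_lower_bound} — while simultaneously keeping the contraction geometric in only $T \simeq (\lambda_{\min}(1-\gamma))^{-1}$ inner steps and ensuring the mini-batch fluctuation never contaminates this leading constant.
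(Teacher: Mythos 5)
Your overall architecture is the same as the paper's: a per-epoch recursion $E_n \le \tfrac12 E_{n-1} + O(\tr(\widetilde{\bSigma}_{\be})/l_n)$ obtained from the Li-et-al.\ variance-reduction template, followed by telescoping against the geometric schedule $l_n \ge (3/4)^{N-n}M$ and the budget computation; your telescoping step and the conversion to sample complexity via $\gL(\btheta)^2=\iota_K\norm{\btheta-\btheta^{\star}}_{\bJ}^2$ are correct and match the paper essentially line for line.

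The one substantive problem is the mechanism you assign to the $\bar{\bA}^{-1}$-weighting, which you yourself flag as ``the main obstacle.'' You plan to treat the inner iteration as an LSA with fixed point $\btheta^{\star}$, regard the re-centering term $\widehat{\bh}(\widetilde{\btheta})-\bar{\bh}(\widetilde{\btheta})$ as noise with covariance $\bSigma_{\be}/l_n$, and then argue that operator extrapolation plus tail averaging ``produces the correct asymptotic covariance'' so that the floor becomes $\tr(\widetilde{\bSigma}_{\be})$ rather than $\tr(\bSigma_{\be})$. That is not how the weighting arises, and as a plan it points at a statement that is both unnecessary and hard: the re-centering term is computed once per epoch and held \emph{fixed} throughout the inner loop, so it is not fresh per-step noise that Polyak--Ruppert averaging can whiten into $\bar{\bA}^{-1}\bSigma_{\be}\bar{\bA}^{-\top}$; and a non-asymptotic CLT-covariance statement over only $T \simeq (\lambda_{\min}(1-\gamma))^{-1}$ inner steps is precisely the kind of claim the sample-size barrier obstructs. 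The paper's route (Lemma~\ref{lem:underline_theta_discrepancy}) is to define the shifted inner-loop target $\underline{\btheta}$ by $\bar{\bA}(\btheta^{\star}-\underline{\btheta})=\widehat{\bh}(\widetilde{\btheta})-\bar{\bh}(\widetilde{\btheta})$, so that
\begin{equation*}
\EB\brk{\norm{\btheta^{\star}-\underline{\btheta}}_{\bJ}^2}
=\EB\brk{\norm{\bJ^{1/2}\bar{\bA}^{-1}\prn{\widehat{\bh}(\widetilde{\btheta})-\bar{\bh}(\widetilde{\btheta})}}^2}
\le \frac{2}{l_n}\prn{\tr(\widetilde{\bSigma}_{\be})+\frac{\varsigma^2}{(1-\sqrt{\gamma})^2\lambda_{\min}}\EB\brk{\norm{\btheta^{\star}-\widetilde{\btheta}}_{\bJ}^2}},
\end{equation*}
i.e.\ the $\bar{\bA}^{-1}$-weighted trace falls out of the fixed-point identity for free. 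The inner loop then only has to contract $\norm{\btheta-\underline{\btheta}}_{\bJ}^2$ by a constant factor (which is where extrapolation, the mini-batch size $m$, and tail averaging earn their keep), and Young's inequality reassembles $\norm{\widehat{\btheta}_n-\btheta^{\star}}_{\bJ}^2$. If you reorganize your inner-loop step around $\underline{\btheta}$ rather than $\btheta^{\star}$, the rest of your plan goes through as written.
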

The proof can be found in Appendix~\ref{appendix:proof_VrFLCTD}.
We now turn to a brief discussion. 
Theorem~\ref{thm:VrFLCTD_convergence} is derived under the condition $K\geq (1-\gamma)^{-1}$. 
This condition first appears in the proof of Lemma 5.2 in \citet{peng2025finitesampleanalysisdistributional} where we establish upper bounds on the norm of $\bb_{t}$ in the vectorized formulation $\bA_{t}\btheta_{t}-\bb_{t}$. 
The step size $\alpha$ is a constant independent of the underlying MDP, in contrast to the baseline algorithm {\LCTD} where the best sample complexity is achieved by an instance-dependent step size. 
The step size $\alpha$ is also independent of $K$, which is a feature inherited from {\LCTD}. 

We present two versions of the sample complexity in Theorem~\ref{thm:VrFLCTD_convergence} to facilitate the subsequent analysis. 
The first term of Equation~\eqref{eqn:sample_complexity_VrFLCTD_tr_version} serves as the dominant term and matches the local asymptotic minimax risk $\tr(\widetilde{\bSigma}_{\be})/\prn{K(1-\gamma)}$ derived in Theorem~\ref{thm:minimax_lower_bound}. 
This term can be further upper bounded by the leading term of Equation~\eqref{eqn:sample_complexity_VrFLCTD} due to the upper bound of $\tr(\widetilde{\bSigma}_{\be})$ established in Lemma~\ref{lem:trace_critical}.

Next, we compare this result with previous work. Theorem~\ref{thm:VrFLCTD_convergence} can be seen as a counterpart result to the convergence result in \citep{li2023accelerated} for variance-reduced {\LTD} algorithm. 
The leading term shares a similar trace structure, and the logarithmic terms have the same dependence on $(1-\gamma)^{-1}$ in both the deterministic error term (without $\varsigma^2$) and the stochastic error term (with $\varsigma^2$). 
To compare with the result we establish for the baseline algorithm {\LCTD} in \cite{peng2025finitesampleanalysisdistributional} where the loss metric is $\mu_{\pi}$-weighted $1$-Wasserstein distance, we translate Equation~\eqref{eqn:sample_complexity_VrFLCTD} into the following proposition.

\begin{proposition}[No Sample Size Barrier]
With the same setting as in Theorem~\ref{thm:VrFLCTD_convergence}, the sample complexity for a $\varepsilon$-accurate estimator in the $\mu_{\pi}$-weighted 1-Wasserstein metric is
\begin{equation*}\label{eq:Vr_sample_complexity}  
    \wtilde{\gO}\prn{\varepsilon^{-2}(1-\gamma)^{-2}\lambda_{\min}^{-1}\prn{\norm{\btheta^{\star}}^2_{V_1}+1}}.  
\end{equation*}
\end{proposition}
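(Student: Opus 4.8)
The plan is to reduce the claim to the $\mu_\pi$-weighted $L^2$ (Cram\'er) sample complexity already furnished by Theorem~\ref{thm:VrFLCTD_convergence}, exploiting the fact that on $\sP^{\sgn}$ the Wasserstein metric is dominated by the Cram\'er metric. First I would invoke the elementary inequality $W_1(\nu_1,\nu_2)\leq \ell_2(\nu_1,\nu_2)/\sqrt{1-\gamma}$ for $\nu_1,\nu_2\in\sP^{\sgn}$ (Lemma~\ref{lem:prob_basic_inequalities}, recalled at the start of Section~\ref{Section:instance-dependent_analysis}). Applying it pointwise in $s$ and integrating against $\mu_\pi$ gives $W_{1,\mu_\pi}(\bm{\eta}_1,\bm{\eta}_2)\leq \ell_{2,\mu_\pi}(\bm{\eta}_1,\bm{\eta}_2)/\sqrt{1-\gamma}$, so that $\gW(\btheta)\leq \gL(\btheta)/\sqrt{1-\gamma}$ for every $\btheta$ (here $\gL$ is the non-squared $\ell_2$ distance, consistent with the identity $(\gL)^2=\iota_K\|\cdot\|_{\bJ}^2$ used in Theorem~\ref{thm:VrFLCTD_convergence}). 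Taking second moments and square roots then yields $\EB^{1/2}[(\gW(\widehat\btheta_N))^2]\leq (1-\gamma)^{-1/2}\,\EB^{1/2}[(\gL(\widehat\btheta_N))^2]$.

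Next I would set the $L^2$ target to $\varepsilon' := \varepsilon\sqrt{1-\gamma}$, so that achieving $\EB^{1/2}[(\gL(\widehat\btheta_N))^2]\leq \varepsilon'$ forces $\EB^{1/2}[(\gW(\widehat\btheta_N))^2]\leq \varepsilon$. Substituting $(\varepsilon')^2=\varepsilon^2(1-\gamma)$ into the sample-complexity bound of Equation~\eqref{eqn:sample_complexity_VrFLCTD} converts its leading term $(\norm{\btheta^{\star}}^2_{V_1}+1)/((\varepsilon')^2(1-\gamma)\lambda_{\min})$ into exactly $(\norm{\btheta^{\star}}^2_{V_1}+1)/(\varepsilon^2(1-\gamma)^2\lambda_{\min})$, which is the advertised rate. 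The two remaining summands, $\tfrac{1}{\lambda_{\min}(1-\gamma)}\log(\cdot)$ and $\tfrac{\varsigma^2}{\lambda_{\min}(1-\gamma)^2}\log(\cdot)$, carry only logarithmic dependence on the problem parameters beyond their explicit polynomial prefactors, so the remaining work is to confirm that they are order-wise dominated by the leading term in the regime $\varepsilon\in(0,1)$.

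The main obstacle is precisely this domination check, and in particular the handling of the $\varsigma^2$ prefactor. I would argue that $\varsigma^2$ — the operator second-moment bound entering the mini-batch and epoch-length conditions of Theorem~\ref{thm:VrFLCTD_convergence} — is at most polynomial in the hidden quantities $(1-\gamma)^{-1}$, $\lambda_{\min}^{-1}$, $K$ and $\norm{\btheta^{\star}}$, relying on the same norm estimates for $\bA_t$ and $\bb_t$ under $K\geq(1-\gamma)^{-1}$ (Lemma 5.2 of \citet{peng2025finitesampleanalysisdistributional}) that already power Lemma~\ref{lem:trace_critical}. Given such a bound and $\varepsilon\in(0,1)$, one has $\varsigma^2\lesssim (\norm{\btheta^{\star}}^2_{V_1}+1)/\varepsilon^2$ up to logarithmic factors, whence $\tfrac{\varsigma^2}{\lambda_{\min}(1-\gamma)^2}\lesssim \tfrac{\norm{\btheta^{\star}}^2_{V_1}+1}{\varepsilon^2(1-\gamma)^2\lambda_{\min}}$, and the $\tfrac{1}{\lambda_{\min}(1-\gamma)}$ term is a fortiori no larger. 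Collapsing the three summands into the single leading term and absorbing all logarithmic factors into $\wtilde\gO$ then delivers the claimed complexity $\wtilde{\gO}\prn{\varepsilon^{-2}(1-\gamma)^{-2}\lambda_{\min}^{-1}\prn{\norm{\btheta^{\star}}^2_{V_1}+1}}$.
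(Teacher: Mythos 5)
Your overall route is the same as the paper's: the paper gives no separate argument for this proposition beyond the remark that one ``omits the logarithmic terms and keeps the dominant term,'' and the intended reduction is exactly your first two steps. The conversion $W_{1,\mu_\pi}\leq \ell_{2,\mu_\pi}/\sqrt{1-\gamma}$ via Lemma~\ref{lem:prob_basic_inequalities}, the retargeting $\varepsilon'=\varepsilon\sqrt{1-\gamma}$, and the substitution into the leading term of Equation~\eqref{eqn:sample_complexity_VrFLCTD} to produce $(\norm{\btheta^{\star}}^2_{V_1}+1)/(\varepsilon^2(1-\gamma)^2\lambda_{\min})$ are all correct and are precisely what the paper does.

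There is, however, a genuine flaw in your domination check for the $\varsigma^2$ term. You argue that it suffices for $\varsigma^2$ to be ``at most polynomial in the hidden quantities $(1-\gamma)^{-1}$, $\lambda_{\min}^{-1}$, $K$ and $\norm{\btheta^{\star}}$,'' and from this you infer $\varsigma^2\lesssim(\norm{\btheta^{\star}}^2_{V_1}+1)/\varepsilon^2$ up to logarithmic factors. That inference is invalid under the paper's convention: $\wtilde{\gO}$ absorbs only \emph{logarithmic} factors of $(1-\gamma)^{-1},\lambda_{\min}^{-1},K,\ldots$, not polynomial ones, so if $\varsigma^2$ genuinely carried a polynomial factor of, say, $K$ or $\lambda_{\min}^{-1}$, the term $\varsigma^2/(\lambda_{\min}(1-\gamma)^2)$ would not collapse into the advertised leading term, and indeed the paper's central claim of $K$-independence would fail. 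The correct resolution is sharper and simpler: by Lemma~\ref{lem:variance_of_stochastic_operator}, $\varsigma^2=2(1+\gamma)\leq 4$ is an absolute constant (it comes from the bound $\EB[\bA^{\top}\bA]\preccurlyeq 2(1+\gamma)\,\bI_K\otimes\bSigma_{\bphi}$, not from the $\bb_t$ estimates you cite). With $\varsigma^2\lesssim 1$ and $\varepsilon\in(0,1)$, both trailing summands are bounded by $\tfrac{1}{\lambda_{\min}(1-\gamma)^2}\log(\cdot)\leq \tfrac{\norm{\btheta^{\star}}^2_{V_1}+1}{\varepsilon^2(1-\gamma)^2\lambda_{\min}}\log(\cdot)$, and the logarithm (which after the retargeting reads $\log(\iota_K\norm{\btheta_0-\btheta^{\star}}_{\bJ}^2/(\varepsilon^2(1-\gamma)))$, still logarithmic in the permitted quantities) is absorbed into $\wtilde{\gO}$. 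With that replacement your proof is complete.
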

Here, we omit the logarithmic terms and keep the dominant term. 
The sample size barrier no longer exists in this result since there is no redundant dependency on $\lambda_{\min}$ as in Equation~\eqref{eq:instance_dependent_step_size_l2_sample_complexity}. 
This result aligns with the instance-optimal sample complexity of variance-reduced {\LTD} derived in \citet{li2023accelerated} given by $\wtilde{\gO}(\varepsilon^{-2}(1-\gamma)^{-2}\lambda_{\min}^{-1}(\|\bpsi^\star\|^2_{\bSigma_{\bphi}}+1))$, where $\bpsi^\star$ denotes the parameter for the solution $\bV_{\bpsi^\star}$ to the linear projected Bellman equation. 
To understand why these sample complexities are comparable, recall that the norm-inducing matrix $V_1$ is defined as $ \bI_K\otimes\bSigma_{\bphi}/(\sqrt{K}(1-\gamma))$. 
Through the dimensional analysis in Remark 3 of \citet{peng2025finitesampleanalysisdistributional}, we have shown that $\|\btheta^{\star}\|_{V_1}$ and $\|\bpsi^\star\|_{\bSigma_{\bphi}}$ are of the same order. 
This result highlights that learning the full return distribution is as statistically efficient as learning its expectation under linear-categorical parameterization when $K\geq (1-\gamma)^{-1}$.

\subsection{Markovian Settings}
We next investigate the Markovian setting. 
Recall that in the Markovian observation model, the streaming data $\{\widetilde{\xi_{t}}\}_{t=0}^\infty=\{(s_t,a_t,s_t^{\prime},r_t)\}_{t=0}^\infty$ are collected from the through $s_t\sim\mu_{\pi}(\cdot), a_t\sim\pi(\cdot|s_t), (r_t,s_{t+1})\sim \gP(\cdot,\cdot|s_t,a_t)$, and $s_{t}^{\prime}=s_{t+1}$. 
In this subsection, we extend {\VCTD} from the generative model setting to the Markovian setting. 
For this purpose, the following standard ergodicity assumption is required on the underlying Markov chain, which we adopt from Assumption 3 in \citet{li2023accelerated}.
\begin{assumption}\label{assum:mixing_time}
There exist constants $C_P>0$ and $\rho \in(0,1)$ such that
\begin{equation*}
\max _{s \in S}\norm{\mathbb{P}\prn{s_t=\cdot \mid s_0=s}-\pi}_{\infty} \leq C_P \cdot \rho^t \quad \text { for all } t \in \mathbb{Z}_{+}.
\end{equation*}

In other words, the mixing time is bounded by $t_{\text {mix }} \leq \frac{\log (4 C_P)}{\log (1 / \rho)}$, where
\begin{equation*}
t_{\text {mix}}:=\inf \left\{t \in \mathbb{Z}_{+} \colon \max _{s \in S}\norm{\mathbb{P}\prn{s_t=\cdot \mid s_0=s}-\pi}_{\infty} \leq 1 / 4\right\}.
\end{equation*}
\end{assumption}

\begin{algorithm}[!t]
    \caption{{\VCTD} adapted for Markovian setting}
    \begin{algorithmic}[1]
      \State Initialize $\btheta_{0} = \widehat{\btheta}_0$,  $\alpha>0$, $\lambda\geq0$, integers $m_0, m, l_{0},\{l_{n}\}_{n=1}^{N}$.
      \For{$n = 1$ to $N$}
        \State Set $\btheta_1=\widetilde{\btheta}=\widehat{\btheta}_{n-1}$. Collect $l_{n}$ samples and compute $\widehat{\bh}(\widetilde{\btheta})=\frac{1}{l_{n}-l_{0}} \sum_{i=l_{0}+1}^{l_{n}} \bh_{i}(\widetilde{\btheta})$.
        \For{$t = 1$ to $T$}
          \State Collect $m$ successive Markovian samples, compute $\bar{\bh}_{t} = 1/(m-m_0)\sum_{j=m_0+1}^m h_j$ with them, let $\bF_t(\btheta_t) = \bar{\bh}_{t}(\btheta_t)-\bar{\bh}_{t}(\widetilde{\btheta})+\widehat{\bh}(\widetilde{\btheta})$, set $\bF_0(\btheta_0) = \bF_1(\btheta_1)$ and update with 
          \begin{equation*}
        \btheta_{t+1}=\btheta_t-\alpha(\bF_{t}(\btheta_t)+\lambda(\bF_{t}(\btheta_t)-\bF_{t-1}(\btheta_{t-1}))).             
          \end{equation*}
        \EndFor
        \State Output of the outer loop:
\begin{equation*}     
\widehat{\btheta}_n=(\sum_{t=2}^{T+1} \btheta_t)/T.
\end{equation*}
    \EndFor
    \end{algorithmic}
    \label{alg:VrFLCTD_Mar}
\end{algorithm}
Now we are ready to formally present {\VCTD} algorithm in Markovian setting as detailed in Algorithm~\ref{alg:VrFLCTD_Mar}. 
The convergence properties of this algorithm are rigorously established by the following theorem.

\begin{theorem}\label{thm:Markovian_convergence}
Let $K\geq (1-\gamma)^{-1}$.
Fix the total number of outer loops $N$ and a positive integer $M$. 
Consider a positive integer $\tau$ that satisfies $\rho^\tau \leq \min \{\frac{2(1-\rho) \varsigma}{3 C_{\bphi}}, \frac{(1-\rho)^2}{2 C_{\bphi}}\}$, where $C_{\bphi}:=\frac{C_P\varsigma}{\sqrt{\min _{i \in[D]} \pi_i}}$. 
Suppose that the parameters $l_{0}$ and $m_0$ satisfy
\begin{equation*}
\rho^{l_{0}} \leq \frac{\min _{i \in[D]} \pi_i}{C_P}, \text { and } \quad \rho^{m_0} \leq \min \left\{\frac{\min _{i \in[D]} \pi_i}{C_P}, \frac{\sqrt{\lambda_{\min}} \alpha \tau \varsigma^2(1-\rho)}{C_{\bphi}}\right\}.
\end{equation*}
Further suppose that for each epoch $n \in[N]$, the parameters $\alpha, \lambda, T, m, l_{n}$ satisfy
\begin{align*}
&\alpha \leq \frac{1}{4 (1+\sqrt{\gamma})}, \quad \lambda=1, \quad T \geq \frac{90}{\lambda_{\min}(1-\sqrt\gamma) \alpha}, \quad m-m_0 \geq \max \left\{1, \frac{1000\alpha(\tau+1) \varsigma^2}{1-\sqrt\gamma}\right\}, \\
& \rho^{l_{n}-l_{0}} \leq \frac{\tau(1-\rho)}{4 C_{\bphi}(l_{n}-l_{0})}\quad \text { and } \quad l_{n}-l_{0} \geq\max\left\{\frac{72(\tau+1) \varsigma^2}{\lambda_{\min}(1-\sqrt\gamma)^2},\prn{\frac{3}{4}}^{N-n} M\right\}.
\end{align*}
To achieve 
$\EB^{1/2}[(\gL(\widehat\btheta_N))^2] = \EB^{1/2}[\iota_K\|\widehat{\btheta}_N-\btheta^{\star}\|_{\bJ}^2]\leq \varepsilon$, $\sum_{n=1}^N(l_{n}+Tm)$ samples are needed. 
Let $\bm{\eta}^{\pi,K}$ be the solution to the categorical projected Bellman equation $\bm{\eta}=\bPi_{K}\gT^{\pi}\bm{\eta}$. 
The following results hold.
\begin{itemize}
\item 
If $\bm{\eta}_{\btheta^{\star}} = \bm{\eta}^{\pi,K}$, 
\begin{equation*}
\EB\left[\norm{\widehat{\btheta}_N-\btheta^{\star}}_{\bJ}^2\right] \leq \frac{1}{2^N}\left[\|\btheta_{0}-\btheta^{\star}\|_{\bJ}^2\right]+\frac{27\tr(\widetilde{\bSigma}_{\be})}{M}.
\end{equation*}
The corresponding sample complexity is upper bounded by
 \begin{equation*}
\mathcal{O}\prn{\frac{\iota_K\tr(\widetilde{\bSigma}_{\be})}{\varepsilon^2}
     +\frac{t_{\text{mix}}}{\lambda_{\min}(1-\gamma)}\log\prn{\iota_K\frac{\norm{\btheta_0-\btheta^{\star}}_{\bJ}^2}{\varepsilon^2}}
     +\frac{t_{\text{mix}}\varsigma^2}{\lambda_{\min}(1-\gamma)^2}\log\prn{\iota_K\frac{\norm{\btheta_0-\btheta^{\star}}_{\bJ}^2}{\varepsilon^2}}}.
 \end{equation*}
\item
If $\bm{\eta}_{\btheta^{\star}} \neq \bm{\eta}^{\pi,K}$, with $\widetilde{\bSigma}_{\be,\text{Mkv}}:=\bJ^{1/2}\bar{\bA}^{-1} [\sum_{-\infty}^{\infty}\EB[\widetilde{\be}_{t}\widetilde{\be}_{0}^{\top}]] \bar{\bA}^{-\top}\bJ^{1/2}$ where $\widetilde{\be}_{t}=\bh_{t}(\btheta^{\star})-\bar{\bh}(\btheta^{\star})$,
\begin{align*}
\EB\brk{\norm{\widehat{\btheta}_N-\btheta^{\star}}_{\bJ}^2}
&\leq\frac{1}{2^N}\EB\brk{\|\btheta^{0}-\btheta^{\star}\|_{\bJ}^2}+
\frac{27\tr(\widetilde{\bSigma}_{\be,\text{Mkv}})}{M}\\
&\quad+\frac{81\tau}{M^2\lambda_{\min}(1-\sqrt{\gamma})^2} \frac{\ell_{2,\mu_\pi}^2\prn{\bm{\eta}^{\pi,K},\bm{\eta}_{\btheta^{\star}}}}{\iota_K}+\frac{90\tau^2\tr(\widetilde{\bSigma}_{\be})}{M^2}.
\end{align*}
The corresponding sample complexity is upper bounded by
\begin{align*}
\mathcal{O}\Bigg(\frac{\iota_K \tr(\widetilde{\bSigma}_{\be,\text{Mkv}})}{\varepsilon^2}
&+\frac{t_{\text{mix}}\sqrt{\iota_K\tr(\widetilde{\bSigma}_{\be})}}{\varepsilon}
+\frac{\sqrt{t_{\text{mix}}\ell_{2,\mu_\pi}^2\prn{\bm{\eta}^{\pi,K},\bm{\eta}_{\btheta^{\star}}}}}{\varepsilon\sqrt{\lambda_{\min}}(1-\gamma)}\\
&\quad+\frac{t_{\text{mix}}}{\lambda_{\min}(1-\gamma)}\log\prn{\iota_K\frac{\norm{\btheta_0-\btheta^{\star}}_{\bJ}^2}{\varepsilon^2}}
+\frac{t_{\text{mix}}\varsigma^2}{\lambda_{\min}(1-\gamma)^2}\log\prn{\iota_K\frac{\norm{\btheta_0-\btheta^{\star}}_{\bJ}^2}{\varepsilon^2}}\Bigg).
\end{align*}
\end{itemize}
\end{theorem}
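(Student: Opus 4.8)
The plan is to mirror the generative-model proof of Theorem~\ref{thm:VrFLCTD_convergence}, but with every concentration bound replaced by its Markovian counterpart obtained through the mixing-time machinery of Assumption~\ref{assum:mixing_time}. First I would set up the same epoch-wise recursion: within each outer loop $n$, the operator extrapolation update~\eqref{eqn:extrapolat_update} together with the re-centered operator $\bF_t(\btheta_t)=\bar{\bh}_t(\btheta_t)-\bar{\bh}_t(\widetilde{\btheta})+\widehat{\bh}(\widetilde{\btheta})$ produces a contraction of $\EB\norm{\widehat{\btheta}_n-\btheta^\star}_{\bJ}^2$ by a factor $1/2$ per epoch, plus an additive stochastic-error floor. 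The geometric decay in the first term $2^{-N}\|\btheta_0-\btheta^\star\|_{\bJ}^2$ is inherited verbatim from the generative analysis because it is driven by the deterministic part of the operator, which is identical in both settings. The substance of the theorem lies in correctly identifying the additive floor, and this is where the two cases $\bm{\eta}_{\btheta^\star}=\bm{\eta}^{\pi,K}$ and $\bm{\eta}_{\btheta^\star}\neq\bm{\eta}^{\pi,K}$ diverge.

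For the matching case $\bm{\eta}_{\btheta^\star}=\bm{\eta}^{\pi,K}$, I would argue that the TD fixed point coincides with the unprojected categorical fixed point, so the Bellman residual $\bh_t(\btheta^\star)=\be_t$ has the property that its burned-in Markovian average concentrates to the \emph{same} asymptotic covariance as in the i.i.d.\ case. Concretely, the burn-in parameters $m_0,l_0$ chosen so that $\rho^{m_0},\rho^{l_0}\lesssim \min_i\pi_i/C_P$ ensure that after discarding the first $m_0$ (resp.\ $l_0$) samples, the retained states are $1/4$-close to stationarity; because the residual at the fixed point has zero conditional drift when $\bm{\eta}_{\btheta^\star}=\bm{\eta}^{\pi,K}$, the cross-correlation sum collapses and one recovers $\tr(\widetilde{\bSigma}_{\be})$ rather than the full Markovian sum. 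This yields the clean floor $27\tr(\widetilde{\bSigma}_{\be})/M$, and the sample complexity follows by the same accounting $\sum_n(l_n+Tm)$ as before, with the extra $t_{\text{mix}}$ factors entering through the burn-in lengths and the mini-batch size $m-m_0\gtrsim \alpha(\tau+1)\varsigma^2/(1-\sqrt\gamma)$.

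For the mismatched case, the residual $\widetilde{\be}_t=\bh_t(\btheta^\star)-\bar{\bh}(\btheta^\star)$ no longer has vanishing temporal correlations, so I would replace $\widetilde{\bSigma}_{\be}$ by the full Markovian covariance $\widetilde{\bSigma}_{\be,\text{Mkv}}=\bJ^{1/2}\bar{\bA}^{-1}\big[\sum_{-\infty}^{\infty}\EB[\widetilde{\be}_t\widetilde{\be}_0^\top]\big]\bar{\bA}^{-\top}\bJ^{1/2}$, bounding the autocovariance sum by its geometric tail via the $\rho^\tau$ condition. The two additional error terms, proportional to $\ell_{2,\mu_\pi}^2(\bm{\eta}^{\pi,K},\bm{\eta}_{\btheta^\star})/\iota_K$ and to $\tau^2\tr(\widetilde{\bSigma}_{\be})/M^2$, arise from the bias incurred because $\widehat{\bh}(\widetilde{\btheta})$ and $\bar{\bh}_t(\widetilde{\btheta})$ are now correlated with the Markovian noise; controlling these requires decomposing the inner-loop error into a martingale-difference part and a bias part, then absorbing the bias using the gap between the projected and linear-categorical fixed points. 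I expect the main obstacle to be exactly this bias control: establishing that the correlated mini-batch estimates do not destroy the per-epoch contraction demands a careful application of the $\tau$-step decorrelation bound to the product $\bar{\bh}_t(\btheta_t)-\bar{\bh}_t(\widetilde{\btheta})$, where both arguments are random and coupled to the trajectory. Handling this cleanly — showing the coupling contributes only lower-order $M^{-2}$ terms scaled by $\tau$ and $\tau^2$ — is the technical crux, and I would isolate it in a dedicated Markovian concentration lemma before assembling the final recursion.
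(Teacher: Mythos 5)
Your architecture matches the paper's: a per-epoch halving recursion inherited from the generative analysis, with the stochastic floor re-derived under Markovian sampling, and your identification of the matching case ($\bm{\eta}_{\btheta^{\star}}=\bm{\eta}^{\pi,K}$) via the vanishing conditional drift of the residual at $\btheta^{\star}$ is exactly the mechanism the paper uses (its Equation~\eqref{eq:xi_0} shows $\EB_{s_t}[\widetilde{\bA}_t\btheta^{\star}-\widetilde{\bb}_t]=0$ state by state, so all cross-lag terms $\Xi_t$, $t\neq 0$, vanish and the i.i.d.\ covariance is recovered). However, your account of the mismatched case has a gap in two respects. First, you do not identify the quantitative lemma that drives it: the paper proves (Lemma~\ref{lem:markovian_decay}) that $\norm{\EB[\bh_{t+\tau}(\btheta^{\star})\mid\gF_t]-\bar{\bh}(\btheta^{\star})}_2\leq C_{\bphi}\rho^{\tau}\,\ell_{2,\mu_\pi}(\bm{\eta}^{\pi,K},\bm{\eta}_{\btheta^{\star}})/\sqrt{\iota_K}$, which follows from writing the per-state residual as $\bC(\bp_{\btheta^{\star}}(s)-\bp_{\bm{\eta}^{\pi,K}}(s))\otimes\bphi(s)$ minus a $\sqrt{\gamma}$-contraction of the same quantity at the next state. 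This is the only place the approximation error $\ell_{2,\mu_\pi}(\bm{\eta}^{\pi,K},\bm{\eta}_{\btheta^{\star}})$ enters, and it unifies both cases (it is zero in the matching case). Without it you cannot produce the $\ell_{2,\mu_\pi}^2(\bm{\eta}^{\pi,K},\bm{\eta}_{\btheta^{\star}})/\iota_K$ term with the stated $\tau/M^2$ scaling.

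Second, you misattribute the source of the two $M^{-2}$ correction terms. They do not come from the coupling between the inner-loop mini-batch operators $\bar{\bh}_t$ and $\widetilde{\btheta}$; in the paper that coupling is handled by the burn-in $m_0$ and a separate bias lemma (Lemma~\ref{lem:delicate_h_cut}), and its contribution is absorbed into the per-epoch contraction constants, not the floor. The $\tau^2\tr(\widetilde{\bSigma}_{\be})/M^2$ and $\tau\,\ell_{2,\mu_\pi}^2(\cdot)/ (M^2\lambda_{\min}(1-\sqrt{\gamma})^2\iota_K)$ terms arise instead from the variance of the epoch-initial re-centering estimate $\widehat{\bh}(\btheta^{\star})$ over $l_n-l_0$ correlated samples: expanding $\EB\norm{\bJ^{1/2}\bar{\bA}^{-1}(\widehat{\bh}(\btheta^{\star})-\bar{\bh}(\btheta^{\star}))}_2^2$ gives the leading term $\tr(\widetilde{\bSigma}_{\be,\text{Mkv}})/(l_n-l_0)$ plus autocovariance corrections $\sum_i i\,\Xi_i/(l_n-l_0)^2$, and splitting that sum at lag $\tau$ (using $\abs{\Xi_i}\leq\tr(\widetilde{\bSigma}_{\be})$ for $i\leq\tau$ and the geometric decay from Lemma~\ref{lem:markovian_decay} for $i>\tau$) produces exactly those two terms. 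If you chase the inner-loop coupling instead, you will not find them, and your "dedicated Markovian concentration lemma" would be aimed at the wrong object.
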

The proof of Theorem~\ref{thm:Markovian_convergence} can be found in Appendix~\ref{appendix:proof_Markovian}. 
To show the impact of linear function approximation (following \citet{li2023accelerated}), we distinguish between cases based on whether the equality $\bm{\eta}_{\btheta^{\star}} = \bm{\eta}^{\pi,K}$ holds. 
Note that $\bm{\eta}^{\pi,K}$, the target of tabular categorical TD learning, can be intuitively interpreted as a special case of $\bm{\eta}_{\btheta^{\star}}$ when the feature dimension equals the size of the state space $\gS$ and all feature vectors are linearly independent. 
A formal introduction of tabular categorical TD learning and $\bm{\eta}^{\pi,K}$ can be found in Appendix~\ref{appendix:CTD}. 
In the following, we provide a discussion of these sample complexity upper-bounds.
 
We begin by comparing the sample complexity in Theorem~\ref{thm:Markovian_convergence} with the main result in the generative model setting. 
In the $\bm{\eta}_{\btheta^{\star}} = \bm{\eta}^{\pi,K}$ case where the linear approximation is accurate, the dominant trace term remains $\tr(\widetilde{\bSigma}_{\be})$ and the mixing time $t_{\text{mix}}$ only appears linearly in the logarithmic terms of $\varepsilon^{-1}$. 
In the $\bm{\eta}_{\btheta^{\star}} \neq \bm{\eta}^{\pi,K}$ case, the dominant trace term is replaced by $\tr(\widetilde{\bSigma}_{\be,\text{Mkv}})$. 
This term has the same structure as the corresponding result for variance-reduced {\LTD} in the Markovian setting, as provided by Theorem 4 in \citet{li2023accelerated}, and this term match the lower bound given by \cite{mou2024optimal}.
Furthermore, the logarithmic terms, while being of the same order with respect to $(1-\gamma)^{-1}$ and $\lambda_{\min}^{-1}$ in both cases, match those in Theorem 4 in \citet{li2023accelerated} as well.

A notable difference in the result manifests in the $\bm{\eta}_{\btheta^{\star}} \neq \bm{\eta}^{\pi,K}$ case, where the estimation error upper-bound contains terms of order $\varepsilon^{-1}$. 
These terms are the counterpart to the term $\sqrt{\gH/\varepsilon}$ and the approximation error term in Theorem 4 in \citet{li2023accelerated}. 
The presence of these terms implies that, under the current analytical framework, the sample size barrier re-emerges in the Markovian setting. 
Resolving this issue is beyond the scope of this work.
 
We now explain how this sample complexity upper bound is valid for different values of $K$. 
As stated before, this property is crucial for {\VCTD} to work effectively with arbitrarily large $K$. 
Since the trace term $\tr(\widetilde{\bSigma}_{\be})$ appearing in Theorem~\ref{thm:VrFLCTD_convergence} does not introduce extra dependence on $K$ (see the discussion following Theorem~\ref{thm:VrFLCTD_convergence}), we only need to deal with the linear approximation error term $\ell_{2,\mu_\pi}^2(\bm{\eta}^{\pi,K}, \bm{\eta}_{\btheta^{\star}})$ and the Markovian-tailored term $\iota_K\tr(\widetilde{\bSigma}_{\be,\text{Mkv}})$.
\begin{itemize}
\item 
$\ell_{2,\mu_\pi}^2(\bm{\eta}^{\pi,K},\bm{\eta}_{\btheta^{\star}})$ does not introduce a worse dependence on $K$, because Lemma~\ref{lem:appro_linear} writes
\begin{equation*}
\ell_{2,\mu_\pi}^2\prn{\bm{\eta}^{\pi,K},\bm{\eta}_{\btheta^{\star}}}\lesssim \frac{1}{(1-\gamma)\lambda_{\min}}.
\end{equation*}
\item
For the Markovian-tailored trace term $\tr(\widetilde{\bSigma}_{\be,\text{Mkv}})$, an useful upper bound for this term can be established as an intermediate result in the proof of Theorem~\ref{thm:Markovian_convergence} as follows. 
\begin{equation*}
\tr(\widetilde{\bSigma}_{\be,\text{Mkv}}) \lesssim
\frac{\ell_{2,\mu_\pi}^2\prn{\bm{\eta}^{\pi,K},\bm{\eta}_{\btheta^{\star}}}}{\lambda_{\min}(1-\sqrt{\gamma})^2\iota_K} +\tau\tr(\widetilde{\bSigma}_{\be}).
\end{equation*}
\end{itemize}
More discussion of the theoretical implications of this upper bound is deferred to Remark~\ref{remark:tr_Markovian_critical}. 
This upper bound guarantees that the dependence of $\tr(\widetilde{\bSigma}_{\be,\text{Mkv}})$ on $K$ is also at most linear. 

Putting these together, we show that the sample complexity result can be further reduced to a K-independent upper bound. 
In other words, even in the more challenging Markovian setting, the support size $K$ can be increased to achieve the best possible approximation accuracy, at a computational time cost comparable to that of variance-reduced {\LTD} (provided that the space complexity is acceptable). 
This finding parallels the landmark finding established by \citet{rowland2024nearminimaxoptimal} and \citet{peng2024statistical} that learning the return distribution distance is statistically efficient as learning its mean in the tabular setting, and extends this conclusion in the linear-categorical parameterization case. 
Thus, for the important case of linear function approximation, we have affirmatively answered the open question we raise in Section~\ref{Section:intro}.
\section{Numerical Experiments}\label{Section:experiments}
\noindent In this section, we perform proposed algorithms on different DPE tasks in different observation settings and compare with the baseline algorithm. We report the negative logarithm of $\|\bar{\btheta}_i-\btheta^{\star}\|_{\bJ}^2/\| \bar{\btheta}_0-\btheta^{\star}\|_{\bJ}^2$ along outer loops, where $\bar{\btheta}_i$ refers to the algorithm output at sample size $i$. All of the experiments are conducted on a server with 4 NVIDIA RTX 4090 GPUs and Intel(R) Xeon(R) Gold 6132 CPU @ 2.60GHz.
\subsection{A simple four-state MDP case}

To empirically evaluate our algorithm {\VCTD} in a toy case, we consider a $4$-state MDP with $K = 20$ and $\gamma = 0.99$. 
We then fix a transition matrix $\bP$ to represent a target policy and execute our algorithm on the samples or trajectories generated by $\bP$ (depending on which observation model is adopted). 
To work with a finite state space $\gS$, we denote by $\bPhi=(\bphi(s))_{s\in\gS}\in\RB^{d\times \gS}$ the feature matrix.
Here, we set the feature matrix $\bPhi$ as a matrix in $\mathbb{R}^{4\times3}$. Note that this matrix is not full-rank, which makes learning $\bm{\eta}_{\btheta^{\star}}$ in this setting strictly different from tabular categorical TD learning introduced in Appendix~\ref{appendix:CTD}.
The following experiments share the same setup: zero initialization $\btheta_0=\bm{0}$, step size $\alpha = 0.01$, and a maximum of 50,000 outer loops. The parameters are set as follows: $N_n=400$, $n_0=m_0=8$, $m=16$, and $T=100$.
\begin{figure}[ht]
    \centering
    \includegraphics[width=0.7\linewidth]{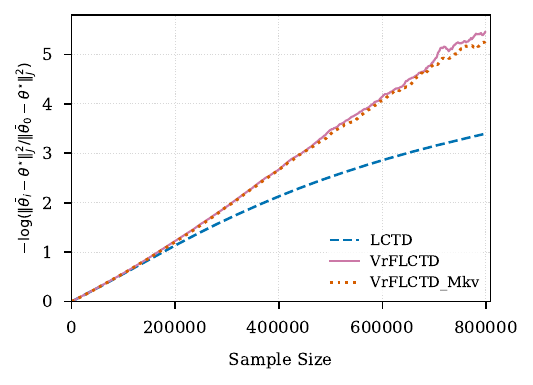} 
    \caption{Convergence results in four-state MDP with different algorithms and observation settings. For this 4-state MDP with $K = 20$ and $\gamma = 0.99$, {\VCTD} shows better convergence performance than the baseline algorithm \LCTD ~in the long run. }
    \label{fig:hard_compare}
\end{figure}

Figure~\ref{fig:hard_compare} shows the convergence results of different algorithms with the same total sample size. 
In the initial phase of the iterations, all algorithms exhibit a similar convergence rate. 
After about 10,000 outer loops (corresponding to about 160,000 samples in our setting), the variance-reduced algorithms maintain their pace, whereas the baseline algorithm {\LCTD} slows down. 
In the long run, {\VCTD} converges slightly slower on the Markovian data stream than on i.i.d. data. 
This phenomenon matches our theoretical expectation because the number of effective samples used per iteration is smaller in the Markovian setting due to the burn-in period.

\subsection{The Markovian setting: 2D Grid World}
Following \citet{li2023accelerated}, we adopt the 2D Grid World environment experiments to validate our algorithm in the Markovian model setting. 
In the original formulation of 2D Grid World, an agent receives a positive reward upon reaching a predetermined goal state and incurs negative rewards when entering certain trap states. 
Here, we apply an affine transformation to the rewards to restrict the reward to be within $[0,1]$. 
The state space has dimension $D=20\times20$, within which we define a single goal state (with reward $ r = 1 $) and 30 trap states (each with reward $ r = 0 $). 
No reward is assigned to the remaining states. 
The transition kernel of the target policy is defined as follows: with probability 0.95, the agent moves in a direction toward the goal, and with probability 0.05, it moves in a random direction. 
For the representation of the state features, we construct a random feature matrix of dimension $ d = 50 $. 
\begin{figure}[ht]
    \centering
    \includegraphics[width=0.9\linewidth]{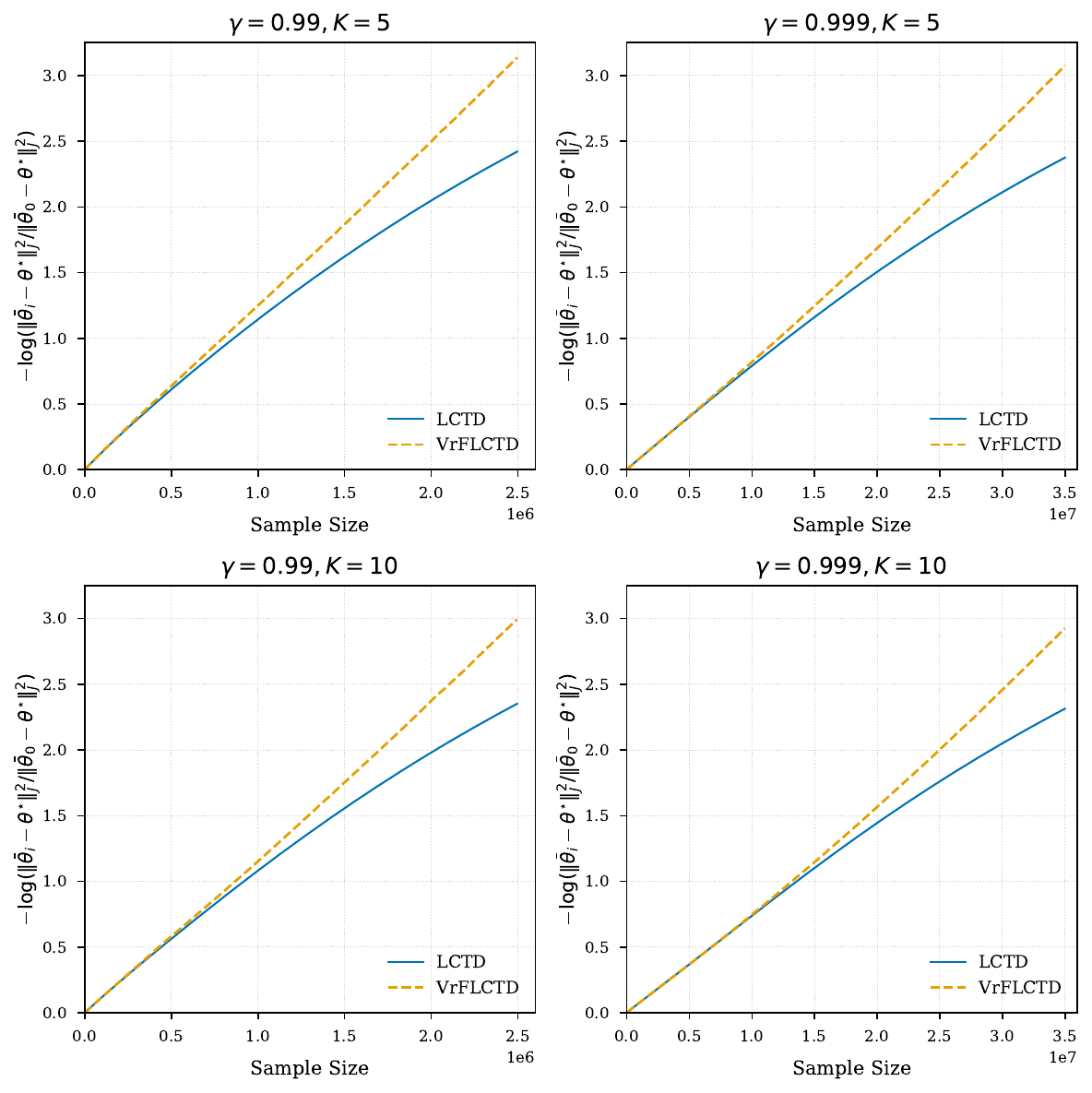} 
    \caption{Convergence results for 2D Grid World in the Markovian Setting for different $K$ and $\gamma$. While achieving similar convergence rate at early stage, {\VCTD} exhibits a notably more consistent and stable convergence rate in the long run. $\gamma$ closer to 1 leads to a slower convergence rate.}
    \label{fig:2D_grid}
\end{figure}
 The size of the support of the return distribution $K$ is set to 5 and 10. All experiments share zero initialization $\btheta_0=\bm{0}$ and a step size $\alpha$ of $0.01$. We provide the parameter setting below:
 \begin{itemize}
 \item 
For $\gamma=0.99$, $n_0=m_0=16$, $m=32$, $N_n=1800$ and $T=100$. 
\item 
For $\gamma=0.999$, $n_0=m_0=32$, $m=64$, $N_n=3000$ and $T=500$.
\end{itemize}

Figure~\ref{fig:2D_grid} presents the convergence results for the 2D Grid World environments, which lead to the following discussion. 
Across all experiments, {\VCTD} maintains a steady convergence rate while {\LCTD} slows down when the sample size grows larger, similarly to the simple MDP situation. 
Note that the hyperparameters are tuned to make their convergence rates comparable. 
We also observe that increasing $K$ while keeping the step size $\alpha$ fixed does not slow the algorithm down. 
This observation aligns with the theoretical analysis in Theorem~\ref{thm:Markovian_convergence}. 

Furthermore, a brief comparison can be made with respect to the number of samples required to reach the same level of convergence. Although the specific hyperparameters differ, the results suggest that as $\gamma$ approaches $1$, the sample complexity increases with the horizon length $(1-\gamma)^{-1}$. This empirical success of {\VCTD} in a relatively large state space with large $K$ and horizon supports our theoretical analysis and shows its potential to address more difficult DPE tasks.
\section{Concluding Remarks}\label{Section:discussion}
\noindent In this paper we have comprehensively explored the statistical efficiency of distributional TD learning with linear function approximation. 
By incorporating variance-reduction techniques into the baseline algorithm {\LCTD} and establishing important statistical bounds, we have proposed a new algorithm {\VCTD} with sharp theoretical guarantees in both the generative model setting and the Markovian setting. 
We have addressed the gap between the lower bound and the upper bound, breaking the sample size barrier of the baseline algorithm and demonstrating that, when employing distributional TD learning with linear function approximation, learning the full return distribution is as statistically efficient as learning its expectation (the value function). 
We have executed {\VCTD} on different DPE tasks and observed promising empirical success. Beyond the central scope of this work, we offer several open questions for further discussion. 
\begin{itemize}
    \item The basic framework of linear-categorical parameterization can lead to invalid return distributions $\bm{\eta}_{\btheta^{\star}}$. The root cause of these invalid instances remains unclear, and an effective solution has yet to be explored.
    An interim solution is provided in Appendix~\ref{appendix:projection}. 
    Since categorical distribution imposes a natural condition on the CDF induced by $\btheta$, this issue falls within constrained linear stochastic approximation. 
    While exploring this possibility falls outside the scope of this work, it may be interesting to design algorithms to elegantly rectify the outliers without introducing dependence on $K$.
    \item Beyond the linear-categorical parameterization we focus on, more formulations can be explored in the context of distributional TD learning with function approximation. Future work may investigate nonlinear parameterizations or different parameterizations, potentially unlocking new insights into the statistical efficiency of distributional reinforcement learning algorithms.
\end{itemize}

\appendix

\section{Technical Lemmas}\label{appendix:technical_lemmas}
\noindent Here, we list the technical lemmas used in this work. 
These lemmas establish crucial bounds on the statistics involved.
\begin{lemma}\label{lem:prob_basic_inequalities}
(Lemma G.1 in \citep{peng2025finitesampleanalysisdistributional})
For any $\nu_1, \nu_2\in\sP^{\sgn}$, we have $W_1(\nu_1,\nu_2)\leq \ell_2(\nu_1,\nu_2)/\sqrt{1-\gamma}$. 
\end{lemma}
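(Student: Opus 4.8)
The plan is to reduce both distances to integrals of the difference of cumulative distribution functions and then apply the Cauchy--Schwarz inequality, exploiting the fact that for members of $\sP^{\sgn}$ this difference is supported on an interval of length $(1-\gamma)^{-1}$.

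First I would set $g(x) := F_{\nu_1}(x) - F_{\nu_2}(x)$, where $F_{\nu_i}$ denotes the CDF of $\nu_i$, and recall the standard CDF formulas $W_1(\nu_1,\nu_2) = \int_{\RB} |g(x)|\,dx$ and $\ell_2(\nu_1,\nu_2) = \big(\int_{\RB} g(x)^2\,dx\big)^{1/2}$, which serve as the working definitions on the space $\sP^{\sgn}$ of signed measures of total mass $1$. The key observation is that $g$ vanishes outside the interval $[0,(1-\gamma)^{-1}]$: since each $\nu_i$ is supported on $[0,(1-\gamma)^{-1}]$ and has total mass $1$, one has $F_{\nu_i}(x)=0$ for $x<0$ and $F_{\nu_i}(x)=1$ for $x\ge(1-\gamma)^{-1}$, so the two contributions cancel in $g$ on both regions. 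Writing $L=(1-\gamma)^{-1}$ for the length of this interval, the estimate follows by pairing $|g|$ against the constant function $1$ and invoking Cauchy--Schwarz:
\[
W_1(\nu_1,\nu_2) = \int_0^L |g(x)|\,dx \le \Big(\int_0^L g(x)^2\,dx\Big)^{1/2}\Big(\int_0^L 1\,dx\Big)^{1/2} = \sqrt{L}\,\ell_2(\nu_1,\nu_2) = \frac{\ell_2(\nu_1,\nu_2)}{\sqrt{1-\gamma}}.
\]

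There is no substantive obstacle here; the argument is elementary and relies only on the compact support of the return distributions, which is precisely what the effective horizon $[0,(1-\gamma)^{-1}]$ supplies. The single point meriting care is that the support-and-normalization claim must be checked for signed measures of total mass $1$ rather than genuine probability measures, but this follows directly from the total-mass constraint combined with the support assumption built into the definition of $\sP^{\sgn}$.
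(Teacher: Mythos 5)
Your proof is correct and is essentially the same argument as the cited one: both $W_1$ and $\ell_2$ on $\sP^{\sgn}$ are defined in this paper precisely as the $L^1$ and $L^2$ norms of the CDF difference, which vanishes outside $[0,(1-\gamma)^{-1}]$ because both measures have total mass $1$ and support in that interval, so Cauchy--Schwarz against the constant $1$ on an interval of length $(1-\gamma)^{-1}$ gives the bound. No gaps; the check that the cancellation of CDFs at the endpoints holds for signed measures of total mass $1$ is exactly the one point that needs stating, and you state it.
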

\begin{lemma}\label{lem:Spectra_of_ccgcc}
(Lemma G.3 in \citep{peng2025finitesampleanalysisdistributional})
For any $r\in[0, 1]$, we have
\begin{equation*}
    \norm{\bC\tilde{\bG}(r)\bC^{-1}}\leq\sqrt{\gamma},\quad\norm{\prn{\bC^{\top}\bC}^{1/2}\tilde{\bG}(r)\prn{\bC^{\top}\bC}^{-1/2}}\leq \sqrt\gamma.
\end{equation*}
\end{lemma}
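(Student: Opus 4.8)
The plan is to recognize that both operator-norm bounds are, after a change of basis, equivalent to the single positive-semidefinite inequality $\tilde{\bG}(r)^\top(\bC^\top\bC)\tilde{\bG}(r)\preceq\gamma\,\bC^\top\bC$, and that this inequality is exactly the $\sqrt\gamma$-contraction of the categorical-projected Bellman operator measured in the Cram\'er ($\ell_2$-on-CDFs) metric. First I would set up the dictionary between these matrices and distributional objects. Recall that $\bC$ is the lower-triangular all-ones (prefix-sum) matrix, so for a reduced point-mass vector $\bp\in\RB^K$ the entries of $\bC\bp$ are the CDF values on the grid, and the Cram\'er distance of two grid distributions $\nu_{\bp_1},\nu_{\bp_2}\in\sP^{\sgn}_K$ satisfies $\ell_2^2(\nu_{\bp_1},\nu_{\bp_2})=\iota_K\norm{\bC(\bp_1-\bp_2)}^2$ (the single-state version of the isometry of Proposition~D.1 in \citep{peng2025finitesampleanalysisdistributional}). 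I would then verify that $\bv\mapsto\tilde{\bG}(r)\bv$ is precisely the linear part, in reduced coordinates, of the categorical projection (linear interpolation onto the grid $\{x_k\}$) composed with the scalar Bellman update $b_{r,\gamma}$: projecting $\delta_{r+\gamma x_j}$ yields the weights $\bg_j(r)$, so the projected image of $\nu_{\bp}$ has reduced PMF $\sum_{j=0}^{K}p_j\bg_j(r)=\tilde{\bG}(r)\bp+\bg_K(r)$ after substituting $p_K=1-\sum_{j<K}p_j$; on differences of mass-one measures this affine map has linear part $\tilde{\bG}(r)$.

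The heart of the argument is the contraction estimate $\norm{\bC\tilde{\bG}(r)\bv}^2\leq\gamma\norm{\bC\bv}^2$ for every $\bv\in\RB^K$. I would prove it distributionally: since $r\in[0,1]$ and $x_j\in[0,(1-\gamma)^{-1}]$ one checks $r+\gamma x_j\in[0,(1-\gamma)^{-1}]$, so the interpolation projection preserves total mass and is the genuine Cram\'er-orthogonal projection, hence non-expansive in $\ell_2$; moreover the pushforward $b_{r,\gamma}$ rescales the support by $\gamma$, and because $\ell_2^2=\int(F-G)^2\,dx$ scales linearly under such rescaling while shifts are $\ell_2$-isometries, $b_{r,\gamma}$ contracts $\ell_2$ exactly by $\sqrt\gamma$. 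Composing a $\sqrt\gamma$-contraction with a non-expansion and reading it through the isometry $\ell_2^2=\iota_K\norm{\bC\cdot}^2$ gives $\iota_K\norm{\bC\tilde{\bG}(r)\bv}^2\leq\gamma\,\iota_K\norm{\bC\bv}^2$, which is the claimed inequality after cancelling $\iota_K$; equivalently $\tilde{\bG}(r)^\top(\bC^\top\bC)\tilde{\bG}(r)\preceq\gamma\,\bC^\top\bC$.

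Finally I would convert this single inequality into the two stated bounds. For the first, substituting $\bv=\bC^{-1}\bu$ into $\norm{\bC\tilde{\bG}(r)\bv}^2\leq\gamma\norm{\bC\bv}^2$ gives $\norm{\bC\tilde{\bG}(r)\bC^{-1}\bu}^2\leq\gamma\norm{\bu}^2$ for all $\bu$, i.e.\ $\norm{\bC\tilde{\bG}(r)\bC^{-1}}\leq\sqrt\gamma$. For the second, conjugating the PSD inequality by $(\bC^\top\bC)^{-1/2}$ shows that $\bM:=(\bC^\top\bC)^{1/2}\tilde{\bG}(r)(\bC^\top\bC)^{-1/2}$ obeys $\bM^\top\bM=(\bC^\top\bC)^{-1/2}\tilde{\bG}(r)^\top(\bC^\top\bC)\tilde{\bG}(r)(\bC^\top\bC)^{-1/2}\preceq\gamma\bI_K$, whence $\norm{\bM}\leq\sqrt\gamma$. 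I expect the main obstacle to be the bookkeeping that pins $\tilde{\bG}(r)$ down as the exact linear part of the projected Bellman update in reduced coordinates --- in particular tracking the $p_K=1-\sum_{j<K}p_j$ substitution and the $\bg_K(r)$ shift term, and confirming the boundary condition $r+\gamma x_j\le(1-\gamma)^{-1}$ that makes the projection mass-preserving and non-expansive; the contraction and the final norm conversions are then routine. Should the distributional route prove delicate, one could instead verify $\tilde{\bG}(r)^\top(\bC^\top\bC)\tilde{\bG}(r)\preceq\gamma\,\bC^\top\bC$ by the explicit algebra of the triangular-kernel weights, but I would favour the contraction argument as both shorter and more transparent.
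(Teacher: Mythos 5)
Your proof is correct: reducing both operator-norm bounds to the single PSD inequality $\tilde{\bG}(r)^{\top}(\bC^{\top}\bC)\tilde{\bG}(r)\preceq\gamma\,\bC^{\top}\bC$, and deriving that inequality from the Cram\'er isometry $\ell_2^2(\nu_{\bp_1},\nu_{\bp_2})=\iota_K\norm{\bC(\bp_1-\bp_2)}^2$ combined with the non-expansiveness of $\bPi_K$ and the exact $\sqrt\gamma$-scaling of $(b_{r,\gamma})_\#$ (valid since $r+\gamma x_j\leq(1-\gamma)^{-1}$ keeps the pushforward inside the effective horizon, so the projection is mass-preserving and orthogonal), is sound, including the identification of $\tilde{\bG}(r)$ as the linear part of the projected Bellman update after the $p_K=1-\sum_{j<K}p_j$ substitution. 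The paper itself defers the proof to Lemma G.3 of the conference version, but your argument is exactly the isometry-plus-contraction machinery the paper develops in its appendices (Gram matrix $\bC^{\top}\bC/(K(1-\gamma))$, $\sqrt\gamma$-contraction of $\bPi_K\gT^{\pi}$), so your route matches the paper's approach.
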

\begin{lemma}\label{lem:tr_cov_sigma_e}
(Lemma 5.2 in \citep{peng2025finitesampleanalysisdistributional})
For any $K\geq (1-\gamma)^{-1}$, we have
\begin{equation*}
    \tr\prn{\bSigma_{\be}}\lesssim \norm{\btheta^{\star}}_{\bJ}^2+K(1-\gamma)^2, \quad\norm{\bb}\lesssim \sqrt{K}(1-\gamma).
\end{equation*}
\end{lemma}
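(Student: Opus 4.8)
The statement splits into two independent bounds, and the plan is to treat them separately, beginning with the norm of $\bb$ since it feeds into the trace bound. Recall from Equation~\eqref{eq:linear_CTD_vec} that $\bb_t = (K+1)^{-1}[\bC(\sum_{j=0}^K\bg_j(r_t)-\bm{1}_K)]\otimes\bphi(s_t)$. Using the multiplicativity of the Euclidean norm under the Kronecker product, $\norm{\bb_t} = (K+1)^{-1}\norm{\bC(\sum_{j=0}^K\bg_j(r_t)-\bm{1}_K)}\,\norm{\bphi(s_t)}$, and since $\norm{\bphi(s_t)}\leq 1$ it remains to control $\norm{\bC(\sum_{j=0}^K\bg_j(r)-\bm{1}_K)}$ uniformly over $r\in[0,1]$. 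For this I would exploit the CDF interpretation of $\bC$: since $\bC$ is the lower-triangular all-ones (cumulative-sum) matrix, each entry $(\bC\bg_j(r))_i$ is a partial sum of the projected point-mass PMF of $\delta_{r+\gamma x_j}$, hence equals a clamped ramp $\mathrm{clamp}(i+1-(r+\gamma x_j)/\iota_K,0,1)\in[0,1]$. Summing over $j=0,\dots,K$ and subtracting $\bC\bm{1}_K=(1,2,\dots,K)^\top$, the $i$-th entry of $\bC(\sum_j\bg_j(r)-\bm{1}_K)$ telescopes to a quantity of magnitude $\lesssim (1-\gamma)K$. Squaring and summing the $K$ entries then gives $\norm{\bC(\sum_j\bg_j(r)-\bm{1}_K)}^2\lesssim (1-\gamma)^2K^3$, so that $\norm{\bC(\sum_j\bg_j(r)-\bm{1}_K)}\lesssim (1-\gamma)K^{3/2}$; dividing by $K+1$ yields $\norm{\bb_t}\lesssim (1-\gamma)\sqrt{K}$ uniformly in the realization, which is the second claim.

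For the trace bound, since $\be_t=\bA_t\btheta^\star-\bb_t$ is mean-zero and i.i.d. in the generative model, $\tr(\bSigma_{\be})=\EB\norm{\be}^2\leq 2\EB\norm{\bA_t\btheta^\star}^2+2\EB\norm{\bb_t}^2$, and the second term is $\lesssim K(1-\gamma)^2$ by the bound just established. For the first term I would split $\bA_t\btheta^\star$ into its two Kronecker pieces from Equation~\eqref{eq:linear_CTD_vec} and convert each to a Frobenius norm via $(\bB\otimes\bA)\vect(\bTheta^\star)=\vect(\bA\bTheta^\star\bB^\top)$. The diagonal piece gives $\norm{\bphi(s_t)\bphi(s_t)^\top\bTheta^\star}_F^2=\norm{\bphi(s_t)}^2\norm{\bphi(s_t)^\top\bTheta^\star}^2\leq \bphi(s_t)^\top\bTheta^\star\bTheta^{\star\top}\bphi(s_t)$, and expectation over $s_t\sim\mu_\pi$ yields $\tr(\bTheta^{\star\top}\bSigma_\bphi\bTheta^\star)=\norm{\btheta^\star}_{\bJ}^2$. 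For the off-diagonal piece I would bound pointwise $\norm{\bphi(s_t)\bphi(s_t')^\top\bTheta^\star(\bC\tilde{\bG}(r_t)\bC^{-1})^\top}_F^2\leq\norm{(\bC\tilde{\bG}(r_t)\bC^{-1})\bTheta^{\star\top}\bphi(s_t')}^2\leq\gamma\,\bphi(s_t')^\top\bTheta^\star\bTheta^{\star\top}\bphi(s_t')$, where the first step uses $\norm{\bphi(s_t)}\leq1$ and the second uses $\norm{\bC\tilde{\bG}(r_t)\bC^{-1}}\leq\sqrt\gamma$ from Lemma~\ref{lem:Spectra_of_ccgcc}. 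Crucially this bound no longer involves $s_t$ or $r_t$, so taking expectation over the marginal of $s_t'$, which is again $\mu_\pi$ by stationarity of the chain, gives $\gamma\norm{\btheta^\star}_{\bJ}^2$. Combining, $\EB\norm{\bA_t\btheta^\star}^2\lesssim\norm{\btheta^\star}_{\bJ}^2$, and hence $\tr(\bSigma_{\be})\lesssim\norm{\btheta^\star}_{\bJ}^2+K(1-\gamma)^2$.

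The main obstacle is the $\norm{\bb}$ bound. The cumulative-sum matrix $\bC$ has operator norm $\Theta(K)$, so a naive submultiplicative estimate $\norm{\bC}_{\mathrm{op}}\norm{\sum_j\bg_j-\bm{1}_K}$ would carry an uncontrolled factor of $K$. The essential point is that the entries of $\bC(\sum_j\bg_j-\bm{1}_K)$ grow only like $(1-\gamma)K$ rather than like $K^2$, which requires tracking the boundary behaviour of the projected CDFs (where $\sum_j(\bC\bg_j)_i$ transitions from $0$ to $K+1$) instead of a black-box norm inequality. This is precisely what makes the $(K+1)^{-1}$ prefactor cancel the growth and deliver the $K$-independent scaling $\sqrt{K}(1-\gamma)$ on which the rest of the analysis relies. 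The trace bound, by contrast, is essentially mechanical once the pointwise $\norm{\bphi}\leq1$ reduction and Lemma~\ref{lem:Spectra_of_ccgcc} are in place, with stationarity supplying the $\bSigma_\bphi$-weighting that converts plain Euclidean norms into the $\bJ$-norm of $\btheta^\star$.
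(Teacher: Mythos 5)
Your proposal is correct and, in substance, is the same argument the paper relies on: the paper states this lemma without proof, importing it as Lemma 5.2 of the conference version, and the machinery it does reproduce in its appendix matches your two steps exactly — your rank-one Frobenius reduction combined with $\norm{\bC\tilde{\bG}(r)\bC^{-1}}\leq\sqrt{\gamma}$ (Lemma~\ref{lem:Spectra_of_ccgcc}) and stationarity of $\mu_\pi$ is precisely the pointwise bound $\bA^{\top}\bA\preccurlyeq 2\bI_K\otimes(\bphi(s)\bphi(s)^{\top}+\gamma\bphi(s^\prime)\bphi(s^\prime)^{\top})$ that the paper reuses in Lemmas~\ref{lem:A_bilinear} and~\ref{lem:A^TA}, while your CDF/clamped-ramp control of the entries of $\bC(\sum_{j=0}^K\bg_j(r)-\bm{1}_K)$ (each entry of magnitude $\lesssim(1-\gamma)K$, giving $\norm{\bb}\lesssim\sqrt{K}(1-\gamma)$ after the $(K+1)^{-1}$ prefactor) is the standard route for the second claim, correctly avoiding the lossy $\norm{\bC}=\Theta(K)$ estimate. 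The only nit: your ramp computation implicitly carries a $1/\gamma$ factor (the transition region has width $\simeq 1/\gamma$ in $j$), but for $\gamma\leq 1/2$ the trivial entrywise bound $\abs{(\bC\sum_j\bg_j(r))_i - i}\leq K+1\leq 2(1-\gamma)(K+1)$ closes the case, so the universal-constant claim stands.
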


\begin{lemma}\label{lem:A_bilinear}
For any $K\geq (1-\gamma)^{-1}$, we have
\begin{equation*}(1-\sqrt{\gamma})\norm{\btheta}^2_{\bJ}\leq\langle \bar{\bA}\btheta,\btheta \rangle \leq (1+\sqrt{\gamma})\norm{\btheta}^2_{\bJ}.
\end{equation*}
\begin{proof}
Let $\bu = \bJ^{1/2}\btheta$ and note that
\begin{equation*}
\begin{aligned}
&\btheta^{\top}(\bar{\bA}-(1-\sqrt{\gamma})\bJ)\btheta=\btheta^{\top}((1+\sqrt{\gamma})\bJ)-\bar{\bA})\btheta\\
&=\btheta^{\top}(\sqrt{\gamma}\bJ-\EB_{s, r, s^\prime}\brk{\prn{\bC\tilde{\bG}(r)\bC^{-1}}\otimes\prn{\bphi(s)\bphi(s^\prime)^{\top}}})\btheta \geq 0\\
&\iff \sqrt{\gamma}\norm{\btheta}_{\bJ}^2 -\btheta^{\top}\EB_{s, r, s^\prime}\brk{\bY(r)\otimes\prn{\bSigma_{\bphi}^{-\frac{1}{2}}\bphi(s)\bphi(s^\prime)^{\top}\bSigma_{\bphi}^{-\frac{1}{2}}}}\btheta \geq 0\\
&\iff \bu^{\top}\EB_{s, r, s^\prime}\brk{\bY(r)\otimes\prn{\bSigma_{\bphi}^{-\frac{1}{2}}\bphi(s)\bphi(s^\prime)^{\top}\bSigma_{\bphi}^{-\frac{1}{2}}}}\bu \leq \sqrt{\gamma}\norm{\bu}^2.
\end{aligned}
\end{equation*}
The last inequality is an application of Equation 29 in \citet{peng2025finitesampleanalysisdistributional} upper bounding the norm of the expectation term.
\end{proof}
\end{lemma}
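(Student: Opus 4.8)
The plan is to recognize that $\bar{\bA}=\bJ-\bM$ with $\bM:=\EB_{s,r,s'}[(\bC\tilde{\bG}(r)\bC^{-1})\otimes(\bphi(s)\bphi(s')^{\top})]$, so that the two claimed inequalities are exactly the lower and upper halves of the single two-sided quadratic-form bound $\abs{\langle\bM\btheta,\btheta\rangle}\leq\sqrt{\gamma}\norm{\btheta}_{\bJ}^2$. First I would whiten the problem via $\bu=\bJ^{1/2}\btheta$; since $\bSigma_{\bphi}\succ0$ the matrix $\bJ=\bI_K\otimes\bSigma_{\bphi}$ is positive definite, and because $\bJ^{-1/2}=\bI_K\otimes\bSigma_{\bphi}^{-1/2}$ factors cleanly through the Kronecker structure, the target becomes $\abs{\bu^{\top}\widetilde{\bM}\bu}\leq\sqrt{\gamma}\norm{\bu}^2$, where $\widetilde{\bM}=\EB[(\bC\tilde{\bG}(r)\bC^{-1})\otimes(\bSigma_{\bphi}^{-1/2}\bphi(s)\bphi(s')^{\top}\bSigma_{\bphi}^{-1/2})]$. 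It then suffices to control the quadratic form of $\widetilde{\bM}$ by $\sqrt{\gamma}$ in the Euclidean norm.

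The core step is to bound that quadratic form by unfolding the Kronecker product. Reshaping $\bu=\vect(\bU)$ with $\bU\in\RB^{d\times K}$ and writing $\tilde{\bphi}(\cdot)=\bSigma_{\bphi}^{-1/2}\bphi(\cdot)$, the identity $\vect(\bU)^{\top}(A\otimes B)\vect(\bU)=\tr(\bU^{\top}B\bU A^{\top})$ turns each summand into $\bv^{\top}(\bC\tilde{\bG}(r)\bC^{-1})\bw$, where $\bv=\bU^{\top}\tilde{\bphi}(s)$ and $\bw=\bU^{\top}\tilde{\bphi}(s')$. I would then apply Cauchy--Schwarz twice: pointwise in the random tuple, $\abs{\bv^{\top}(\bC\tilde{\bG}(r)\bC^{-1})\bw}\leq\norm{\bv}\,\norm{\bC\tilde{\bG}(r)\bC^{-1}}\,\norm{\bw}\leq\sqrt{\gamma}\,\norm{\bv}\,\norm{\bw}$ by the spectral estimate of Lemma~\ref{lem:Spectra_of_ccgcc}, and then across the expectation, $\EB[\norm{\bv}\norm{\bw}]\leq(\EB\norm{\bv}^2)^{1/2}(\EB\norm{\bw}^2)^{1/2}$.

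It remains to evaluate the two whitened second moments, and here stationarity closes the argument. Since $s\sim\mu_{\pi}$ gives $\EB[\bphi(s)\bphi(s)^{\top}]=\bSigma_{\bphi}$, we obtain $\EB[\tilde{\bphi}(s)\tilde{\bphi}(s)^{\top}]=\bI_d$ and hence $\EB\norm{\bv}^2=\tr(\bU\bU^{\top})=\norm{\bu}^2$; crucially, because $\mu_{\pi}$ is the stationary distribution, the next-state marginal is again $\mu_{\pi}$, so the identical computation yields $\EB\norm{\bw}^2=\norm{\bu}^2$. Chaining the pointwise contraction, the Cauchy--Schwarz step, and these two moment evaluations delivers $\abs{\bu^{\top}\widetilde{\bM}\bu}\leq\sqrt{\gamma}\norm{\bu}^2$, which is precisely the sought two-sided estimate.

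The main obstacle is this core step: one cannot naively invoke $\matrixnorm{A\otimes B}=\norm{A}\norm{B}$ inside the expectation, because the two Kronecker factors $\bC\tilde{\bG}(r)\bC^{-1}$ and $\bphi(s)\bphi(s')^{\top}$ are correlated through the shared sample $(s,r,s')$. The vectorization identity is exactly what decouples the $K$-dimensional ``return'' direction from the $d$-dimensional ``feature'' direction, so that the uniform contraction factor $\sqrt{\gamma}$ and the whitened feature covariance can be applied separately. This is the content of Equation~29 of \citet{peng2025finitesampleanalysisdistributional}, and the hypothesis $K\geq(1-\gamma)^{-1}$ enters only through Lemma~\ref{lem:Spectra_of_ccgcc}, which certifies that the support grid is fine enough for $\bC\tilde{\bG}(r)\bC^{-1}$ to remain a $\sqrt{\gamma}$-contraction for every $r\in[0,1]$.
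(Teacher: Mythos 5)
Your proposal is correct and follows essentially the same route as the paper: decompose $\bar{\bA}=\bJ-\bM$, whiten via $\bu=\bJ^{1/2}\btheta$, and reduce both inequalities to the single bound $\abs{\bu^{\top}\widetilde{\bM}\bu}\leq\sqrt{\gamma}\norm{\bu}^2$. The only difference is that where the paper invokes Equation~29 of \citet{peng2025finitesampleanalysisdistributional} as a black box for this bound, you re-derive it directly via the vectorization identity, the pointwise contraction $\norm{\bC\tilde{\bG}(r)\bC^{-1}}\leq\sqrt{\gamma}$ from Lemma~\ref{lem:Spectra_of_ccgcc}, Cauchy--Schwarz across the expectation, and stationarity of $\mu_\pi$ to evaluate the two whitened second moments --- a correct and self-contained filling-in of the cited step.
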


\begin{lemma}\label{lem:A^TA}
For any $K\geq (1-\gamma)^{-1}$, we have
\begin{equation*}
    \langle \bar{\bA}\btheta,\bar{\bA}\btheta \rangle \leq 2(1+\gamma)\norm{\btheta}^2_{\bJ}.
\end{equation*}
\begin{proof}
Note that Equation 32 in \citet{peng2025finitesampleanalysisdistributional} shows $\bA^{\top}\bA\preccurlyeq2\bI_K\otimes(\bphi(s)\bphi(s)^{\top}+\gamma\bphi(s^\prime)\bphi(s^\prime)^{\top})$. Thus, $ \langle \bar{\bA}\btheta,\bar{\bA}\btheta \rangle\leq\EB[\btheta^{\top}\bA^{\top}\bA\btheta]\leq 2(1+\gamma)\|\btheta\|_{\bJ}^2$.
\end{proof}
\end{lemma}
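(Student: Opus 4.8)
The plan is to reduce the claim to a pointwise second-moment bound on the random matrix $\bA_t$ and then control $\bA_t^{\top}\bA_t$ in the positive semidefinite (PSD) order. Since $\bar{\bA}=\EB[\bA_t]$ and the map $\bx\mapsto\norm{\bx}^2$ is convex, Jensen's inequality gives $\inner{\bar{\bA}\btheta}{\bar{\bA}\btheta}=\norm{\EB[\bA_t]\btheta}^2\leq\EB[\norm{\bA_t\btheta}^2]=\EB[\btheta^{\top}\bA_t^{\top}\bA_t\btheta]$, so it suffices to produce a PSD upper bound for $\EB[\bA_t^{\top}\bA_t]$ in terms of $\bJ=\bI_K\otimes\bSigma_{\bphi}$.

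The core step, and the main obstacle, is the pointwise inequality $\bA^{\top}\bA\preccurlyeq 2\,\bI_K\otimes\bigl(\bphi(s)\bphi(s)^{\top}+\gamma\,\bphi(s^\prime)\bphi(s^\prime)^{\top}\bigr)$ holding for every realization of $(s,s^\prime,r)$. Writing $\bA=\bI_K\otimes(\bphi(s)\bphi(s)^{\top})-(\bC\tilde{\bG}(r)\bC^{-1})\otimes(\bphi(s)\bphi(s^\prime)^{\top})$ and applying the elementary bound $(\bm{X}-\bm{Y})^{\top}(\bm{X}-\bm{Y})\preccurlyeq 2\bm{X}^{\top}\bm{X}+2\bm{Y}^{\top}\bm{Y}$, I would bound the two Kronecker blocks separately using the mixed-product property. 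For the diagonal block, $(\bI_K\otimes\bphi\bphi^{\top})^{\top}(\bI_K\otimes\bphi\bphi^{\top})=\bI_K\otimes(\norm{\bphi}^2\bphi\bphi^{\top})\preccurlyeq\bI_K\otimes\bphi\bphi^{\top}$, where $\norm{\bphi(s)}\leq 1$ is used. For the cross block, setting $\bm{M}=\bC\tilde{\bG}(r)\bC^{-1}$ one computes $(\bm{M}\otimes\bphi\bphi'^{\top})^{\top}(\bm{M}\otimes\bphi\bphi'^{\top})=(\bm{M}^{\top}\bm{M})\otimes(\norm{\bphi}^2\bphi'\bphi'^{\top})$, and Lemma~\ref{lem:Spectra_of_ccgcc} gives $\norm{\bm{M}}\leq\sqrt{\gamma}$, hence $\bm{M}^{\top}\bm{M}\preccurlyeq\gamma\bI_K$ and this block is $\preccurlyeq\gamma\,\bI_K\otimes\bphi'\bphi'^{\top}$. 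The delicate point is keeping the tensor ordering straight: the operator-norm bound must be applied to the $\bm{M}$-factor while $\norm{\bphi}\leq 1$ is applied to the feature factor, so that the PSD monotonicity $0\preccurlyeq P\preccurlyeq Q,\ R\succeq 0\Rightarrow P\otimes R\preccurlyeq Q\otimes R$ can be invoked cleanly. Summing the two blocks yields the pointwise bound. (Equivalently, this is exactly Equation~32 of the conference version, so one may simply cite it and skip directly to taking expectations.)

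Finally, I would take the expectation over $(s,s^\prime,r)$. In the generative model the current state $s$ is drawn from the stationary distribution $\mu_\pi$, and by stationarity the next state $s^\prime$ is marginally distributed as $\mu_\pi$ as well, so $\EB[\bphi(s)\bphi(s)^{\top}]=\EB[\bphi(s^\prime)\bphi(s^\prime)^{\top}]=\bSigma_{\bphi}$. Consequently $\EB[\bA_t^{\top}\bA_t]\preccurlyeq 2\,\bI_K\otimes\bigl((1+\gamma)\bSigma_{\bphi}\bigr)=2(1+\gamma)\bJ$, and combining with the convexity reduction from the first paragraph gives $\inner{\bar{\bA}\btheta}{\bar{\bA}\btheta}\leq 2(1+\gamma)\norm{\btheta}_{\bJ}^2$, as claimed.
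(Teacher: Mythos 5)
Your proposal is correct and follows essentially the same route as the paper: the paper simply cites the pointwise bound $\bA^{\top}\bA\preccurlyeq 2\,\bI_K\otimes\bigl(\bphi(s)\bphi(s)^{\top}+\gamma\bphi(s^\prime)\bphi(s^\prime)^{\top}\bigr)$ from Equation 32 of the conference version and then takes expectations, whereas you re-derive that bound from the Young-type decomposition, the Kronecker mixed-product property, $\|\bphi\|\le 1$, and Lemma~\ref{lem:Spectra_of_ccgcc}, and make the Jensen step $\|\bar{\bA}\btheta\|^2\le\EB\|\bA\btheta\|^2$ explicit. All steps check out, including the use of stationarity to identify $\EB[\bphi(s^\prime)\bphi(s^\prime)^{\top}]$ with $\bSigma_{\bphi}$.
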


\section{Categorical TD learning}\label{appendix:CTD}
\noindent In this section, we describe the basics of categorical temporal difference learning and derive corresponding upper bound on the approximation error.
\subsection{Categorical Parameterization and Tabular Categorical TD Learning}
One way to deal with return distributions in a computationally tractable manner is applying the categorical parameterization as in \citet{bellemare2017distributional,rowland2018analysis,rowland2024nearminimaxoptimal,peng2024statistical} to the return distributions.
To be compatible with linear function approximation introduced in the next section, which cannot guarantee non-negative outputs, we will work with $\sP^{\sgn}$, the signed measure space with total mass $1$ as in \citet{bellemare2019distributional,lyle2019comparative,bdr2022} instead of standard probability space $\sP\subset \sP^{\sgn}$:
\begin{equation*}
    \sP^{\sgn}:= \left\{\nu\colon\nu(\RB)=1,\operatorname{supp}(\nu)\subseteq \left[0,(1-\gamma)^{-1}\right] \right\}.
\end{equation*} 
For any $\nu\in\sP^{\sgn}$, we define its cumulative distribution function (CDF) as $F_\nu(x):=\nu([0,x])$. 
We can naturally define the $L^2$ and $L^1$ distances between CDFs as the Cram\'er distance $\ell_2$ and $1$-Wasserstein distance $W_1$ in $\sP^{\sgn}$, respectively.
The distributional Bellman operator (see Equation~\eqref{eq:distributional_Bellman_equation}) can also be extended to the product space $(\sP^{\sgn})^{\gS}$ without modifying its definition.

The space of all categorical parameterized signed measures with total mass $1$ is defined as
\begin{equation*}
    \sP^{\sgn}_K := \{ \nu_\bp=\textstyle\sum_{k=0}^K p_k \delta_{x_k} \colon  \bp=\prn{p_0, \ldots, p_{K{-}1}}^{\top}\in \RB^{K}, p_K=1-
    \textstyle\sum_{k=0}^{K-1}p_k \}, 
\end{equation*}
which is an affine subspace of $\sP^{\sgn}$. Here 
$\{x_k{=}k\iota_K\}_{k{=}0}^K$ are $K{+}1$ equally-spaced points of the support, 
$\iota_K{=}[K(1{-}\gamma)]^{{-}1}$ is the gap between adjacent points,
and $p_k$ is the `probability' (may be negative) that $\nu$ assigns to $x_k$.
We define the categorical
projection operator $\bPi_{K}{\colon}\sP^{\sgn}{\to}\sP^{\sgn}_K$ as
\begin{equation*}
\bPi_{K}\nu:=\argmin\nolimits_{\nu_\bp\in\sP^{\sgn}_{K}}\ell_{2}\prn{\nu, \nu_\bp},\quad \forall\nu\in \sP^{\sgn}. 
\end{equation*}
Following Proposition 5.14 in \citet{bdr2022}, one can show that $\bm{\Pi}_K\nu\in\sP^{\sgn}_K$ is uniquely represented with a vector $\bp_\nu=(p_k(\nu))_{k=0}^{K-1}\in\RB^K$, where
\begin{equation*}
     p_k(\nu)=\textstyle\int_{\brk{0,(1-\gamma)^{-1}}}(1-\abs{(x-x_k)/{\iota_K}})_+ \nu(dx).
\end{equation*}
We lift $\bPi_K$ to the product space by defining
$[\bm{\Pi}_K{\bm{\eta}}](s) := \bm{\Pi}_K\eta(s)$.
Recall that the $\mu_\pi$-weighted Cram\'er distance between $\bm{\eta}_1,\bm{\eta}_2\in(\sP^{\sgn})^\gS$ is defined as $\ell_{2,\mu_\pi}(\bm{\eta}_1,\bm{\eta}_2)=(\EB_{s\sim\mu_{\pi}}[\ell_2^2(\eta_1(s),\eta_2(s))])^{1/2}$.
One can check that the categorical Bellman operator $\bPi_{K}{\gT}^{\pi}$ is a $\sqrt\gamma$-contraction in the Polish space $((\sP_K^{\sgn})^\gS,\ell_{2,\mu_\pi})$. 
Hence, the categorical projected Bellman equation $\bm{\eta}=\bPi_{K}\gT^{\pi}\bm{\eta}$ admits a unique solution $\bm{\eta}^{\pi,K}$, which satisfies $W_{1,\mu_\pi}(\bm{\eta}^\pi,\bm{\eta}^{\pi,K})\leq(1{-}\gamma)^{-1}\ell_{2,\mu_\pi}(\bm{\eta}^\pi,\bPi_K\bm{\eta}^{\pi})$ ( Proposition~3 in \citep{rowland2018analysis}).
Applying linear stochastic approximation to solving the equation yields tabular categorical TD learning, and the iteration rule is given by
\begin{equation*}
    \eta_{t}(s_t)\gets\eta_{t-1}(s_{t})-\alpha[\eta_{t-1}(s_{t})-\bPi_K\prn{b_{r_t,\gamma}}_\#\eta_{t-1}(s_{t+1})],\quad \eta_t(s)\gets \eta_{t-1}(s),\ \forall s\neq s_t.
\end{equation*}

\subsection{Zero-Mass Signed Measure Space}
To evaluate the approximation error of the solution $\bm{\eta}^{\pi,K}$ of the categorical projected Bellman equation, 
we need to first put the distributions of interest into a Hilbert space $(\gM^\gS,\langle\cdot\rangle{\cdot}_{\mu_\pi})$ (for simplicity, we denote by $\gM^\gS$ itself its completion space), where $\gM$ is the zero-mass signed measure space defined as follows.
\begin{equation*}
    \gM:= \left\{\nu\colon\abs{\nu}(\RB)< \infty ,\nu(\RB)=0,\text{supp}(\nu)\subseteq \brk{0,\frac{1}{1{-}\gamma} } \right\},
\end{equation*} 
and
\begin{equation*}
    \inner{\bm{\eta}}{\bm{\eta}^\prime}_{\mu_\pi}:=\EB_{s\sim\mu_\pi}\brk{\int_{0}^{\frac{1}{1-\gamma}}F_{\eta(s)}(x)F_{\eta^\prime(s)}(x)d x}.
\end{equation*} 
Due to the linearity of the distributional Bellman operator $\gT^\pi$, we can rewrite the distributional Bellman equation in the Hilbert space as
\begin{equation*}
    \bm{\eta}^\pi=\gT^\pi\bm{\eta}^\pi \Leftrightarrow \bm{\eta}^\pi-\bxi=\gT^\pi(\bm{\eta}^\pi-\bxi)+(\gT^\pi\bxi-\bxi),
\end{equation*}
where $\bxi:=((K+1)^{-1}\sum_{k=0}^K \delta_{x_k})_{s\in\gS}$ is the vector of discrete uniform distribution. 
Since $\|\gT^\pi\|_{\mu_\pi}\leq\sqrt{\gamma}$, $\bm{\eta}^\pi-\bxi$ is the unique solution to the following fixed-point equation for $\bm{\eta}\in\gM^\gS$:
\begin{equation*}
    \bm{\eta}=\gT^\pi\bm{\eta}+(\gT^\pi\bxi-\bxi).
\end{equation*}
\subsection{Approximation Error of Categorical TD learning}
Recall the space of all categorical parameterized signed measures with total mass $1$
\begin{equation*}
    \sP^{\sgn}_K := \left\{ \nu_\bp=\sum_{k=0}^K p_k \delta_{x_k} \colon  \bp=\prn{p_0, \ldots, p_{K-1}}^{\top}\in \RB^K, p_K=1-\sum_{k=0}^{K-1}p_k \right\}.
\end{equation*}
Similarly, one can define 
\begin{equation*}
    \gM_{K} := \left\{ \nu_\bp=\sum_{k=0}^K p_k \delta_{x_k} \colon  p_0, \ldots, p_{K}\in \RB, \sum_{k=0}^{K}p_k=0 \right\}. 
\end{equation*}
It is easy to check that $\gM_{K}$ is a linear subspace of $\gM$ with dimension $K$.
Specifically,
\begin{equation*}
    \gM_{K}=\operatorname{span}\brc{\delta_{x_0}-\delta_{x_K},\ldots,\delta_{x_{K-1}}-\delta_{x_K}}.
\end{equation*}
The inner product in $\gM$ is defined as
\begin{equation*}
    \inner{\nu}{\nu^\prime}:=\int_{0}^{\frac{1}{1-\gamma}}F_{\nu}(x)F_{\nu^\prime}(x)d x.
\end{equation*}
It is more convenient to work with orthonormal basis, one can check that for any $i,j\in\{0,1,\ldots,K\}$, it holds that
\begin{align*}
    \inner{\delta_{x_i}}{\delta_{x_j}}=&\frac{1}{1-\gamma}\int_{0}^{1}\ind\brc{x\geq \frac{i}{K}}\ind\brc{x\geq \frac{j}{K}} dx\\
    =&\frac{1}{1-\gamma}\prn{1-\frac{\max\brc{i,j}}{K}},
\end{align*}
then
\begin{align*}
    \inner{\delta_{x_i}-\delta_{x_K}}{\delta_{x_j}-\delta_{x_K}}=&\inner{\delta_{x_i}}{\delta_{x_j}}-\inner{\delta_{x_i}}{\delta_{x_K}}-\inner{\delta_{x_j}}{\delta_{x_K}}+\inner{\delta_{x_K}}{\delta_{x_K}}\\
    =&\frac{1}{K(1-\gamma)}\prn{K+K-\max\brc{i,j}-K} \\
    =&\frac{1}{K(1-\gamma)}\prn{K-\max\brc{i,j}}.
\end{align*}
Let $\bSigma_K:=( \langle\delta_{x_i}-\delta_{x_K},\delta_{x_j}-\delta_{x_K}\rangle)_{i,j\in\{0,1,\ldots, K-1}\}\in\RB^{K\times K}$ be the Gram matrix, then we have
\begin{equation*}
    \bSigma_K=\frac{1}{K(1-\gamma)}\bC^{\top}\bC,
\end{equation*}
and $(\nu_1,\ldots,\nu_{K})$ is an orthonormal basis of $\gM_K$ where
\begin{equation*}
    \begin{bmatrix} \nu_1\\ \vdots\\ \nu_K \end{bmatrix}=\bSigma_K^{-\frac{1}{2}}\begin{bmatrix} \delta_{x_0}-\delta_{x_K}\\ \vdots\\ \delta_{x_{K-1}}-\delta_{x_K} \end{bmatrix}=\sqrt{K(1-\gamma)}\prn{\bC^{\top}\bC}^{-\frac{1}{2}}\begin{bmatrix} \delta_{x_0}-\delta_{x_K}\\ \vdots\\ \delta_{x_{K-1}}-\delta_{x_K} \end{bmatrix}.
\end{equation*}

In the following, we will only consider the tabular case, that is, $\gS$ is a finite set.
Now, we consider the product space $\gM_{K}^\gS$, which is a linear subspace of $\gM^\gS$ with dimension $\gS\times K$.
Specifically,
\begin{equation*}
    \gM_{K}^\gS=\brc{\operatorname{span}\brc{\delta_{x_0}-\delta_{x_K},\ldots,\delta_{x_{K-1}}-\delta_{x_K}}}^\gS.
\end{equation*}

We first compute $\tilde\bUp_K$, the matrix representation of the linear operator $\bPi_K\gT^\pi$ in $\gM_K^S$ in view of this basis. In particular, the categorical projection operator $\bPi_{K}{\colon}\sP^{\sgn}{\to}\sP^{\sgn}_K$ is defined as
\begin{equation*}
\bPi_{K}\nu:=\argmin\nolimits_{\nu_\bp\in\sP^{\sgn}_{K}}\ell_{2}\prn{\nu, \nu_\bp},\quad \forall\nu\in \sP^{\sgn}. 
\end{equation*}
According to the proof of Proposition~E.3, we have for any $\bm{\eta}\in\gM^\gS_K$
\begin{align*}
 p_k\prn{\brk{\gT^{\pi}\bm{\eta}}(s)}=&\EB_{X\sim \brk{\gT^{\pi}\bm{\eta}}(s)}\brk{\prn{1-\abs{\frac{X-x_k}{\iota_K}}}_+}\\
=&\EB\brk{\EB_{G\sim \eta(s_1)}\brk{\prn{1-\abs{\frac{r_0+\gamma G-x_k}{\iota_K}}}_+}\Big| s_0=s }\\
=&\EB\brk{\sum_{j=0}^K p_{\eta(s_1)}(j)\prn{1-\abs{\frac{r_0+\gamma x_j-x_k}{\iota_K}}}_+\Big| s_0=s }\\
=&\EB\brk{\sum_{j=0}^Kp_{\eta(s_1)}(j)g_{j,k}(r_0)\Big| s_0=s }\\
=&\EB\brk{\sum_{j=0}^{K-1}p_{\eta(s_1)}(j)\prn{g_{j,k}(r_0)-g_{K,k}(r_0)}\Big| s_0=s },
\end{align*}
that is,
\begin{align*}
     \bp_{\gT^\pi\bm{\eta}}(s)=&\EB\brk{\prn{\bG(r_0)-\bm{1}_K^{\top}\otimes\bg_K(r_0)}  \bp_{\bm{\eta}}(s_1)  \Big| s_0=s }\\
     =&\EB\brk{\tilde{\bG}(r_0) \bp_{\bm{\eta}}(s_1)  \Big| s_0=s }.
\end{align*}
Hence, for any $\bz\in\RB^{\gS\times K}$
\begin{equation*}
     \tilde\bUp_K(s) \bz:=(\tilde\bUp_K \bz)(s)=\EB\brk{\tilde{\bG}(r_0)  \bz(s_1)  \Big| s_0=s },\quad  \tilde\bUp_K\bz=\prn{\tilde\bUp_K(s) \bz}_{s\in\gS}=\prn{\EB\brk{\tilde{\bG}(r_0)  \bz(s_1)  \Big| s_0=s }}_{s\in\gS}.
\end{equation*}
It is easy to check that
\begin{equation*}
    \bigtimes_{s\in\gS}\operatorname{span}\brc{\mu_\pi(s)\nu_1,\ldots,\mu_\pi(s)\nu_K}
\end{equation*}
is an orthonormal basis (not rigorous) of $\gM_K^\gS$.
Without loss of generality, we assume $\min_{s\in\gS}\mu_\pi(s)>0$.
Let $\bLamb:=\operatorname{diag}(\mu_\pi(s))_{s\in\gS}$, then the Gram matrix is
\begin{equation*}
    \bLamb\otimes \bSigma_K= \frac{\bLamb\otimes (\bC^{\top}\bC)}{K(1-\gamma)}.
\end{equation*}
Here, $\bC^{\top}\bC$ appears in the RHS because we use a different parameter permutation.
and the transition matrix is 
\begin{equation*}
    (\bLamb\otimes \bSigma_K)^{-\frac{1}{2}}= \sqrt{K(1-\gamma)}{\bLamb^{-\frac{1}{2}}\otimes (\bC^{\top}\bC)^{-\frac{1}{2}}}.
\end{equation*}
Then the matrix representation of the linear operator $\bPi_K\gT^\pi$ in $\gM_K^S$ under the orthonormal basis is
\begin{equation*}
    \bUp_K=\prn{\bLamb^{\frac{1}{2}}\otimes (\bC^{\top}\bC)^{\frac{1}{2}}}\tilde\bUp_K\prn{\bLamb^{-\frac{1}{2}}\otimes (\bC^{\top}\bC)^{-\frac{1}{2}}}.
\end{equation*}
Note that $\bxi=((K+1)^{-1}\sum_{k=0}^K \delta_{x_k})_{s\in\gS}\in(\sP^{\sgn}_K)^\gS \Rightarrow \bxi=\bPi_{K}\bxi$, we have $\bm{\eta}^{\pi,K}-\bxi$ is the unique solution to the projected fixed-point equation for $\bm{\eta}\in\gM_{K}^\gS$:
\begin{equation*}
    \bm{\eta}=\bPi_{K}\prn{\gT^\pi\bm{\eta}+(\gT^\pi\bxi-\bxi)}.
\end{equation*}
One can check that
\begin{equation*}
    \norm{(\bm{\eta}^{\pi,K}-\bxi)-(\bm{\eta}^\pi-\bxi)}_{\mu_\pi}=\ell_{2,\mu_{\pi}}\prn{\bm{\eta}^{\pi,K},\bm{\eta}^\pi}.
\end{equation*}
\begin{equation*}
    \norm{\bPi_{K}(\bm{\eta}^\pi-\bxi)-(\bm{\eta}^\pi-\bxi)}_{\mu_\pi}=\ell_{2,\mu_{\pi}}\prn{\bPi_{K}\bm{\eta}^\pi,\bm{\eta}^\pi}.
\end{equation*}
Applying the upper bounds derived in \citet{yu10error,doi:10.1287/moor.2022.1341}, we have the following upper bounds.
\begin{align}
\ell_{2,\mu_{\pi}}^2\prn{\bm{\eta}^{\pi,K},\bm{\eta}^\pi}\leq&\prn{1+\norm{(\bm{\gI}-\bPi_{K}\gT^\pi)^{-1}\bPi_{K}\gT^\pi(\bm{\gI}-\bPi_{K})}_{\mu_\pi}^2}   \ell_{2,\mu_{\pi}}^2\prn{\bPi_{K}^{\pi}\bm{\eta}^\pi,\bm{\eta}^\pi}\label{eq:ctd_tight_but_unknown_approx_error}\\
    \leq&\prn{1+\norm{(\bI_{\gS\times K}-\bUp_K)^{-1}(\gamma\bI_{\gS\times K}-\bUp_K\bUp_K^{\top})(\bI_{\gS\times K}-\bUp_K)^{-\top}}}\ell_{2,\mu_{\pi}}^2\prn{\bPi_{K}\bm{\eta}^\pi,\bm{\eta}^\pi}\label{eq:ctd_a_bit_loose_approx_error}\\
    \leq&\frac{1}{1-\gamma}\ell_{2,\mu_{\pi}}^2\prn{\bPi_{K}\bm{\eta}^\pi,\bm{\eta}^\pi},\label{eq:ctd_worst_case_approx_error}
\end{align}
where the upper bound in Equation~\eqref{eq:ctd_tight_but_unknown_approx_error} is tight but not accessible because it involves infinite-dimensional objects $\gT^\pi(\bm{\gI}-\bPi_{K})$, while the looser upper-bound in Equation~\eqref{eq:ctd_a_bit_loose_approx_error} is accessible if the MDP is known and is better than the worst-case upper-bound in Equation~\eqref{eq:ctd_worst_case_approx_error}.
\section{Proofs in Section~\ref{Section:instance-dependent_analysis}}\label{appendix:proof_instance_dependent}
\noindent In this appendix, we provide proofs for the instance-dependent analysis in Section~\ref{Section:linear_ctd}.
\subsection{Approximation Error}\label{sub:proof_fine-grained}
To prove Proposition~\ref{prop:fine_grained_approx_error}, recall the space of linear-categorical parameterized signed measures with total mass $1$:
\begin{equation*}
    \sP^{\sgn}_{\bphi,K} := \left\{\bm{\eta}_{\btheta}=\prn{\eta_{\btheta}(s)}_{s\in\gS}\colon \eta_{\btheta}(s)=\textstyle\sum_{k=0}^Kp_k(s;\btheta)\delta_{x_k}, \btheta=(\btheta(0)^{\top}, \ldots, \btheta(K{-}1)^{\top})^{\top}\in\RB^{dK} \right\},
\end{equation*}
where $p_k(s;\btheta){=}F_k(s;\btheta){-}F_{k{-}1}(s;\btheta)$ and the CDF of $\eta_{\btheta}(s)$ at $x_k$ is given by 
\begin{equation*}
            F_k(s;\btheta)= \begin{cases}             0, & \text{ for}\ k=-1, \\
            \bphi(s)^{\top}\btheta(k)+\frac{k+1}{K+1}, & \text{ for}\ k\in\brc{0, 1, \ldots, K-1}, \\
            1, & \text{ for}\ k=K.
            \end{cases}
\end{equation*}

Similarly, one can define 
\begin{equation*}
    \gM_{\bphi,K} := \left\{\bm{\eta}_{\btheta}=\prn{\eta_{\btheta}(s)}_{s\in\gS}\colon \eta_{\btheta}(s)=\textstyle\sum_{k=0}^Kp_k(s;\btheta)\delta_{x_k}, \btheta=(\btheta(0)^{\top}, \ldots, \btheta(K{-}1)^{\top})^{\top}\in\RB^{dK} \right\},
\end{equation*}
where $p_k(s;\btheta){=}F_k(s;\btheta){-}F_{k{-}1}(s;\btheta)$ and
\begin{equation*}
    F_k(s;\btheta)=
    \begin{cases}             0, & \text{ for}\ k\in\brc{-1,K},\\
    \bphi(s)^{\top}\btheta(k), & \text{ for}\ k\in\brc{0, 1, \ldots, K-1},
    \end{cases}
\end{equation*}
is CDF of $\eta_{\btheta}(s)$ at $x_k$.
It is easy to check that $\gM_{\bphi,K}$ is a linear subspace of $\gM^\gS$ with dimension $dK$.
Specifically, we denote by $\bphi_i\colon \gS\to \RB$ the $i$-th coordinate of $\bphi$, that is, $(\bphi(s))_i=\bphi_i(s)$, then $\bphi_1,\ldots,\bphi_d$ are $d$ linearly independent vectors in $\RB^\gS$, and for any $\nu\in\gM$, $\bphi_i \nu\in\gM^\gS$ is defined by $(\bphi_i \nu)(s):=\bphi_i(s)\nu$.
With these notations, we have a basis corresponding to $\bw\in\RB^{dK}$:
\begin{equation*}
    \gM_{\bphi,K}=\operatorname{span}\brc{\bphi_i(\delta_{k-1}-\delta_K)}_{i\in[d],k\in[K]}.
\end{equation*}
We first compute $\tilde\bUp_K$, the matrix representation of the linear operator $\bPi_{\bphi, K}^{\pi}\gT^\pi$ in $\gM_{\bphi,K}$ on this basis.
According to the proof of Proposition~E.3, we have for any $\bm{\eta}_{\btheta}\in\gM_{\bphi,K}$
\begin{align*}
p_k\prn{\brk{\gT^{\pi}\bm{\eta}_\btheta}(s)}=&\EB_{X\sim \brk{\gT^{\pi}\bm{\eta}_\btheta}(s)}\brk{\prn{1-\abs{\frac{X-x_k}{\iota_K}}}_+}\\
=&\EB\brk{\EB_{G\sim \eta_{\btheta}(s_1)}\brk{\prn{1-\abs{\frac{r_0+\gamma G-x_k}{\iota_K}}}_+}\Big| s_0=s }\\
=&\EB\brk{\sum_{j=0}^Kp_j(s_1;
\btheta)\prn{1-\abs{\frac{r_0+\gamma x_j-x_k}{\iota_K}}}_+\Big| s_0=s }\\
=&\EB\brk{\sum_{j=0}^Kp_j(s_1;
\btheta)g_{j,k}(r_0)\Big| s_0=s }\\
=&\EB\brk{\sum_{j=0}^{K-1}p_j(s_1;
\btheta)\prn{g_{j,k}(r_0)-g_{K,k}(r_0)}\Big| s_0=s }.
\end{align*}
That is,
\begin{align*}
     \bp_{\gT^\pi\bm{\eta}_{\bm{\theta}}}(s)=&\EB\brk{\prn{\bG(r_0)-\bm{1}_K^{\top}\otimes\bg_K(r_0)}  \bp_{\btheta}(s_1)\Big| s_0=s }\\
     =&\EB\brk{\tilde{\bG}(r_0) \otimes \bphi(s_1)^{\top}\Big| s_0=s }\bw\\
     =&\EB\brk{\tilde{\bG}(r_0) \bW^{\top} \bphi(s_1)\Big| s_0=s }.
\end{align*}
According to Proposition 3.1 in \citet{peng2025finitesampleanalysisdistributional}, we have $\bPi_{\bphi, K}^{\pi}\gT^{\pi}\bm{\eta}_{\bm{\theta}}=\bm{\eta}_{\tilde{\bm{\theta}}}$ where
\begin{align*}
\tilde\bW=&\bSigma_{\bphi}^{-1}\EB_{s\sim\mu_{\pi}}\brk{\bphi(s)\bp_{\gT^{\pi}\bm{\eta}_{\bm{\theta}}}(s)^{\top}}\\
=&\bSigma_{\bphi}^{-1}\EB_{s\sim\mu_{\pi}}\brk{\bphi(s)\bphi(s^\prime)^\top \bW\tilde{\bG}(r)^{\top}},
\end{align*}
or equivalently
\begin{align*} \tilde\bw=&\vect\prn{\EB_{s\sim\mu_{\pi}}\brk{\bSigma_{\bphi}^{-1}\bphi(s)\bphi(s^\prime)^\top \bW\tilde{\bG}(r)^{\top}}}\\
     =&\EB_{s\sim\mu_{\pi}}\brk{\tilde{\bG}(r) \otimes \prn{\bSigma_{\bphi}^{-1}\bphi(s)\bphi(s^\prime)^\top}}\bw.
\end{align*}
Hence, 
\begin{align*}
     \tilde\bUp_K =\EB_{s\sim\mu_{\pi}}\brk{\tilde{\bG}(r) \otimes \prn{\bSigma_{\bphi}^{-1}\bphi(s)\bphi(s^\prime)^\top}}.
\end{align*}
It is easy to check that
\begin{equation*}
    \brc{\prn{\bSigma_{\bphi}^{-\frac{1}{2}}\bphi}_i\nu_k}_{i\in[d],k\in[K]}
\end{equation*}
is an orthonormal basis of $\gM_{\bphi,K}$.
The Gram matrix is
\begin{equation*}
    \bSigma_K\otimes\bSigma_{\bphi} = \frac{(\bC^{\top}\bC)\otimes\bSigma_{\bphi}  }{K(1-\gamma)},
\end{equation*}
and the transition matrix is 
\begin{equation*}
    (\bSigma_K\otimes\bSigma_{\bphi})^{-\frac{1}{2}}= \sqrt{K(1-\gamma)}(\bC^{\top}\bC)^{-\frac{1}{2}}\otimes\bSigma_{\bphi}^{-\frac{1}{2}}.
\end{equation*}
Then the matrix representation of the linear operator $\bPi_{\bphi, K}^{\pi}\gT^\pi$ in $\gM_{\bphi,K}$ in the orthonormal basis is
\begin{align}
\bUp_K=&\prn{(\bC^{\top}\bC)^{\frac{1}{2}}\otimes\bSigma_{\bphi}^{\frac{1}{2}}}\tilde\bUp_K\prn{(\bC^{\top}\bC)^{-\frac{1}{2}}\otimes\bSigma_{\bphi}^{-\frac{1}{2}}} \notag\\
=&\EB_{s\sim\mu_{\pi}}\brc{\brk{(\bC^{\top}\bC)^{\frac{1}{2}}\tilde{\bG}(r)(\bC^{\top}\bC)^{-\frac{1}{2}}} \otimes \prn{\bSigma_{\bphi}^{-\frac{1}{2}}\bphi(s)\bphi(s^\prime)^\top\bSigma_{\bphi}^{-\frac{1}{2}}}}.\label{eq:Upsilon_K}
\end{align}
Note that $\bxi=\bm{\eta}_{\bm{0}_{dK}}\in\sP^{\sgn}_{\bphi,K} \Rightarrow \bxi=\bPi_{\bphi, K}^{\pi}\bxi$, we have that $\bm{\eta}_{\btheta^\star}-\bxi$ is the unique solution to the projected fixed-point equation for $\bm{\eta}\in\gM_{\bphi,K}$:
\begin{equation*}
    \bm{\eta}=\bPi_{\bphi, K}^{\pi}\prn{\gT^\pi\bm{\eta}+(\gT^\pi\bxi-\bxi)}
\end{equation*}
One can check that
\begin{equation*}
    \norm{(\bm{\eta}_{\btheta^\star}-\bxi)-(\bm{\eta}^\pi-\bxi)}_{\mu_\pi}=\ell_{2,\mu_{\pi}}\prn{\bm{\eta}_{\btheta^{\star}},\bm{\eta}^\pi}.
\end{equation*}
\begin{equation*}
    \norm{\bPi_{\bphi, K}^{\pi}(\bm{\eta}^\pi-\bxi)-(\bm{\eta}^\pi-\bxi)}_{\mu_\pi}=\ell_{2,\mu_{\pi}}\prn{\bPi_{\bphi, K}^{\pi}\bm{\eta}^\pi,\bm{\eta}^\pi}.
\end{equation*}
Applying upper bounds derived in \citet{yu10error,doi:10.1287/moor.2022.1341}, we have the following upper bounds.
\begin{align}
    \ell^2_{2,\mu_{\pi}}\prn{\bm{\eta}_{\btheta^{\star}},\bm{\eta}^\pi}\leq&\prn{1+\norm{(\bm{\gI}-\bPi_{\bphi, K}^{\pi}\gT^\pi)^{-1}\bPi_{\bphi, K}^{\pi}\gT^\pi(\bm{\gI}-\bPi_{\bphi, K}^{\pi})}_{\mu_\pi}^2}   \ell^2_{2,\mu_{\pi}}\prn{\bPi_{\bphi, K}^{\pi}\bm{\eta}^\pi,\bm{\eta}^\pi}\label{eq:tight_but_unknown_approx_error}\\
    \leq&\prn{1+\norm{(\bI_{dK}-\bUp_K)^{-1}(\gamma\bI_{dK}-\bUp_K\bUp^{\top}_K)(\bI_{dK}-\bUp_K)^{-\top}}}\ell^2_{2,\mu_{\pi}}\prn{\bPi_{\bphi, K}^{\pi}\bm{\eta}^\pi,\bm{\eta}^\pi}\label{eq:a_bit_loose_approx_error}\\
     \leq&\frac{1}{1-\gamma}\ell_{2,\mu_{\pi}}^2\prn{\bPi_{\bphi,K}^\pi\bm{\eta}^\pi,\bm{\eta}^\pi},\label{eq:worst_case_approx_error}
\end{align}
where $\bUp_K$ is defined in Equation~\eqref{eq:Upsilon_K}.
Among these bounds, the upper bound in Equation~\eqref{eq:tight_but_unknown_approx_error} is tight but not accessible because it involves infinite-dimensional objects $\gT^\pi(\bm{\gI}-\bPi_{\bphi,K}^\pi)$, while the looser upper bound in Equation~\eqref{eq:a_bit_loose_approx_error} is accessible if the MDP is known and is better than the worst-case upper-bound in Equation~\eqref{eq:worst_case_approx_error}. This parallels with the ending remarks for the approximation error of categorical TD learning in Appendix~\ref{appendix:CTD}.

\subsection{Proof of Instance-dependent Estimation Error of \LCTD}
\paragraph{Proof of Theorem~\ref{thm:leading_term}}
Recall that $\widetilde{\bSigma}_{\be}:=\bJ^{1 / 2} \bar{\bA}^{-1} \bSigma_{\be} \bar{\bA}^{-\top} \bJ^{1 / 2}$ and $\bJ= \bI_K\otimes\bSigma_{\bphi}$. Let $\mathrm{g}= (1-\sqrt{\gamma})^{-1}$. We have proved in Equation 28 in \citet{peng2025finitesampleanalysisdistributional} that
\begin{align*}
    \prn{\bI_K\otimes\bSigma_{\bphi}^{-\frac{1}{2}}}\bar{\bA}^{\top}\prn{\bI_K\otimes\bSigma_{\bphi}^{-1}}\bar{\bA}\prn{\bI_K\otimes\bSigma_{\bphi}^{-\frac{1}{2}}}\succcurlyeq \prn{1-\sqrt\gamma}^2 \bI_{dK}.
\end{align*}
Thus,
\begin{equation}\label{eq:assum_1}
\bJ^{\frac{1}{2}} \bar{\bA}^{-\top} \bJ \bar{\bA}^{-1} \bJ^{\frac{1}{2}} \preccurlyeq\mathrm{g}^2 \bI.
\end{equation}
Let $\omega = \sqrt{2(1+\gamma)/\lambda_{\min}}$, by the inequality $\EB[\btheta^{\top}\bA^{\top}\bA\btheta]\leq 2(1+\gamma)\|\btheta\|_{\bJ}^2$ in the proof of Lemma~\ref{lem:A^TA},
\begin{equation}\label{eq:assum_2}
\EB\brk{\bA^{\top} \bJ^{-1} \bA} \preccurlyeq \frac{1}{\lambda_{\min}}\EB\brk{\bA^{\top}\bA}\preccurlyeq\omega^2 \bJ.
\end{equation}
Let $\varrho = (1+\sqrt{\gamma})$, then we have
\begin{align}
\tr\left(\bSigma_{\be}\right) & =\tr\left(\bar{\bA}^{\top} \bJ^{-\frac{1}{2}} \bJ^{\frac{1}{2}} \bar{\bA}^{-\top} \bSigma_{\varepsilon} \bar{\bA}^{-1} \bJ^{\frac{1}{2}} \bJ^{-\frac{1}{2}} \bar{\bA}\right) \notag\\
&= \tr\left(\bJ^{-\frac{1}{2}} \bar{\bA} \bJ^{-\frac{1}{2}} \bJ \bJ^{-\frac{1}{2}} \bar{\bA}^{\top} \bJ^{-\frac{1}{2}} \bJ^{\frac{1}{2}} \bar{\bA}^{-\top} \bSigma_{\be} \bar{\bA}^{-1} \bJ^{\frac{1}{2}}\right) \notag\\
&\leq\left\|\bJ^{-\frac{1}{2}} \bar{\bA} \bJ^{-\frac{1}{2}} \bJ \bJ^{-\frac{1}{2}} \bar{\bA}^{\top} \bJ^{-\frac{1}{2}}\right\| \tr\left(\bJ^{\frac{1}{2}} \bar{\bA}^{-\top} \bSigma_{\varepsilon} \bar{\bA}^{-1} \bJ^{\frac{1}{2}}\right) \notag\\
&\leq \norm{\bJ^{-\frac{1}{2}} \bar{\bA} \bJ^{-\frac{1}{2}}}^2\norm{\bJ}\tr\left(\bJ^{\frac{1}{2}} \bar{\bA}^{-\top} \bSigma_{\varepsilon} \bar{\bA}^{-1} \bJ^{\frac{1}{2}}\right).\label{eq:two_trace_derivation}
\end{align}
Recall that $\|\bJ\|\leq 1$ 
and $\bar{\bA}=\bJ-\EB_{s, r, s^\prime}[(\bC\tilde{\bG}(r)\bC^{-1})\otimes(\bphi(s)\bphi(s^\prime)^{\top})]$.
Thus, by Equation 29 in \citet{peng2025finitesampleanalysisdistributional},
\begin{align*}
    \norm{\bJ^{-\frac{1}{2}} \bar{\bA} \bJ^{-\frac{1}{2}}} &= \norm{\bI_{dK}- \EB_{s, r, s^\prime}\brk{\prn{\bC\tilde{\bG}(r)\bC^{-1}}\otimes\prn{\bSigma_{\bphi}^{-\frac{1}{2}}\bphi(s)\bphi(s^\prime)^{\top}\bSigma_{\bphi}^{-\frac{1}{2}}}}}\\
    &\leq 1+\norm{ \EB_{s, r, s^\prime}\brk{\prn{\bC\tilde{\bG}(r)\bC^{-1}}\otimes\prn{\bSigma_{\bphi}^{-\frac{1}{2}}\bphi(s)\bphi(s^\prime)^{\top}\bSigma_{\bphi}^{-\frac{1}{2}}}}}\\
    &\leq 1+\sqrt{\gamma}.
\end{align*}
Substituting into Equation~\eqref{eq:two_trace_derivation}, we obtain the following inequality between two traces.
\begin{equation}\label{eq:assum_3}
\tr(\bSigma_{\be}) \leq \varrho^2 \tr(\widetilde{\bSigma}_{\be}).
\end{equation}
Applying Theorem~14 in \citet{samsonov2024improved}, collecting Equation~\eqref{eq:assum_1}, Equation~\eqref{eq:assum_2} and Equation~\eqref{eq:assum_3} we have for any $\alpha \in (0,(1-\sqrt\gamma)/256)$ and $a=(1-\sqrt\gamma)\lambda_{\min}/2$

\begin{align*}
    \EB\brk{\left\|\bar{\btheta}_T-\btheta^{\star}\right\|_{\bJ}^2} &\lesssim  \frac{\tr(\widetilde{\bSigma}_{\be})}{T}+\frac{ \mathrm{g}^2 \tr(\bSigma_{\be})}{a T}\left(\frac{\left\|\bJ^{-\frac{1}{2}}\right\|^2}{\alpha T}+\omega^2\left\|\bJ^{\frac{1}{2}}\right\|^2 \alpha\right) \\
    & \quad + \mathrm{g}^2(1-\alpha a)^T\left(\frac{\left\|\bJ^{-\frac{1}{2}}\right\|^2}{\alpha^2 T^2}+\frac{\omega^2\left\|\bJ^{\frac{1}{2}}\right\|^2}{\alpha a T^2}\right)\left\|\btheta_0-\btheta^{\star}\right\|^2.
\end{align*}
Using Lemma~\ref{lem:tr_cov_sigma_e} to bound the trace $\tr(\bSigma_{\be})$, one can translate this result into Theorem~\ref{thm:leading_term}.

\paragraph{Proof of Lemma~\ref{lem:trace_critical}}
Note that by Equation~\eqref{eq:assum_1}
\begin{align}
\left\|\bJ^{\frac{1}{2}} \bar{\bA}^{-1} u\right\|^2
&=u^{\top} \bJ^{-\frac{1}{2}} \prn{\bJ^{\frac{1}{2}} \bar{\bA}^{-\top} \bJ \bar{\bA}^{-1} \bJ^{\frac{1}{2}}} \bJ^{-\frac{1}{2}} u \notag\\ 
& \leq \mathrm{g}^2 u^{\top} \bJ^{-1} u \notag\\ 
& \leq \mathrm{g}^2/\lambda_{\min}\norm{u}^2.\label{eqn:norm_of_J1/2_barA_-1}
\end{align}
Thus,
\begin{equation}\label{eq:norm_A_TJA_1}
\norm{\bar{\bA}^{-\top} \bJ\bar{\bA}^{-1}}\leq \mathrm{g}^2/\lambda_{\min} = 1/((1-\sqrt{\gamma})^2\lambda_{\min})\lesssim 1/((1-\gamma)^2\lambda_{\min}).
\end{equation}
Note that Lemma~\ref{lem:tr_cov_sigma_e} shows
$\tr(\bSigma_{\be})\lesssim \|\btheta^{\star}\|_{\bJ}^2+K(1-\gamma)^2$ 
and
\begin{align*}
\tr\left(\widetilde{\bSigma}_{\be}\right) &= \tr\prn{\bJ^{\frac{1}{2}} \bar{\bA}^{-1} \bSigma_{\be} \bar{\bA}^{-\top} \bJ^{\frac{1}{2}}}\\
&=\tr\prn{\bar{\bA}^{-\top} \bJ\bar{\bA}^{-1}\bSigma_{\be}}\\
&\leq \norm{\bar{\bA}^{-\top} \bJ\bar{\bA}^{-1}}\tr(\bSigma_{\be}).
\end{align*}
Putting these together with Equation~\eqref{eq:norm_A_TJA_1}, we have the desired result.

\subsection{Proof of Minimax Lower Bound for Stochastic Error}
First, we state a version of the Hajek-Le Cam local asymptotic minimax theorem \citep{lancaster1972norms,hajek1972local} as follows.

Let $\{\mathbb{P}_{\vartheta}\}_{\vartheta \in \Theta}$ denote a family of parametric models, quadratically mean differentiable with Fisher information matrix $J_{\vartheta^{\prime}}$. 
Fix some parameter $\vartheta \in \Theta$, and consider a function $h: \Theta \rightarrow$ $\mathbb{R}^d$ that is differentiable at $\vartheta$. Then for any quasi-convex loss $l: \mathbb{R}^d \rightarrow \mathbb{R}$, we have
\begin{equation*}
\lim _{c \rightarrow \infty} \lim _{T \rightarrow \infty} \inf _{\widehat{h}_T} \sup _{\vartheta^{\prime}:\left\|\vartheta^{\prime}-\vartheta\right\|_2 \leq c / \sqrt{T}} \EB_{\vartheta^{\prime}}\brk{l(\sqrt{T}\prn{\widehat{h}_N-h\prn{\vartheta^{\prime}}}}=\EB[l(Z)],
\end{equation*}
where the infimum is taken over all estimators $\widehat{h}_N$ that are measurable functions of $N$ i.i.d. data points drawn from $\mathbb{P}_{\vartheta^{\prime}}$, and the expectation is taken over a multivariate Gaussian
\begin{equation*}
Z \sim \mathcal{N}\prn{0, \nabla h(\vartheta)^{\top} J_{\vartheta}^{\dagger} \nabla h(\vartheta)}.
\end{equation*}
In our case, each observation $\hat{\gO}$ takes values of the form: 
\begin{equation*}
        \gO_{s,s^{\prime},r} = \prn{\bI_{K} \otimes (\bphi(s)\bphi(s)^\top) - \EB\brk{ \tilde{\bG}(r) \otimes (\bphi(s)\bphi(s')^\top)},\frac{1}{K+1}\brk{  \prn{ \sum_{j=0}^K \bg_j(r) - \bm{1}_K }\otimes\bphi(s) }}.
\end{equation*}
Here we assume that both the state space and the reward space are discrete sets. 
For notation simplicity, let the state space be $[D]$ and the reward index space be $[S_R]$. The log-likelihood given an observation is 
\begin{equation*}
    L(\mathcal{I}=(\mu,\bP,\bR)|\hat{\gO}) = \sum_{s,s^{\prime} \in [D], r \in [S_R]} \mathbb{I}\{\hat{\gO} = \gO_{s,s^{\prime},r}\} \prn{\log(\mu_{s})+\log(\bP_{s,s^{\prime}})+\log(\bR_{s,s^{\prime},r})}.
\end{equation*}
Taking derivatives, we have 
\begin{align*}
\frac{\partial L}{\partial \mu_i}
&= \sum_{r \in [S_R]} \brk{\sum_{j \in[D]} \prn{\mathbb{I}_{\left\{\widehat{\gO}=\gO_{i, j,r}\right\}} \cdot \mu_i^{-1}-\mathbb{I}_{\left\{\widehat{\gO}=\gO_{D, j,r}\right\}} \cdot \mu_D^{-1}}}, \quad \text{for all } i \in[D-1].\\
\frac{\partial L}{\partial \bP_{i, j}}
&=\sum_{r \in [S_R]} \brk{\mathbb{I}_{\left\{\hat{\gO}=\gO_{i, j,r}\right\}} \bP_{i, j}^{-1}-\mathbb{I}_{\left\{\hat{\gO}=\gO_{i, D,r}\right\}} \bP_{i, D}^{-1}}, \quad \text{for all } i \in[D], j \in[D-1]. \\
\frac{\partial L}{\partial \bR_{i,j,r}} &= \mathbb{I}_{\{\hat{\mathcal{O}} = \mathcal{O}_{i,j,r}\}} \cdot \frac{1}{R_{i,j,r}} - \mathbb{I}_{\{\hat{\mathcal{O}} = \mathcal{O}_{i,j,S_R}\}} \cdot \bR_{i,j,S_R}^{-1}, \quad \text{for all } i,j \in [D],\ r \in [S_R - 1].
\end{align*}
Therefore, the entries in Fisher information matrix that can be non-zeros are given by
\begin{align*}
\mathbb{E} \left[ \frac{\partial L}{\partial \mu_i} \times \frac{\partial L}{\partial \mu_i} \right] &= \mu_i^{-1} + \mu_D^{-1}, &\quad& \text{for all } i \in [D-1]. \\
\mathbb{E} \left[ \frac{\partial L}{\partial \mu_i} \times \frac{\partial L}{\partial \mu_j} \right] &= \mu_D^{-1}, &\quad& \text{for all } i,j \in [D-1],\ i \neq j. \\
\mathbb{E} \left[ \frac{\partial L}{\partial \bP_{i,j}} \times \frac{\partial L}{\partial \bP_{i,j}} \right] &= \frac{\mu_i}{\bP_{i,j}} + \frac{\mu_i}{\bP_{i,D}}, &\quad& \text{for all } i \in [D],\ j \in [D-1]. \\
\mathbb{E} \left[ \frac{\partial L}{\partial \bP_{i,j}} \times \frac{\partial L}{\partial \bP_{i,j'}} \right] &= \frac{\mu_i}{\bP_{i,D}}, &\quad& \text{for all } i \in [D],\ j,j' \in [D-1],\ j \neq j'. \\
\mathbb{E} \left[ \frac{\partial L}{\partial \bR_{i,j,r}} \times \frac{\partial L}{\partial \bR_{i,j,r}} \right] &= \bR_{i,j,r}^{-1} + \bR_{i,j,S_R}^{-1}, &\quad& \text{for all } i,j \in [D],\ r \in [S_R - 1]. \\
\mathbb{E} \left[ \frac{\partial L}{\partial \bR_{i,j,r}} \times \frac{\partial L}{\partial \bR_{i,j,r'}} \right] &= \bR_{i,j,S_R}^{-1}, &\quad& \text{for all } i,j \in [D],\ r,r' \in [S_R - 1],\ r \neq r'.
\end{align*}
Putting these together, we obtain that the Fisher information matrix is a block diagonal matrix of the form
\begin{equation*}
J_{\vartheta} = 
\begin{bmatrix}
J_\mu & 0 & 0 & \cdots & 0 & 0 \\
0 & J_{\bP_1} & 0 & \cdots & 0 & 0 \\
0 & 0 & J_{\bP_2} & \cdots & 0 & 0 \\
\vdots & \vdots & \vdots & \ddots & \vdots & \vdots \\
0 & 0 & 0 & \cdots & J_{\bP_D} & 0 \\
0 & 0 & 0 & \cdots & 0 & J_\bR
\end{bmatrix},
\end{equation*}
\begin{equation*}
J_{\bR} = \textbf{Diag}\{J_{\bR_{i,j}} | i,j \in [D]\}.
\end{equation*}
Thus, the matrix $J_{\vartheta}^{\dagger}$ is also a block diagonal matrix of the form 
\begin{equation*}
J_{\vartheta}^{\dagger} = 
\begin{bmatrix}
J_\mu^{\dagger} & 0 & 0 & \cdots & 0 & 0 \\
0 & J_{\bP_1}^{\dagger} & 0 & \cdots & 0 & 0 \\
0 & 0 & J_{\bP_2}^{\dagger} & \cdots & 0 & 0 \\
\vdots & \vdots & \vdots & \ddots & \vdots & \vdots \\
0 & 0 & 0 & \cdots & J_{\bP_D}^{\dagger} & 0 \\
0 & 0 & 0 & \cdots & 0 & J_{\bR}^{\dagger}
\end{bmatrix},
\end{equation*}
\begin{equation*}
J_{\bR}^{\dagger} = \textbf{Diag}\{J_{\bR_{i,j}}^{\dagger} | i,j \in [D]\}.
\end{equation*}
By using the Sherman-Morrison-Woodbury formula we can compute $J_\mu^{\dagger}$, $J_{P_i}^{\dagger}$ and $J_{R_{i,j}}^{\dagger}$ as
\begin{align*}
J_\mu^{\dagger} & =\operatorname{diag}\prn{\brk{\mu_1, \ldots, \mu_{D-1}}}-\brk{\mu_1, \ldots, \mu_{D-1}}^{\top} \cdot\brk{\mu_1, \ldots, \mu_{D-1}}.\\
J_{\bP_i}^{\dagger} & =\mu_i^{-1} \cdot\prn{\operatorname{diag}\prn{\brk{\bP_{i, 1}, \ldots, \bP_{i, D-1}}}-\brk{\bP_{i, 1}, \ldots, \bP_{i, D-1}}^{\top} \cdot\brk{\bP_{i, 1}, \ldots, \bP_{i, D-1}}}, \quad \text {for all } i \in[D].\\
J_{\bR_{i,j}}^{\dagger} & =\operatorname{diag}\prn{\brk{\bR_{i,j,1}, \ldots, \bR_{i,j,D-1}}}-\brk{\bR_{i,j,1}, \ldots, \bR_{i,j,D-1}}^{\top} \cdot\brk{\bR_{i,j,1}, \ldots, \bR_{i,j,D-1}},\quad \text {for all } i,j \in[D].
\end{align*}
Recall that $\btheta^{\star} = \bA^{-1}\bb$, then for any variable $x$, $\frac{\partial\btheta^{\star}}{\partial x}=\bA^{-1}(\frac{d\bb}{dx} - \frac{d\bA}{dx}\btheta^{\star})$.
Equation~\eqref{eq:instance_solution_of_linear_ctd} writes
\begin{equation*}
\btheta^{\star}(\mathcal{I}) =  \brk{ \bJ - \EB\brk{ \prn{\bC\tilde{\bG}(r)\bC^{-1}} \otimes \bphi(s)\bphi(s')^\top } }^{-1}\EB \brk{  \frac{1}{K+1}\bC \prn{ \sum_{j=0}^K \bg_j(r) - \bm{1}_K }\otimes\bphi(s) } = \bar{\bA}^{-1}{\bar{\bb}}.
\end{equation*}
To prevent confusion with the summation indices in $\bar{\bb}$, the state space is indexed by $k$ and $m$ in the following. 
Calculation shows that the gradients of $\btheta^{\star}$ are given by
\begin{align*}
\frac{\partial\btheta^{\star}}{\partial \mu_k} 
&=\bar{\bA}^{-1} \sum_{r \in [S_R]}\brk{\sum_{m \in [D]} \brk{P_{k,m}\prn{\bb_{k,m,r}-\bA_{k,m,r}\btheta^{\star}}-P_{D,m}\prn{\bb_{D,m,r}-\bA_{D,m,r}\btheta^{\star}}}}\text{ for all } k \in [D-1].\\
\frac{\partial\btheta^{\star}}{\partial \bP_{k,m}} &=\bar{\bA}^{-1}\sum_{r\in [S_R]}\prn{\mu_{k}(\bb_{k,m,r}-\bA_{k,m,r})-\mu_{k}(\bb_{D,m,r}-\bA_{D,m,r})}
\text{ for all } k \in [D] \text{ and } m \in [D-1].\\
\frac{\partial\btheta^{\star}}{\partial \bR_{k,m,r}} &=\bar{\bA}^{-1}\prn{(\bb_{k,m,r}-\bA_{k,m,r})-(\bb_{D,m,r}-\bA_{D,m,r})}\text{ for all } k \in [D] \text{ and } m \in [D-1].
\end{align*}
Thus,
\begin{align*}
    (\nabla_{ \mu}\btheta^{\star})^{\top}J_{\mu}^{\dagger}\nabla_{\mu}\btheta^{\star} &= \bar{\bA}^{-1} \Bigg\{\sum_{k=1}^{D}\mu_{k}\EB_{k}\brk{\prn{\bA_{k,m,r}\btheta^{\star}-\bb_{k,m,r}}}\EB_{k}\brk{\prn{\bA_{k,m,r}\btheta^{\star}-\bb_{k,m,r}}^{\top}}\\
    &-\prn{\sum_{k=1}^{D}\mu_{k}\EB_{k}\brk{\prn{\bA_{k,m,r}\btheta^{\star}-\bb_{k,m,r}}}}\prn{\sum_{k=1}^{D}\mu_{k}\EB_{k}\brk{\prn{\bA_{k,m,r}\btheta^{\star}-\bb_{k,m,r}}}}^{\top}\Bigg\}\bar{\bA}^{-\top},\\
    (\nabla_{ \bP_k}\btheta^{\star})^{\top}J_{\bP_k}^{\dagger}\nabla_{\bP_k}\btheta^{\star} &= \mu_{k}\bar{\bA}^{-1} \Bigg\{\EB_{k}\brk{\brk{\prn{\bA_{k,m,r}\btheta^{\star}-\bb_{k,m,r}}}\brk{\prn{\bA_{k,m,r}\btheta^{\star}-\bb_{k,m,r}}^{\top}}}\\
    &\quad -\EB_{k}\brk{\prn{\bA_{k,m,r}\btheta^{\star}-\bb_{k,m,r}}}\EB_{k}\brk{\prn{\bA_{k,m,r}\btheta^{\star}-\bb_{k,m,r}}^{\top}}\Bigg\}\bar{\bA}^{-\top},\\
    (\nabla_{ \bR_{k,m}}\btheta^{\star})^{\top}J_{\bR_{k,m}}^{\dagger}\nabla_{\bR_{k,m}}\btheta^{\star} &= \bar{\bA}^{-1} \Bigg\{
    \EB_{k,m}\brk{\brk{\prn{\bA_{k,m,r}\btheta^{\star}-\bb_{k,m,r}}}\brk{\prn{\bA_{k,m,r}\btheta^{\star}-\bb_{k,m,r}}^{\top}}}\\
    &\quad -\EB_{k,m}\brk{\prn{\bA_{k,m,r}\btheta^{\star}-\bb_{k,m,r}}}\EB_{k,m}\brk{\prn{\bA_{k,m,r}\btheta^{\star}-\bb_{k,m,r}}^{\top}}\Bigg\}\bar{\bA}^{-\top}.
\end{align*}
Putting these together and applying the covariance decomposition formula yields the following result.
\begin{align*}                  
(\nabla\btheta^{\star})^{\top}J_{\gI}^{\dagger}\nabla\btheta^{\star} = \bar{\bA}^{-1}\operatorname{cov}\{\bA_{k,m,r}\btheta^{\star}-\bb_{k,m,r}\} \bar{\bA}^{-\top} = \bar{\bA}^{-1}\bSigma_{\be}\bar{\bA}^{-\top}.
\end{align*}
Recall that the loss function is set as $\iota_K\|\cdot\|_{\bJ}^2$. By Hajek-Le Cam local asymptotic minimax theorem, the local asymptotic minimax risk is given by
\begin{equation*}
\mathfrak{M}(\mathcal{I})= \iota_K\EB\brk{\norm{Z}^2_{\bJ}}, \quad\text{where }Z\sim \gN\prn{0,\bar{\bA}^{-1}\bSigma_{\be}\bar{\bA}^{-\top}}.
\end{equation*}
Recall that $\widetilde{\bSigma}_{\be}=\bJ^{\frac{1}{2}} \bar{\bA}^{-1} \bSigma_{\be} \bar{\bA}^{-\top} \bJ^{\frac{1}{2}}$. 
Thus, $\mathfrak{M}(\mathcal{I})=\iota_K\tr(\widetilde{\bSigma}_{\be})$.
\section{Proof for VrFLCTD}\label{appendix:proof_VrFLCTD}
\noindent In this appendix, we provide the proof for the finite-sample convergence rate of {\VCTD}. 
Here are some preliminary lemmas for upper bounding the bias introduces by the stochastic temporal difference operator $\bh_t$ which is important in our convergence analysis.
\subsection{Preliminary Analysis}
The following lemma provides an upper bound for the difference between stochastic operators at different $\btheta$, where the subscript $t$ in $\bh_{t}$ is omitted in the generative model setting. 
This result follows Assumption 2 of \citet{li2023accelerated}. 
\begin{lemma}\label{lem:variance_of_stochastic_operator}
There exists a constant $\varsigma = \sqrt{2(1+\gamma)}>0$ such that for every $\btheta, \btheta^{\prime} \in \mathbb{R}^{dK}$,
\begin{equation*}
\EB\norm{(\bh(\btheta)-\bh(\btheta^{\prime}))-(\bar{\bh}(\btheta)-\bar{\bh}(\btheta^{\prime})}_2^2 \leq \varsigma^2\norm{\btheta-\btheta^{\prime}}_{\bJ}^2.
\end{equation*}
\end{lemma}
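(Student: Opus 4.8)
The plan is to exploit the fact that in the generative model the stochastic operator $\bh(\btheta)=\bA\btheta-\bb$ is \emph{affine} in $\btheta$, so that the increment $\bh(\btheta)-\bh(\btheta^{\prime})$ depends on $(\btheta,\btheta^{\prime})$ only through the single-sample matrix $\bA$. Concretely, writing $\bu:=\btheta-\btheta^{\prime}$, the $\bb$ terms cancel and
\begin{equation*}
\bh(\btheta)-\bh(\btheta^{\prime})=\bA\bu,\qquad \bar{\bh}(\btheta)-\bar{\bh}(\btheta^{\prime})=\bar{\bA}\bu,
\end{equation*}
so the quantity to be bounded is simply $\EB\norm{(\bA-\bar{\bA})\bu}^2$. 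Since $\bar{\bA}=\EB[\bA]$, the vector $\bar{\bA}\bu$ is the mean of $\bA\bu$, and subtracting the mean only decreases the second moment:
\begin{equation*}
\EB\norm{(\bA-\bar{\bA})\bu}^2=\EB\norm{\bA\bu}^2-\norm{\bar{\bA}\bu}^2\leq \EB\norm{\bA\bu}^2=\bu^{\top}\EB[\bA^{\top}\bA]\,\bu.
\end{equation*}

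It therefore remains to control the second-moment matrix $\EB[\bA^{\top}\bA]$ in the $\bJ$-geometry. This is exactly the estimate already used inside the proof of Lemma~\ref{lem:A^TA}: invoking Equation 32 of \citet{peng2025finitesampleanalysisdistributional}, one has the operator inequality $\bA^{\top}\bA\preccurlyeq 2\,\bI_K\otimes(\bphi(s)\bphi(s)^{\top}+\gamma\,\bphi(s^\prime)\bphi(s^\prime)^{\top})$. Taking expectations and using that, in the generative model initialized at $\mu_\pi$, both $s$ and $s^\prime$ have stationary marginal $\mu_\pi$, so that $\EB[\bphi(s)\bphi(s)^{\top}]=\EB[\bphi(s^\prime)\bphi(s^\prime)^{\top}]=\bSigma_{\bphi}$, yields
\begin{equation*}
\EB[\bA^{\top}\bA]\preccurlyeq 2\,\bI_K\otimes\big((1+\gamma)\bSigma_{\bphi}\big)=2(1+\gamma)\,\bJ.
\end{equation*}
Substituting this back gives $\EB\norm{(\bA-\bar{\bA})\bu}^2\leq 2(1+\gamma)\,\bu^{\top}\bJ\bu=\varsigma^2\norm{\btheta-\btheta^{\prime}}_{\bJ}^2$ with $\varsigma=\sqrt{2(1+\gamma)}$, which is the claim.

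There is essentially no hard obstacle here: the whole difficulty is front-loaded into the spectral bound $\bA^{\top}\bA\preccurlyeq 2\,\bI_K\otimes(\bphi(s)\bphi(s)^{\top}+\gamma\,\bphi(s^\prime)\bphi(s^\prime)^{\top})$, which is already established in the conference version and reused in Lemma~\ref{lem:A^TA}. The only points requiring a line of care are (i) verifying that the affine structure makes the offset term $\bb$ drop out of the increment, which is immediate from $\bh(\btheta)=\bA\btheta-\bb$, and (ii) confirming that $s^\prime$ indeed has stationary marginal $\mu_\pi$ so that the cross term contributes $\gamma\bSigma_{\bphi}$ rather than a different second-moment matrix. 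With these two observations the bound is a one-line consequence of the variance-reduction inequality $\EB\norm{X-\EB X}^2\leq\EB\norm{X}^2$ combined with the existing spectral estimate.
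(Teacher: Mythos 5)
Your proof is correct and follows essentially the same route as the paper's: reduce the claim to $\EB\norm{(\bA-\bar{\bA})\bu}^2\leq\bu^{\top}\EB[\bA^{\top}\bA]\bu$ by dropping the centered term, then invoke the spectral bound $\EB[\bA^{\top}\bA]\preccurlyeq 2(1+\gamma)\bJ$ from the conference version. The only cosmetic difference is that you rederive that expectation bound from the pointwise inequality (Equation 32 there) via stationarity of $s$ and $s^\prime$, whereas the paper cites the already-averaged form (Equation 33) directly.
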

\begin{proof}
Note that
\begin{align*}
    &\EB\norm{(\bh(\btheta)-\bh(\btheta^{\prime}))-(\bar{\bh}(\btheta)-\bar{\bh}(\btheta^{\prime})}_2^2 \leq \varsigma^2\norm{\btheta-\btheta^{\prime}}_{\bJ}^2\\
    &\impliedby \EB \brk{(\bA-\bar{\bA})^{\top}(\bA-\bar{\bA})}\preccurlyeq \varsigma^2\bI_{K}\otimes\bSigma_{\bphi}\\
    &\iff \EB \brk{\bA^{\top}\bA}-\bar{\bA}^{\top}\bar{\bA}\preccurlyeq\varsigma^2\bI_{K}\otimes\bSigma_{\bphi}\\
    &\impliedby \EB \brk{\bA^{\top}\bA}\preccurlyeq \varsigma^2\bI_{K}\otimes\bSigma_{\bphi}.
\end{align*}
Using the upper bound we derived in Equation 33 in \citet{peng2025finitesampleanalysisdistributional}, that is,  $\EB[\bA^{\top}\bA]\preccurlyeq2(1+\gamma)\bI_K\otimes\bSigma_{\bphi}$, we complete the proof.
\end{proof}

With Lemma~\ref{lem:variance_of_stochastic_operator} at hand, we next qualify the inner loop objective $\underline{\btheta}$ induced by $\widetilde{\btheta}$. Recall $\bF_t(\btheta_t) = \bar{\bh}_{t}(\btheta_t)-\bar{\bh}_{t}(\widetilde{\btheta})+\widehat{\bh}(\widetilde{\btheta})$. 
Let $\underline{\btheta}$ satisfy  $-\bar{\bh}(\underline{\btheta}) = \bar{\bA}(\btheta^{\star}-\underline{\btheta}) = \widehat{\bh}(\widetilde{\btheta})-\bar{\bh}(\widetilde{\btheta})$. 
The following lemma gives an upper bound of the distance between $\underline{\btheta}$ and the real objective $\btheta^{\star}$.
\begin{lemma}\label{lem:underline_theta_discrepancy}
Consider a single epoch with index $n \in[N]$. 
We have
\begin{align*}
    \EB\brk{\norm{\btheta^{\star}-\underline{\btheta}}_{\bJ}^2} 
    &\leq 2/l_{n}\prn{\tr(\widetilde{\bSigma}_{\be})+ \frac{\varsigma^2}{(1-\sqrt{\gamma})^2\lambda_{\min}}\EB \brk{\norm{\btheta^{\star}-\widetilde{\btheta}}_{\bJ}^2}}.
\end{align*}
\end{lemma}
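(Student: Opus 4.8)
The plan is to turn the defining relation for $\underline{\btheta}$ into a single closed-form expression and then control it as the variance of an i.i.d. average. Since $\bar{\bh}(\btheta^{\star})=\bm{0}_{dK}$, the definition $\bar{\bA}(\btheta^{\star}-\underline{\btheta})=\widehat{\bh}(\widetilde{\btheta})-\bar{\bh}(\widetilde{\btheta})$ gives at once
\begin{equation*}
\btheta^{\star}-\underline{\btheta}=\bar{\bA}^{-1}\prn{\widehat{\bh}(\widetilde{\btheta})-\bar{\bh}(\widetilde{\btheta})}=\bar{\bA}^{-1}\frac{1}{l_{n}}\sum_{i=1}^{l_{n}}\prn{\bh_{i}(\widetilde{\btheta})-\bar{\bh}(\widetilde{\btheta})}.
\end{equation*}
Writing $\bM:=\bar{\bA}^{-\top}\bJ\bar{\bA}^{-1}$, the target quantity is $\EB[\norm{\btheta^{\star}-\underline{\btheta}}_{\bJ}^2]=\EB[(\btheta^{\star}-\underline{\btheta})^{\top}\bJ(\btheta^{\star}-\underline{\btheta})]$. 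The crucial observation is that $\widetilde{\btheta}=\widehat{\btheta}_{n-1}$ is measurable with respect to the samples of earlier epochs, and is therefore independent of the $l_{n}$ fresh i.i.d. samples used to form $\widehat{\bh}(\widetilde{\btheta})$. Conditioning on $\widetilde{\btheta}$, the summands $\bh_{i}(\widetilde{\btheta})-\bar{\bh}(\widetilde{\btheta})$ are i.i.d. and mean-zero, so the cross terms vanish and the variance of the average scales as $1/l_{n}$:
\begin{equation*}
\EB\brk{\norm{\btheta^{\star}-\underline{\btheta}}_{\bJ}^2\mid\widetilde{\btheta}}=\frac{1}{l_{n}}\EB\brk{\bz^{\top}\bM\bz\mid\widetilde{\btheta}},\qquad \bz:=\bh(\widetilde{\btheta})-\bar{\bh}(\widetilde{\btheta}).
\end{equation*}

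Next I would split $\bz$ into a fixed-point noise part and a drift part. Using $\bar{\bh}(\btheta^{\star})=\bm{0}_{dK}$ together with the definition $\be=\bh(\btheta^{\star})-\bar{\bh}(\btheta^{\star})=\bh(\btheta^{\star})$, write
\begin{equation*}
\bz=\underbrace{\brk{\bh(\widetilde{\btheta})-\bh(\btheta^{\star})}-\brk{\bar{\bh}(\widetilde{\btheta})-\bar{\bh}(\btheta^{\star})}}_{=:\bu}+\be,
\end{equation*}
and apply $\bz^{\top}\bM\bz\leq 2\bu^{\top}\bM\bu+2\be^{\top}\bM\be$, valid since $\bM$ is symmetric PSD as a congruence of $\bJ$. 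For the noise term, cyclicity of the trace gives $\EB[\be^{\top}\bM\be]=\tr(\bM\bSigma_{\be})=\tr(\bar{\bA}^{-\top}\bJ\bar{\bA}^{-1}\bSigma_{\be})=\tr(\widetilde{\bSigma}_{\be})$, exactly the quantity defined in Equation~\eqref{eq:trace_def}. For the drift term, I would bound the quadratic form by the operator norm, $\bu^{\top}\bM\bu\leq\norm{\bM}\,\norm{\bu}^2$, invoke the spectral bound $\norm{\bar{\bA}^{-\top}\bJ\bar{\bA}^{-1}}\leq 1/((1-\sqrt{\gamma})^2\lambda_{\min})$ from Equation~\eqref{eq:norm_A_TJA_1}, and then use Lemma~\ref{lem:variance_of_stochastic_operator} to obtain $\EB[\norm{\bu}^2\mid\widetilde{\btheta}]\leq\varsigma^2\norm{\widetilde{\btheta}-\btheta^{\star}}_{\bJ}^2$. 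Combining the two estimates yields
\begin{equation*}
\EB\brk{\bz^{\top}\bM\bz\mid\widetilde{\btheta}}\leq 2\tr(\widetilde{\bSigma}_{\be})+\frac{2\varsigma^2}{(1-\sqrt{\gamma})^2\lambda_{\min}}\norm{\widetilde{\btheta}-\btheta^{\star}}_{\bJ}^2,
\end{equation*}
and taking the outer expectation over $\widetilde{\btheta}$ and dividing by $l_{n}$ produces the claimed bound with the stated constant $2/l_{n}$.

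The computation is essentially routine once this decomposition is in place; the main point requiring care is the independence between $\widetilde{\btheta}$ and the fresh batch, which is what simultaneously justifies the $1/l_{n}$ variance reduction and the conditional application of Lemma~\ref{lem:variance_of_stochastic_operator}. A secondary subtlety is handling the two quadratic forms in the common metric $\bM$ so that the fixed-point term collapses cleanly to $\tr(\widetilde{\bSigma}_{\be})$; keeping $\bM$ intact inside the trace for the noise term (rather than peeling off $\norm{\bM}\tr(\bSigma_{\be})$ as in Lemma~\ref{lem:trace_critical}) is precisely what delivers the sharp leading constant demanded by the statement.
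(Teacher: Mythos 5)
Your proposal is correct and follows essentially the same route as the paper's proof: both start from $\btheta^{\star}-\underline{\btheta}=\bar{\bA}^{-1}(\widehat{\bh}(\widetilde{\btheta})-\bar{\bh}(\widetilde{\btheta}))$, split off the fixed-point noise $\be$ from the Lipschitz drift with a factor-2 Young step, collapse the noise term to $\tr(\widetilde{\bSigma}_{\be})$ via the trace identity, and bound the drift using $\|\bar{\bA}^{-\top}\bJ\bar{\bA}^{-1}\|\leq 1/((1-\sqrt{\gamma})^2\lambda_{\min})$ together with Lemma~\ref{lem:variance_of_stochastic_operator}. The only (immaterial) difference is ordering: you apply the $1/l_{n}$ i.i.d. variance reduction to the whole average before splitting, whereas the paper splits first and then reduces each of the two averages.
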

\begin{proof}
By the definition of $\underline{\btheta}$, we have $\bar{\bA}(\btheta^{\star}-\underline{\btheta}) = \widehat{\bh}(\widetilde{\btheta})-\bar{\bh}(\widetilde{\btheta})$. Thus
\begin{equation*}
    \bJ^{1/2}(\btheta^{\star}-\underline{\btheta}) = \bJ^{1/2}\bar{\bA}^{-1}\prn{\widehat{\bh}(\widetilde{\btheta})-\bar{\bh}(\widetilde{\btheta})}.
\end{equation*} 
Therefore, by Equation~\eqref{eq:norm_A_TJA_1} and Lemma~\ref{lem:variance_of_stochastic_operator}, we have
\begin{align*}
    &\EB\brk{\norm{\btheta^{\star}-\underline{\btheta}}_{\bJ}^2}
    = \EB\brk{\norm{\bJ^{1/2}\bar{\bA}^{-1}\prn{\widehat{\bh}(\widetilde{\btheta})-\bar{\bh}(\widetilde{\btheta})}}_2^2}\\
    &\leq 2\prn{\EB\brk{\norm{\bJ^{1/2}\bar{\bA}^{-1}\prn{\widehat{\bh}(\btheta^{\star})-\bar{\bh}(\btheta^{\star})}}_2^2}+\EB\brk{\norm{\bJ^{1/2}\bar{\bA}^{-1}\prn{\widehat{\bh}(\widetilde{\btheta})-\bar{\bh}(\widetilde{\btheta})}-\prn{\widehat{\bh}(\btheta^{\star})-\bar{\bh}(\btheta^{\star})}}_2^2}}\\
    &\leq 2/l_{n}\prn{\EB\brk{\norm{\bJ^{1/2}\bar{\bA}^{-1}\prn{\bh(\btheta^{\star})-\bar{\bh}(\btheta^{\star})}}_2^2}+\EB\brk{\norm{\bJ^{1/2}\bar{\bA}^{-1}\prn{\bh(\widetilde{\btheta})-\bar{\bh}(\widetilde{\btheta})}-\prn{\bh(\btheta^{\star})-\bar{\bh}(\btheta^{\star})}}_2^2}}\\
    &\leq    2/l_{n}\prn{\EB\brk{\norm{\bJ^{1/2}\bar{\bA}^{-1}\prn{\bA\btheta^{\star}-\bb}}_2^2}+ \frac{1}{(1-\sqrt{\gamma})^2\lambda_{\min}}\EB \brk{\norm{\prn{\bh(\widetilde{\btheta})-\bar{\bh}(\widetilde{\btheta})}-\prn{\bh(\btheta^{\star})-\bar{\bh}(\btheta^{\star})}}_2^2}}\\
    &\leq    2/l_{n}\prn{\tr(\widetilde{\bSigma}_{\be})+ \frac{\varsigma^2}{(1-\sqrt{\gamma})^2\lambda_{\min}}\EB \brk{\norm{\btheta^{\star}-\widetilde{\btheta}}_{\bJ}^2}}.
\end{align*}
\end{proof}
\subsection{Proof of convergence results for {\VCTD}}
For convenience, we define $\bm{\Delta}_t(\btheta):=\bar{\bh}_t(\btheta)-\bar{\bh}(\btheta)(=(\bA_{t}-\bar{\bA})\btheta)$ and $\bH_t(\btheta) = \bm{\Delta}_t(\btheta)-\bm{\Delta}_t(\widetilde{\btheta})$. 
Note that for $\lambda=1$, each update is $\btheta_t-\underline{\btheta}=\btheta_{t+1}+\alpha[ \bF_t(\btheta_t)+( \bF_t(\btheta_t)- \bF_{t-1}(\btheta_{t-1}))]-\underline{\btheta}$. 
By Equation 67 in \citet{li2023accelerated}, the following recursive expression is valid for the local error $\|\btheta_t-\underline{\btheta}\|_2^2$.

\begin{equation}\label{eqn:theta_1-underline_theta}
\norm{\btheta_1-\underline{\btheta}}_2^2=\norm{\btheta_{T+1}-\underline{\btheta}}_2^2-2 \alpha\left\langle \bF_{T+1}\left(\btheta_{T+1}\right)- \bF_T\left(\btheta_T\right), \btheta_{T+1}-\underline{\btheta}\right\rangle
+\sum_{t=1}^T 2 \alpha\left\langle \bF_{t+1}\left(\btheta_{t+1}\right), \btheta_{t+1}-\underline{\btheta}\right\rangle+Q_1,
\end{equation}
where the last term is defined as 
\begin{align*}
    Q_1&:=\sum_{t=1}^T \| \btheta_{t+1} -\btheta_t \|_2^2+\sum_{t=1}^T 2 \alpha\left\langle \bar{\bh}(\btheta_t)-\bar{\bh}(\btheta_{t-1}), \btheta_{t+1}-\btheta_t\right\rangle +\sum_{t=1}^T 2 \alpha\left\langle \bH_t(\btheta_t)-\bH_{t-1}(\btheta_{t-1}), \btheta_{t+1}-\btheta_t\right\rangle.
\end{align*}
According to Equation 68 in \citet{li2023accelerated}, the following holds when $2\alpha(1+\sqrt\gamma)\leq 1/2$.
\begin{align*}
Q_1 & \geq \sum_{t=1}^T\norm{\btheta_{t+1}-\btheta_t}_2^2-\sum_{t=1}^T 2 \alpha(1+\sqrt{\gamma}) \norm{\btheta_t-\btheta_{t-1}}_2\norm{\btheta_{t+1}-\btheta_t}_2 \\
& \geq\frac{1}{4}\norm{\btheta_{T+1}-\btheta_T}_2^2-\sum_{t=1}^T 4 \alpha^2\left(
\norm{\bH_t(\btheta_t)}_2^2+
\norm{\bH_{t-1}(\btheta_{t-1})}_2^2\right),
\end{align*}
where we apply Lemma~\ref{lem:A_bilinear} and Young's inequality. 
Substituting into Equation~\eqref{eqn:theta_1-underline_theta} and doing rearrangement, we have
\begin{align*}
&\norm{\btheta_{T+1}-\underline{\btheta}}_2^2+\sum_{t=1}^T 2 \alpha\left\langle \bF_{t+1}\left(\btheta_{t+1}\right), \btheta_{t+1}-\underline{\btheta}\right\rangle -\sum_{t=1}^T 4 \alpha^2\prn{\norm{\bH_t(\btheta_t)}_2^2+\norm{\bH_{t-1}(\btheta_{t-1})}_2^2} \leq \norm{\btheta_1-\underline{\btheta}}_2^2+R,
\end{align*}
where
\begin{equation*}
R := 2 \alpha\left\langle \bF_{T+1}\left(\btheta_{T+1}\right)- \bF_T\left(\btheta_T\right), \btheta_{T+1}-\underline{\btheta}\right\rangle
-\frac{1}{4} \norm{ \btheta_{T+1}- \btheta_T}_2^2.
\end{equation*}
Let
$Q_2:=2 \alpha\langle
\bH_{T+1}(\btheta_{T+1}),
\btheta_{T+1}-\underline{\btheta}\rangle
-2 \alpha\langle
\bH_{T}(\btheta_{T}),
\btheta_T-\underline{\btheta}\rangle$.
By Equation 70 in \citet{li2023accelerated}, if $\|\bar{\bh}(\btheta)-\bar{\bh}(\btheta^{\prime})\|_2 \leq c\|\btheta-\btheta^{\prime}\|_2$ (in our case, $c=\lambda_{\max}(1+\sqrt{\gamma})$), then 
\begin{align}
& (1-8 \alpha^2 c^2)\norm{\btheta_{T+1}-\underline{\btheta}}_2^2+\sum_{t=1}^T 2 \alpha\left\langle \bF_{t+1}\left(\btheta_{t+1}\right), \btheta_{t+1}-\underline{\btheta}\right\rangle \notag\\
& \quad \leq\norm{\btheta_1-\underline{\btheta}}_2^2+\sum_{t=1}^T 8 \alpha^2\norm{\bH_{t}(\btheta_{t})}_2^2+4 \alpha^2\norm{\bH_{T}(\btheta_{T})}_2^2+Q_2.\label{eq:Markovian_start}
\end{align}
We now take the expectation on both sides and note that $\EB Q_2 =0$ (due to $\EB (\bA_t-\bar{\bA})=0$). 
For LHS, Lemma~\ref{lem:A_bilinear} is applied; and for RHS, the basic inequality $\|\btheta_1-\underline{\btheta}\|_{2}^2 \leq \|\btheta_1-\underline{\btheta}\|_{\bJ}^2/\lambda_{\min}$ and Lemma~\ref{lem:variance_of_stochastic_operator} are applied. 
Thus, we have
\begin{align*}
&\left(1-8 \alpha^2 \lambda_{\max}^2(1+\sqrt{\gamma})^2\right)\EB\brk{\norm{\btheta_{T+1}-\underline{\btheta}}_2^2}+\sum_{t=1}^T 2 \alpha(1-\sqrt{\gamma}) \EB\brk{\norm{\btheta_{t+1}-\underline{\btheta}}_{\bJ}^2}\\
& \quad \leq \frac{1}{\lambda_{\min}}\EB\brk{\norm{\btheta_1-\underline{\btheta}}_{\bJ}^2}+  \frac{8\alpha^2\varsigma^2}{m}\sum_{t=1}^T\EB\brk{\norm{\btheta_{t+1}-\widetilde{\btheta}}_{\bJ}^2}
+\frac{4\alpha^2\varsigma^2}{m}\EB\brk{\norm{\btheta_{T}-\widetilde{\btheta}}_{\bJ}^2}.
\end{align*}
Applying Young's inequality on $\|\btheta_{t+1}-\widetilde{\btheta}\|_{\bJ}^2$ and $\alpha<1/(4(1+\sqrt{\gamma}))$, we have
\begin{align*}
    \sum_{t=1}^T \prn{2 \alpha(1-\sqrt{\gamma})-\frac{24\alpha^2\varsigma^2}{m}} \EB\brk{\norm{\btheta_{t+1}-\underline{\btheta}}_{\bJ}^2} \leq \prn{\frac{1}{\lambda_{\min}}+\frac{24T\alpha^2\varsigma^2}{m}}\EB\brk{\norm{\widetilde{\btheta}-\underline{\btheta}}_{\bJ}^2}.
\end{align*}
Given the Polyak-Ruppert average as $\widehat{\btheta}_k:=\sum_{t=2}^{T+1} \btheta_t/T$, applying Young's inequality and Lemma~\ref{lem:underline_theta_discrepancy}, we have 
\begin{align*}
\EB\brk{\norm{\widehat{\btheta}_n-\btheta^{\star}}_{\bJ}^2}&\leq 
(4/C_T+96/C_{m}+2(4/C_T+96/C_{m}+2)/C_{N})\EB\left[\|\widetilde{\btheta}-\btheta^{\star}\|_{\bJ}^2\right]\\
&\quad+(2(4/C_T+96/C_{m}+2)/C_{N})\frac{\tr(\widetilde{\bSigma}_{\be})}{l_{n}},
\end{align*}
where
$
T \geq C_T/(\lambda_{\min}(1-\sqrt{\gamma}) \alpha)$, $m \geq \max\{1,C_m\alpha\varsigma^2/(1-\sqrt{\gamma})\}$, $l_{n} \geq C_{N}\varsigma^2/(\lambda_{\min}(1-\sqrt{\gamma})^2).
$
Setting $C_T= 80,C_m = 480, C_N = 18$ completes the proof.
\section{Proof for Markovian Setting}\label{appendix:proof_Markovian}
\noindent To adapt {\VCTD} for the Markovian data stream $\{\widetilde{\xi}_t\}_{t=0}^\infty=\{(s_t,a_t,s_t^{\prime},r_t)\}_{t=0}^\infty$, we incorporate a standard burn-in period of length $l_{0}$ in each outer loop.  
Consequently, when computing the operator $\widehat{\bh}$ in Algorithm~\ref{alg:VrFLCTD}, only the last $l_{n} - l_{0}$ samples from each segment of $l_{n}$ consecutive samples are used. 
This approach reduces the bias introduced by correlated data and achieves the effect of variance reduction. 
The following theorem is central to proving Theorem~\ref{thm:Markovian_convergence}.
\begin{proposition}\label{prop:Markovian_recursive}
    Consider a single epoch with index $n \in[N]$. Assume $l_{0}$ satisfies $\tau$ satisfies and $\rho^\tau \leq \frac{(1-\rho)^2}{2 C_{\bphi}}$, and $l_{n}$ satisfies $\rho^{l_{n}-l_{0}} \leq \frac{\tau(1-\rho)}{4 C_{\bphi}(l_{n}-l_{0})}$. 
\begin{itemize}
\item 
If $\bm{\eta}_{\btheta^{\star}} = \bm{\eta}^{\pi,K}$, then we have
\begin{equation*}
\EB\brk{\norm{\widehat{\btheta}_n-\btheta^{\star}}_{\bJ}^2}\leq\frac{1}{2}\EB\brk{\|\widehat{\btheta}_{n-1}-\btheta^{\star}\|_{\bJ}^2}+\frac{9\tr(\widetilde{\bSigma}_{\be})}{l_{n}-l_{0}}.
\end{equation*}
\item
If $\bm{\eta}_{\btheta^{\star}} \neq \bm{\eta}^{\pi,K}$, then we have
\begin{align*}
\EB\brk{\norm{\widehat{\btheta}_n-\btheta^{\star}}_{\bJ}^2} 
&\leq \frac{1}{2}\EB\brk{\norm{\widetilde{\btheta}_{n-1}-\btheta^{\star}}_{\bJ}^2}+\frac{9 \tr(\widetilde{\bSigma}_{\be,\text{Mkv}})}{(l_{n}-l_{0})} 
+ \frac{10\tau^2 \tr(\widetilde{\bSigma}_{\be})}{(l_{n}-l_{0})^2}\\ 
&\quad+ \frac{9\tau}{(l_{n}-l_{0})^2\lambda_{\min}(1-\sqrt{\gamma})^2} \frac{\ell_{2,\mu_\pi}^2\prn{\bm{\eta}^{\pi,K},\bm{\eta}_{\btheta^{\star}}}}{\iota_K}.
\end{align*}
\end{itemize}
\end{proposition}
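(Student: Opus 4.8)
The plan is to preserve the deterministic backbone of the generative-model analysis and re-derive only the probabilistic steps under Markovian dependence. The operator-extrapolation identity culminating in Equation~\eqref{eq:Markovian_start} is purely algebraic: it invokes only the update rule \eqref{eqn:extrapolat_update}, the monotonicity and boundedness estimates of Lemma~\ref{lem:A_bilinear} and Lemma~\ref{lem:A^TA}, and Young's inequality, none of which reference the observation model. I would therefore re-use \eqref{eq:Markovian_start} verbatim, with the recentering point $\underline{\btheta}$ defined as before through $\bar{\bA}(\btheta^{\star}-\underline{\btheta}) = \widehat{\bh}(\widetilde{\btheta})-\bar{\bh}(\widetilde{\btheta})$ and with $\bH_t(\btheta)=(\bA_t-\bar{\bA})(\btheta-\widetilde{\btheta})$. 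The remaining work splits into three probabilistic estimates that differ from the i.i.d.\ case: (i) the telescoping cross term $\EB[Q_2]$, which vanished in the generative proof but is now nonzero; (ii) the per-step variance $\sum_t\EB\norm{\bH_t(\btheta_t)}_2^2$; and (iii) the recentering discrepancy $\EB\norm{\btheta^{\star}-\underline{\btheta}}_{\bJ}^2$, the Markovian analogue of Lemma~\ref{lem:underline_theta_discrepancy}.

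For (i) and (ii) I would exploit the inner burn-in of length $m_0$. Under Assumption~\ref{assum:mixing_time}, discarding the first $m_0$ samples of each mini-batch makes the law of every retained sample geometrically close (in the $\rho^{m_0}$ sense fixed by the hypothesis on $m_0$) to the stationary $\mu_\pi$, so $\bar{\bh}_t$ is, up to a bias of the prescribed order, an unbiased estimate of $\bar{\bh}$ under $\mu_\pi$. This lets Lemma~\ref{lem:variance_of_stochastic_operator} carry over for the variance term, and lets $\EB[Q_2]$ be bounded by a mixing residual that the choice of $m_0$ forces below the target order; the factor $(\tau+1)$ in the batch-size condition accounts for the effective sample-size loss from correlation when a decorrelating block of length $\tau$ is used.

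Estimate (iii) is where the two cases separate and where $\widetilde{\bSigma}_{\be,\text{Mkv}}$ enters. Splitting $\widehat{\bh}(\widetilde{\btheta})-\bar{\bh}(\widetilde{\btheta})$ as $(\widehat{\bh}(\btheta^{\star})-\bar{\bh}(\btheta^{\star})) + \big[(\widehat{\bh}(\widetilde{\btheta})-\bar{\bh}(\widetilde{\btheta}))-(\widehat{\bh}(\btheta^{\star})-\bar{\bh}(\btheta^{\star}))\big]$ and pushing through $\bJ^{1/2}\bar{\bA}^{-1}$ via \eqref{eq:norm_A_TJA_1}, the second bracket is controlled by Lemma~\ref{lem:variance_of_stochastic_operator} exactly as in the generative case, contributing the term in $\varsigma^2\norm{\btheta^{\star}-\widetilde{\btheta}}_{\bJ}^2$. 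The first bracket is the variance of an empirical average of the correlated fixed-point noise $\widetilde{\be}_t=\bh_t(\btheta^{\star})-\bar{\bh}(\btheta^{\star})$ over $l_n-l_0$ samples, whose limiting covariance is the autocovariance-summed matrix $\sum_{-\infty}^{\infty}\EB[\widetilde{\be}_t\widetilde{\be}_0^{\top}]$; this is precisely why $\tr(\widetilde{\bSigma}_{\be,\text{Mkv}})$ rather than $\tr(\widetilde{\bSigma}_{\be})$ governs the leading term. When $\bm{\eta}_{\btheta^{\star}}=\bm{\eta}^{\pi,K}$, the fixed point coincides with the tabular categorical solution, so $\widetilde{\be}_t$ reduces to conditionally-centered transition/reward noise, the off-diagonal autocovariances vanish, $\widetilde{\bSigma}_{\be,\text{Mkv}}$ collapses to $\widetilde{\bSigma}_{\be}$, and the approximation-error term disappears, yielding the cleaner first bound. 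When $\bm{\eta}_{\btheta^{\star}}\neq\bm{\eta}^{\pi,K}$, the persistent approximation residual propagates along the chain, and blocking the trajectory into $\tau$-separated segments produces the extra terms scaling with $\ell_{2,\mu_\pi}^2(\bm{\eta}^{\pi,K},\bm{\eta}_{\btheta^{\star}})/\iota_K$ and with $\tau^2\tr(\widetilde{\bSigma}_{\be})/(l_n-l_0)^2$.

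The main obstacle is the sharp control of these correlated averages in step (iii): one must show that averaging $\widetilde{\be}_t$ over the post-burn-in window reproduces the long-run covariance $\widetilde{\bSigma}_{\be,\text{Mkv}}$ with only an $O(\tau/(l_n-l_0))$ relative correction, and simultaneously that the decorrelation blocking does not inflate the approximation-error contribution beyond the stated order. This rests on a coupling/blocking argument tied to the conditions $\rho^{l_0}\lesssim \min_{i}\pi_i/C_P$ and $\rho^{l_n-l_0}\lesssim \tau(1-\rho)/(C_{\bphi}(l_n-l_0))$, and the delicate point is isolating, within the mixed-in residual, the part that is genuinely first order in $1/(l_n-l_0)$ from the higher-order $\tau^2/(l_n-l_0)^2$ remainder. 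Once these Markovian covariance estimates are in place, substituting them back into \eqref{eq:Markovian_start}, forming the Polyak--Ruppert average, and applying Young's inequality as in the generative proof yields the two claimed recursions with the stated constants.
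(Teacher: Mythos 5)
Your plan is correct and follows essentially the same route as the paper: keep the deterministic operator-extrapolation recursion, use the burn-in lengths $m_0$ and $l_0$ together with Assumption~\ref{assum:mixing_time} to control the now-nonzero conditional biases (the paper's Lemmas~\ref{lem:delicate_h_bias}--\ref{lem:delicate_h_cut}), and split the recentering error $\EB\|\underline{\btheta}-\btheta^{\star}\|_{\bJ}^2$ into a long-run-autocovariance term $\tr(\widetilde{\bSigma}_{\be,\text{Mkv}})/(l_n-l_0)$ plus $\tau$-blocked remainders, with the case distinction resting on the fact that $\EB[\bh_t(\btheta^{\star})-\bar{\bh}(\btheta^{\star})\mid s_t]=0$ pointwise exactly when $\bm{\eta}_{\btheta^{\star}}=\bm{\eta}^{\pi,K}$, so that all nonzero-lag autocovariances $\Xi_i$ vanish. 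The only cosmetic difference is that the paper regroups the stochastic fluctuation terms into a single quantity $Q_3$ (boundary terms plus the per-step $\bm{\Delta}$-sum) before invoking the mixing bounds, whereas you treat the boundary term and the per-step bias separately, which amounts to the same estimates.
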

The proof is provided in the next subsection. 
Taking Proposition~\ref{prop:Markovian_recursive} as given for the moment, let us complete the proof of Theorem~\ref{thm:Markovian_convergence}. 
Consider the situation where $\bm{\eta}_{\btheta^{\star}} \neq \bm{\eta}^{\pi,K}$ and the following holds under the condition $l_{n}-l_{0}\geq (3/4)^{N-n}M$.
\begin{align*}
\EB\brk{\norm{\widehat{\btheta}_N-\btheta^{\star}}_{\bJ}^2}
&\leq\frac{1}{2^N}\EB\brk{\|\btheta^{0}-\btheta^{\star}\|_{\bJ}^2}+\sum_{n=1}^{N}(\frac{2}{3})^{N-n}\frac{9\tr(\widetilde{\bSigma}_{\be,\text{Mkv}})}{M}\\
&\quad+\sum_{n=1}^{N}(\frac{8}{9})^{N-n} \prn{\frac{10\tau^2\tr(\widetilde{\bSigma}_{\be})}{M^2}+\frac{9\tau}{M^2\lambda_{\min}(1-\sqrt{\gamma})^2} \frac{\ell_{2,\mu_\pi}^2\prn{\bm{\eta}^{\pi,K},\bm{\eta}_{\btheta^{\star}}}}{\iota_K}}\\
&\leq\frac{1}{2^N}\EB\brk{\|\btheta^{0}-\btheta^{\star}\|_{\bJ}^2}+
\frac{27\tr(\widetilde{\bSigma}_{\be,\text{Mkv}})}{M}+
\frac{81\tau}{M^2\lambda_{\min}(1-\sqrt{\gamma})^2} \frac{\ell_{2,\mu_\pi}^2\prn{\bm{\eta}^{\pi,K},\bm{\eta}_{\btheta^{\star}}}}{\iota_K}+\frac{90\tau^2\tr(\widetilde{\bSigma}_{\be})}{M^2}.
\end{align*}
This completes the proof of the upper bound part of the main theorem. 
We need an extra upper bound on the linear approximation error $\ell_{2,\mu_\pi}^2(\bm{\eta}^{\pi,K},\bm{\eta}_{\btheta^{\star}})/\iota_{K}$ for further analysis. 
Same as Lemma~\ref{lem:trace_critical}, $K$ appears linearly in the upper bound of both this term and the trace term $\tr(\widetilde{\bSigma}_{\be})$. This proves crucial for {\VCTD} to achieve K-independent convergence rate.
\begin{lemma}\label{lem:appro_linear}
For any $K\geq (1-\gamma)^{-1}$,
\begin{equation*}
\frac{\ell_{2,\mu_\pi}^2\prn{\bm{\eta}^{\pi,K},\bm{\eta}_{\btheta^{\star}}}}{\iota_K}\lesssim K/\lambda_{\min}.
\end{equation*}
\end{lemma}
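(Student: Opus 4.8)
The plan is to anchor both measures at the reference uniform distribution $\bxi=\bm{\eta}_{\bm{0}_{dK}}$ and split the distance through the triangle inequality $\ell_{2,\mu_\pi}(\bm{\eta}^{\pi,K},\bm{\eta}_{\btheta^\star})\le \ell_{2,\mu_\pi}(\bm{\eta}^{\pi,K},\bxi)+\ell_{2,\mu_\pi}(\bxi,\bm{\eta}_{\btheta^\star})$. Since $\iota_K K=(1-\gamma)^{-1}$, the assertion $\ell_{2,\mu_\pi}^2(\bm{\eta}^{\pi,K},\bm{\eta}_{\btheta^\star})/\iota_K\lesssim K/\lambda_{\min}$ is equivalent to $\ell_{2,\mu_\pi}^2(\bm{\eta}^{\pi,K},\bm{\eta}_{\btheta^\star})\lesssim ((1-\gamma)\lambda_{\min})^{-1}$, which is the form I would establish and which also matches the displayed bound stated just before the lemma.

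First I would bound $\ell_{2,\mu_\pi}(\bm{\eta}^{\pi,K},\bxi)$ by a crude CDF estimate. The categorical fixed point $\bm{\eta}^{\pi,K}$ is a genuine probability distribution supported on $[0,(1-\gamma)^{-1}]$ and $\bxi$ is the discrete uniform on the same support, so both CDFs take values in $[0,1]$; their pointwise difference is therefore at most $1$, and integrating over an interval of length $(1-\gamma)^{-1}$ gives $\ell_2^2(\eta^{\pi,K}(s),\xi(s))\le (1-\gamma)^{-1}$ for every $s$, whence $\ell_{2,\mu_\pi}^2(\bm{\eta}^{\pi,K},\bxi)\le (1-\gamma)^{-1}$.

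For the second leg I would use the isometry $\ell_{2,\mu_\pi}^2(\bm{\eta}_\btheta,\bm{\eta}_{\btheta'})=\iota_K\|\btheta-\btheta'\|_\bJ^2$, which together with $\bxi=\bm{\eta}_{\bm{0}_{dK}}$ yields $\ell_{2,\mu_\pi}^2(\bxi,\bm{\eta}_{\btheta^\star})=\iota_K\|\btheta^\star\|_\bJ^2$, reducing the task to bounding $\|\btheta^\star\|_\bJ^2\lesssim K/\lambda_{\min}$. Writing $\btheta^\star=\bar{\bA}^{-1}\bar{\bb}$ and applying Equation~\eqref{eqn:norm_of_J1/2_barA_-1} with $u=\bar{\bb}$ gives $\|\btheta^\star\|_\bJ=\|\bJ^{1/2}\bar{\bA}^{-1}\bar{\bb}\|\le (1-\sqrt\gamma)^{-1}\lambda_{\min}^{-1/2}\|\bar{\bb}\|$. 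Since $\|\bar{\bb}\|\le\EB\|\bb\|\lesssim\sqrt{K}(1-\gamma)$ by Lemma~\ref{lem:tr_cov_sigma_e} and $(1-\sqrt\gamma)^{-1}\lesssim(1-\gamma)^{-1}$, this produces $\|\btheta^\star\|_\bJ\lesssim \sqrt{K}/\sqrt{\lambda_{\min}}$, so that $\ell_{2,\mu_\pi}^2(\bxi,\bm{\eta}_{\btheta^\star})\lesssim \iota_K K/\lambda_{\min}=((1-\gamma)\lambda_{\min})^{-1}$.

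Combining the two legs via $(a+b)^2\le 2a^2+2b^2$ and using $\lambda_{\min}\le 1$ gives $\ell_{2,\mu_\pi}^2(\bm{\eta}^{\pi,K},\bm{\eta}_{\btheta^\star})\lesssim((1-\gamma)\lambda_{\min})^{-1}$, and dividing by $\iota_K=((1-\gamma)K)^{-1}$ delivers the stated $K/\lambda_{\min}$ bound. The only delicate point is the justification that $\bm{\eta}^{\pi,K}$ is a genuine non-negative distribution so that the $\|F\|_\infty\le 1$ estimate for the first leg is valid; this holds because $\bPi_K$ preserves non-negativity on measures supported within the grid range and $\gT^\pi$ maps probability distributions to probability distributions, so the contraction fixed point remains in $(\sP_K^{\sgn})^\gS$ as a genuine distribution. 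The quantitative core, namely $\|\btheta^\star\|_\bJ^2\lesssim K/\lambda_{\min}$, is immediate from the already-established resolvent bound in Equation~\eqref{eqn:norm_of_J1/2_barA_-1} and the norm bound on $\bb$ in Lemma~\ref{lem:tr_cov_sigma_e}, so no genuinely new obstacle arises.
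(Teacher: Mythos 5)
Your proof is correct and follows essentially the same route as the paper's: both anchor at $\bxi=\bm{\eta}_{\bm{0}}$, reduce the dominant term to $\ell_{2,\mu_\pi}^2(\bxi,\bm{\eta}_{\btheta^{\star}})/\iota_K=\|\btheta^{\star}\|_{\bJ}^2=\|\bJ^{1/2}\bar{\bA}^{-1}\bar{\bb}\|^2$, and bound it by $K/\lambda_{\min}$ using Equation~\eqref{eqn:norm_of_J1/2_barA_-1} together with $\|\bar{\bb}\|\lesssim\sqrt{K}(1-\gamma)$ from Lemma~\ref{lem:tr_cov_sigma_e}. The only divergence is the first leg: the paper passes through $\bm{\eta}^{\pi}$ and invokes the worst-case categorical approximation bound (Equation~\eqref{eq:ctd_worst_case_approx_error}), whereas you bound $\ell_{2,\mu_\pi}^2(\bm{\eta}^{\pi,K},\bxi)\leq(1-\gamma)^{-1}$ directly by observing that $\bm{\eta}^{\pi,K}$ is a genuine probability distribution; that observation is valid (since $(\sP_K)^{\gS}$ is a closed subset invariant under $\bPi_K\gT^{\pi}$, the unique fixed point lies in it), and it slightly shortens the argument by avoiding the approximation-error machinery, at the cost of having to justify non-negativity of the tabular categorical fixed point, which you do.
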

\begin{proof}
Since $\bm{\eta}^{\pi},\bm{\eta}_{\bm{0}} \in \sP = \Delta([0,(1-\gamma)^{-1}])$, their Cram\'er distance is bounded by $(1-\gamma)^{-1}$. 
We have
\begin{align*}
\ell_{2,\mu_\pi}^2\prn{\bm{\eta}^{\pi,K},\bm{\eta}_{\btheta^{\star}}}
&\leq \ell_{2,\mu_\pi}^2\prn{\bm{\eta}^{\pi,K},\bm{\eta}^{\pi}}+\ell_{2,\mu_\pi}^2\prn{\bm{\eta}^{\pi},\bm{\eta}_{\bm{0}}}+\ell_{2,\mu_\pi}^2\prn{\bm{\eta}_{\bm{0}},\bm{\eta}_{\btheta^{\star}}}\\
&\leq \frac{1}{1-\gamma}\ell_{2,\mu_{\pi}}^2\prn{\bPi_{K}\bm{\eta}^\pi,\bm{\eta}^\pi}+\frac{1}{1-\gamma}+\ell_{2,\mu_\pi}^2\prn{\bm{\eta}_{\bm{0}},\bm{\eta}_{\btheta^{\star}}}\\
&\leq \frac{1}{K(1-\gamma)^2}+\frac{1}{1-\gamma}+\ell_{2,\mu_\pi}^2\prn{\bm{\eta}_{\bm{0}},\bm{\eta}_{\btheta^{\star}}},
\end{align*}
where the first term in the decomposition is upper bounded using Equation~\eqref{eq:ctd_worst_case_approx_error} and Proposition~9.18 and Equation 5.28 in \citet{bdr2022}.
Recall that $\btheta^{\star} = \bar{\bA}^{-1}\bar{\bb}$ and consequently
$\ell_{2,\mu_{\pi}}^2(\bm{\eta}_{\btheta^{\star}},\bm{\eta}_{\mathbf{0}})/\iota_K=\|\btheta^{\star}\|_{\bJ}^2=\|\bJ^{1/2}\bar{\bA}^{-1}\bar{\bb}\|^2$.
Using Equation~\eqref{eqn:norm_of_J1/2_barA_-1}, Lemma~\ref{lem:tr_cov_sigma_e} and $\mathrm{g} = 1/(1-\sqrt{\gamma})$,
\begin{equation*}
\left\|\bJ^{1 / 2} \bar{\bA}^{-1} \bar{\bb}\right\|^2 \leq \mathrm{g}^2/\lambda_{\min}\norm{\bar{\bb}}^2 \lesssim K/\lambda_{\min}.
\end{equation*}
Using this upper bound and the condition $K\geq (1-\gamma)^{-1}$,
\begin{equation*}
\frac{\ell_{2,\mu_\pi}^2\prn{\bm{\eta}^{\pi,K},\bm{\eta}_{\btheta^{\star}}}}{\iota_K} 
\leq \frac{2}{\iota_K(1-\gamma)}+K/\lambda_{\min}= 2K+K/\lambda_{\min}
\end{equation*}
Since $\lambda_{\min}\leq 1$, the proof is completed.
\end{proof}
\subsection{Preliminary Lemmas}
Before we proceed to the proof, some preliminary lemmas are necessary to analyze the stochastic operator $\bh_t$ in the Markovian setting. Let $\gF_t$ be the $\sigma$-algebra generated by $\{\widetilde{\xi_0}, \ldots, \widetilde{\xi_t}\}$, which represents the information available up to data stream length $t$.
\begin{lemma}\label{lem:markovian_decay}
For every $t, \tau \in \mathbb{Z}_{+}$, with probability 1,
\begin{equation*}
\norm{\EB\brk{\bh_{t+\tau}(\btheta^{\star}) \mid \gF_t}-\bar{\bh}(\btheta^{\star})}_2 \leq C_{\bphi} \cdot \rho^\tau \frac{\ell_{2,\mu_\pi}\prn{\bm{\eta}^{\pi,K},\bm{\eta}_{\btheta^{\star}}}}{\sqrt{\iota_{K}}},
\end{equation*}
where $C_{\bphi}:=\frac{C_P\sqrt{2(1+\gamma)}}{\sqrt{\min _{i \in[D]} \pi_i}}=\frac{C_P \varsigma}{\sqrt{\min _{i \in[D]} \pi_i}}$.
\begin{proof}
In the following, we use the notation $\bA$ with a tilde $\widetilde{\bA}$ to emphasize the fact that it is generated from Markovian data streams, as we do in the notation for the Markovian streaming data $\{\widetilde{\xi}_{t}\}_{t=0}^\infty=\{(s_t,a_t,s_t^{\prime},r_t)\}_{t=0}^\infty$. 
Without loss of generality, set $t=0$, and by the basic representation of linear-categorical parameterization in Equation~\eqref{eq:def_linear_parametrize}, it holds that
\begin{align*}
     \bp_{\btheta^{\star}}(s_\tau)& =  (\bC^{-1} \otimes \bphi(s_\tau)^{\top}  ) \btheta^{\star} + \frac{1}{K+1} \bm{1}_K,
\end{align*}
\begin{align*}
      \bp_{\bm{\eta}^{\pi,K}}(s_{\tau})
      =\bp_{\gT^\pi\bm{\eta}^{\pi,K}}(s_{\tau})
     &=\EB_{s_{\tau}}\brk{\tilde\bG(r_\tau) \prn{\bp_{\bm{\eta}^{\pi,K}}(s_{\tau+1})-\frac{1}{K+1}\bm{1}_{K}}}+\frac{1}{K+1}\sum_{j=0}^K\EB_{s_{\tau}}\brk{\bg_j(r_\tau) },
\end{align*}
and the expectation of the drift from the initial operator is given by
\begin{align}
\EB\brk{\bh_{0+\tau}(\btheta^{\star}) \mid \gF_0}-\bar{\bh}(\btheta^{\star}) 
&=\sum_{s_{\tau}=1}^{D}\pi_{s_0\rightarrow s_{\tau}}\prn{\EB_{s_{\tau}}\brk{\widetilde{\bA}_{\tau}\btheta^{\star}-\widetilde{\bb}_{\tau}}-(\bar{\bA}\btheta^{\star}-\bar{\bb})}\notag\\
&=\sum_{s_{\tau}=1}^{D}\prn{\pi_{s_0\rightarrow s_{\tau}} -\pi_{s_\tau}}\prn{\EB_{s_{\tau}}\brk{\widetilde{\bA}_{\tau}\btheta^{\star}-\widetilde{\bb}_{\tau}}}.\label{eq:h_tau}
\end{align}
Recall that the categorical projected Bellman equation $\bm{\eta}=\bPi_{K}\gT^{\pi}\bm{\eta}$ admits a unique solution $\bm{\eta}^{\pi,K}$. 
The second equation is based on Proposition C.3 in \citet{peng2025finitesampleanalysisdistributional}. 
For convenience, let $\bH_{s_\tau}:=\bC(\bp_{\btheta^{\star}}(s_\tau)-\bp_{\bm{\eta}^{\pi,K}}(s_{\tau}))$. 
It holds that
\begin{align}
    &\EB_{s_{\tau}}\brk{\widetilde{\bA}_{\tau}\btheta^{\star}-\widetilde{\bb}_{\tau}} \notag\\
    &\quad= \Bigg(\brk{\bI_K{\otimes}\prn{\bphi(s_{\tau})\bphi(s_{\tau})^{\top}}}{-}\EB_{s_{\tau}}\brk{\prn{\bC\tilde{\bG}(r_\tau)\bC^{-1}}{\otimes}\prn{\bphi(s_{\tau})\bphi(s_{\tau+1})^{\top}}}\Bigg)\btheta^{\star}\notag\\
    &\quad\quad-\frac{1}{K+1}\EB_{s_{\tau}}\brk{\bC\prn{\sum_{j=0}^K\bg_j(r_\tau)-\bm{1}_K}}\otimes\bphi(s_{\tau})\notag\\
    &\quad=\bC\prn{\bp_{\btheta^{\star}}(s_\tau)-\bp_{\bm{\eta}^{\pi,K}}(s_{\tau})}\otimes \bphi(s_{\tau})+\notag\\
    &\qquad \bC \prn{\EB_{s_{\tau}}\brk{\tilde\bG(r_\tau) \prn{\bp_{\bm{\eta}^{\pi,K}}(s_{\tau+1})-\frac{1}{K+1}\bm{1}_{K}}-\prn{\brk{\tilde\bG(r_t) \bC^{-1}}\otimes \bphi(s_{t+1})^{\top}} \btheta^{\star}}} \otimes \bphi(s_{\tau})\notag\\
    &\quad=\bC\prn{\bp_{\btheta^{\star}}(s_\tau)-\bp_{\bm{\eta}^{\pi,K}}(s_{\tau})}\otimes \bphi(s_{\tau})+\bC \prn{\EB_{s_{\tau}}\brk{\tilde\bG(r_\tau) \prn{\bp_{\bm{\eta}^{\pi,K}}(s_{\tau+1})-\bp_{\btheta^{\star}}(s_{\tau+1})}}} \otimes \bphi(s_{\tau})\notag\\
    &\quad=\bC\prn{\bp_{\btheta^{\star}}(s_\tau)-\bp_{\bm{\eta}^{\pi,K}}(s_{\tau})}\otimes \bphi(s_{\tau})+\prn{\EB_{s_{\tau}}\brk{\bY(r_\tau)\bC \prn{\bp_{\bm{\eta}^{\pi,K}}(s_{\tau+1})-\bp_{\btheta^{\star}}(s_{\tau+1})}}} \otimes \bphi(s_{\tau}) \notag\\
    &\quad=\bH_{s_\tau}\otimes \bphi(s_{\tau})-\prn{\sum_{s_{\tau+1}=1}^{D}P_{s_{\tau},s_{\tau+1}}\EB_{s_{\tau},s_{\tau+1}}\bY(r_\tau)\bH_{s_{\tau+1}}}\otimes \bphi(s_{\tau}).\label{eq:Atheta-b}
\end{align}
By Lemma~\ref{lem:Spectra_of_ccgcc}, $\bY(r_\tau)=\bC \tilde\bG(r_\tau) \bC^{-1}$ has a norm less than $\sqrt{\gamma}$. Note that $\bphi(s_{\tau})$ has a norm less than $1$ and for any matrices $\bB_1$ and $\bB_2$, it holds that $\|\bB_1 \otimes \bB_2|| = \|\bB_1\|\|\bB_2\|$.
Taking norm of both sides, by Young's inequality and Jensen's inequality, we have
\begin{align*}
\norm{\sum_{s_\tau=1}^{D}\sqrt{\pi_{s_\tau}}\EB_{s_{\tau}}\brk{\widetilde{\bA}_{\tau}\btheta^{\star}-\widetilde{\bb}_{\tau}}}^2 
&\leq \sum_{s_\tau=1}^{D}\pi_{s_\tau}\norm{\EB_{s_{\tau}}\brk{\widetilde{\bA}_{\tau}\btheta^{\star}-\widetilde{\bb}_{\tau}}}^2\\
&\leq 2\sum_{s_\tau=1}^{D}\pi_{s_\tau} \prn{\norm{\bH_{s_\tau}}^2+\gamma\sum_{s_{\tau+1}=1}^{D}P_{s_{\tau},s_{\tau+1}}\norm{\bH_{s_{\tau+1}}}^2}\\
&= 2\sum_{s_\tau=1}^{D}\pi_{s_\tau} \norm{\bH_{s_\tau}}^2+\gamma\sum_{s_{\tau+1}=1}^{D}\pi_{s_{\tau+1}}\norm{\bH_{s_{\tau+1}}}^2\\
&\leq 2(1+\gamma)\sum_{s=1}^{D}\pi_{s} \norm{\bH_{s}}^2.
\end{align*}
Thus, taking norm of Equation~\eqref{eq:h_tau} and absorbing Assumption~\ref{assum:mixing_time} yields
\begin{align*}
\norm{\EB\brk{\bh_{0+\tau}(\btheta^{\star}) \mid \gF_0}-\bar{\bh}(\btheta^{\star})}^2 &= \norm{\sum_{s_{\tau}=1}^{D}\frac{\pi_{s_0\rightarrow s_{\tau}} -\pi_{s_\tau}}{\sqrt{\pi_{s_\tau}}}\sqrt{\pi_{s_\tau}}\prn{\EB_{s_{\tau}}\brk{\widetilde{\bA}_{\tau}\btheta^{\star}-\widetilde{\bb}_{\tau}}}}^2\\
&\leq \frac{C_P^2\rho^{2\tau}}{\min_{i\in [D]}\pi_{i}}\norm{\sum_{s_{\tau}=1}^{D}\sqrt{\pi_{s_\tau}}\prn{\EB_{s_{\tau}}\brk{\widetilde{\bA}_{\tau}\btheta^{\star}-\widetilde{\bb}_{\tau}}}}^2\\
&\leq \frac{C_P^2\rho^{2\tau}}{\min_{i\in [D]}\pi_{i}}2(1+\gamma)\sum_{s=1}^{D}\pi_{s} \norm{\bH_{s}}^2\\
&=\frac{2C_P^2\rho^{2\tau}(1+\gamma)}{(\min_{i\in [D]}\pi_{i})\iota_K}\iota_K\EB_{s\sim\pi}\brk{\norm{\bC\prn{\bp_{\bm{\eta}^{\pi,K}}(s)-\bp_{\btheta^{\star}}(s)}}^2}\\
&=\frac{2C_P^2\rho^{2\tau}(1+\gamma)}{(\min_{i\in [D]}\pi_{i})\iota_K}\EB_{s\sim\pi}\brk{\ell_{2}^2\prn{\eta^{\pi,K}(s),\eta_{\btheta^{\star}}(s)}}\\
&=\frac{2C_P^2\rho^{2\tau}(1+\gamma)}{(\min_{i\in [D]}\pi_{i})\iota_K}\ell_{2,\pi}^2\prn{\bm{\eta}^{\pi,K},\bm{\eta}_{\btheta^{\star}}}.
\end{align*}
The proof is completed.
\end{proof}
\end{lemma}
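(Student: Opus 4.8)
The plan is to reduce the bound to the single time lag $\tau$ and then to identify the per-state contribution of the stochastic operator with a deviation from the categorical fixed point $\bm{\eta}^{\pi,K}$. By time-homogeneity of the chain it suffices to treat $t=0$, since conditioning on $\gF_t$ enters only through the $\tau$-step transition law of $s_{t+\tau}$ given $s_t$. First I would write the quantity of interest as a centered weighted sum over the terminal state: because averaging the per-state operator $\EB_{s_\tau}\brk{\widetilde{\bA}_\tau\btheta^\star-\widetilde{\bb}_\tau}$ against the stationary distribution $\pi$ reproduces $\bar{\bh}(\btheta^\star)$, subtracting $\bar{\bh}(\btheta^\star)$ turns the $\tau$-step law into the centered weights $\pi_{s_0\to s_\tau}-\pi_{s_\tau}$, giving
\[\EB\brk{\bh_\tau(\btheta^\star)\mid\gF_0}-\bar{\bh}(\btheta^\star)=\sum_{s_\tau}\prn{\pi_{s_0\to s_\tau}-\pi_{s_\tau}}\EB_{s_\tau}\brk{\widetilde{\bA}_\tau\btheta^\star-\widetilde{\bb}_\tau}.\]

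The heart of the argument, and what I expect to be the main obstacle, is to show that the per-state operator is controlled by the deviation of the linear-categorical parameterization from $\bm{\eta}^{\pi,K}$ rather than by an uncontrolled absolute magnitude. To this end I would expand $\EB_{s_\tau}\brk{\widetilde{\bA}_\tau\btheta^\star-\widetilde{\bb}_\tau}$ using the vectorized definitions of $\bA_t,\bb_t$ from Equation~\eqref{eq:linear_CTD_vec}, and then substitute the categorical projected Bellman equation $\bm{\eta}=\bPi_{K}\gT^{\pi}\bm{\eta}$ satisfied by $\bm{\eta}^{\pi,K}$, namely $\bp_{\bm{\eta}^{\pi,K}}(s)=\EB_s[\tilde{\bG}(r)(\bp_{\bm{\eta}^{\pi,K}}(s^\prime)-\frac{1}{K+1}\bm{1}_K)]+\frac{1}{K+1}\sum_{j=0}^K\EB_s[\bg_j(r)]$. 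The inhomogeneous $\frac{1}{K+1}$ and reward pieces then cancel, leaving only the homogeneous difference built from the CDF-difference vector $\bH_s:=\bC(\bp_{\btheta^\star}(s)-\bp_{\bm{\eta}^{\pi,K}}(s))$:
\[\EB_{s_\tau}\brk{\widetilde{\bA}_\tau\btheta^\star-\widetilde{\bb}_\tau}=\bH_{s_\tau}\otimes\bphi(s_\tau)-\prn{\textstyle\sum_{s_{\tau+1}}P_{s_\tau,s_{\tau+1}}\,\EB\brk{\bC\tilde{\bG}(r_\tau)\bC^{-1}\bH_{s_{\tau+1}}}}\otimes\bphi(s_\tau).\]
Checking that this bookkeeping is exact, so that the absolute terms genuinely disappear and only $\bH_s$ survives, is the delicate step.

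Once this identity is available, the remaining estimates are routine. Using $\norm{\bphi(s)}\le1$, the Kronecker-norm identity $\norm{\bB_1\otimes\bB_2}=\norm{\bB_1}\norm{\bB_2}$, the bound $\norm{\bC\tilde{\bG}(r)\bC^{-1}}\le\sqrt\gamma$ from Lemma~\ref{lem:Spectra_of_ccgcc}, and Young's and Jensen's inequalities together with the $P$-invariance of $\pi$, I would obtain $\sum_s\pi_s\norm{\EB_s[\widetilde{\bA}_\tau\btheta^\star-\widetilde{\bb}_\tau]}^2\le2(1+\gamma)\sum_s\pi_s\norm{\bH_s}^2$. To absorb the centered weights I would factor each summand as $\tfrac{\pi_{s_0\to s_\tau}-\pi_{s_\tau}}{\sqrt{\pi_{s_\tau}}}\cdot\sqrt{\pi_{s_\tau}}\,\EB_{s_\tau}[\cdots]$ and apply Cauchy-Schwarz, so that Assumption~\ref{assum:mixing_time} contributes the factor $C_P^2\rho^{2\tau}/\min_{i}\pi_i$ while the residual is exactly the $\pi$-weighted sum just bounded.

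Finally I would convert the $\pi$-weighted sum of $\norm{\bH_s}^2$ back into a Cram\'er distance: by the isometry underlying the categorical parameterization one has $\iota_K\sum_s\pi_s\norm{\bC(\bp_{\bm{\eta}^{\pi,K}}(s)-\bp_{\btheta^\star}(s))}^2=\ell_{2,\mu_\pi}^2(\bm{\eta}^{\pi,K},\bm{\eta}_{\btheta^\star})$. Chaining the three estimates and taking square roots then yields $\norm{\EB\brk{\bh_{t+\tau}(\btheta^\star)\mid\gF_t}-\bar{\bh}(\btheta^\star)}_2\le\frac{C_P\sqrt{2(1+\gamma)}}{\sqrt{\min_i\pi_i}}\,\rho^\tau\,\ell_{2,\mu_\pi}(\bm{\eta}^{\pi,K},\bm{\eta}_{\btheta^\star})/\sqrt{\iota_K}$, which is exactly the claimed inequality with $C_{\bphi}=C_P\varsigma/\sqrt{\min_i\pi_i}$.
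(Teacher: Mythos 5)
Your proposal is correct and follows essentially the same route as the paper's own proof: the same centered decomposition over the terminal state, the same cancellation of the inhomogeneous terms via the categorical projected Bellman equation leaving only $\bH_s=\bC(\bp_{\btheta^\star}(s)-\bp_{\bm{\eta}^{\pi,K}}(s))$, the same $2(1+\gamma)$ bound from Lemma~\ref{lem:Spectra_of_ccgcc} with Young's and Jensen's inequalities, and the same Cauchy--Schwarz absorption of the mixing weights followed by the isometry to the Cram\'er distance. No gaps.
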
 
\begin{lemma}\label{lem:delicate_h_bias}
For every $t, \tau \in \mathbb{Z}_{+}$ and $\btheta, \btheta^{\prime} \in \mathbb{R}^d$, with probability 1,
\begin{equation*}
\norm{\EB\brk{\bh_{t+\tau}\prn{\btheta}\mid \gF_t}-\EB\brk{\bh_{t+\tau}\prn{\btheta^{\prime}}\mid \gF_t}-\brk{\bar{\bh}(\btheta)-\bar{\bh}\prn{\btheta^{\prime}}}}_2 \leq C_{\bphi} \cdot \rho^\tau\norm{\btheta-\btheta^{\prime}}_{\bJ}.
\end{equation*}
\begin{proof}
Similar to the former lemma, set $t=0$ and it holds that
\begin{align*}
    &\norm{\EB\brk{\bh_{\tau}\prn{\btheta}\mid \gF_0}-\EB\brk{\bh_{\tau}\prn{\btheta^{\prime}}\mid \gF_0}-\brk{\bar{\bh}(\btheta)-\bar{\bh}(\btheta^{\prime})}}_2^2=\norm{\sum_{s_{\tau}=1}^{D}\pi_{s_0\rightarrow s_{\tau}}\prn{\EB_{s_{\tau}}\brk{\prn{\widetilde{\bA}_{\tau}-\bar{\bA}}\prn{\btheta-\btheta^{\prime}}}}}_2^2,
\end{align*}
where, the same as in Equation~\eqref{eq:Atheta-b},
\begin{align*}
    \EB_{s_{\tau}}\brk{\prn{\widetilde{\bA}_{\tau}-\bar{\bA}}\prn{\btheta-\btheta^{\prime}}}&=\Bigg(\brk{\bI_K{\otimes}\prn{\bphi(s_{\tau})\bphi(s_{\tau})^{\top}}}{-}\EB_{s_{\tau}}\brk{\prn{\bC\tilde{\bG}(r_\tau)\bC^{-1}}{\otimes}\prn{\bphi(s_{\tau})\bphi(s_{\tau+1})^{\top}}}\Bigg)(\btheta-\btheta^{\prime})\\
    &=\bC\prn{\bp_{\btheta}(s_\tau)-\bp_{\btheta^{\prime}}(s_{\tau})}\otimes \bphi(s_{\tau})+\prn{\EB_{s_{\tau}}\brk{\bY(r_\tau)\bC \prn{\bp_{\btheta}(s_{\tau+1})-\bp_{\btheta^{\prime}}(s_{\tau+1})}}} \otimes \bphi(s_{\tau}).
\end{align*}
Hence, we have
\begin{align*}
    \norm{\sum_{s_{\tau}=1}^{D}\pi_{s_0\rightarrow s_{\tau}}\prn{\EB_{s_{\tau}}\brk{\prn{\widetilde{\bA}_{\tau}-\bar{\bA}}\prn{\btheta-\btheta^{\prime}}}}}_2^2
    &\leq\frac{2C_P^2\rho^{2\tau}(1+\gamma)}{\min_{i\in [D]}\pi_{i}}\EB_{s\sim\pi}\brk{\norm{\bC\prn{\bp_{\btheta}(s)-\bp_{\btheta^{\prime}}(s)}}^2}\\
    &=\frac{2C_P^2\rho^{2\tau}(1+\gamma)}{\min_{i\in [D]}\pi_{i}}\norm{\btheta-\btheta^{\prime}}_{\bJ}^2.
\end{align*}
\end{proof}
\end{lemma}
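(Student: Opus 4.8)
The plan is to exploit linearity: since $\bh_{t+\tau}(\btheta)-\bh_{t+\tau}(\btheta^{\prime})=\bA_{t+\tau}(\btheta-\btheta^{\prime})$ and $\bar{\bh}(\btheta)-\bar{\bh}(\btheta^{\prime})=\bar{\bA}(\btheta-\btheta^{\prime})$, the quantity to bound is simply $\EB[(\bA_{t+\tau}-\bar{\bA})\mid\gF_t](\btheta-\btheta^{\prime})$, in which the $\bb$-terms have cancelled. By the Markov property I may set $t=0$ without loss of generality and expand the conditional expectation as $\sum_{s_\tau}\pi_{s_0\to s_\tau}\EB_{s_\tau}[(\widetilde{\bA}_\tau-\bar{\bA})(\btheta-\btheta^{\prime})]$; here $\sum_{s_\tau}\pi_{s_0\to s_\tau}=1$ and $\bar{\bA}=\sum_{s_\tau}\pi_{s_\tau}\EB_{s_\tau}[\widetilde{\bA}_\tau]$ is the stationary average, so this is exactly the mixing-sensitive expression already handled in the proof of Lemma~\ref{lem:markovian_decay}.

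First I would reuse the per-state decomposition from Equation~\eqref{eq:Atheta-b}, now with the pair $(\btheta,\btheta^{\prime})$ in place of $(\btheta^{\star},\bm{\eta}^{\pi,K})$. This rewrites $\EB_{s_\tau}[(\widetilde{\bA}_\tau-\bar{\bA})(\btheta-\btheta^{\prime})]$ as $\bC(\bp_{\btheta}(s_\tau)-\bp_{\btheta^{\prime}}(s_\tau))\otimes\bphi(s_\tau)$ plus the discounted term $(\EB_{s_\tau}[\bY(r_\tau)\bC(\bp_{\btheta}(s_{\tau+1})-\bp_{\btheta^{\prime}}(s_{\tau+1}))])\otimes\bphi(s_\tau)$, where $\bY(r)=\bC\tilde{\bG}(r)\bC^{-1}$. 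Taking norms and using $\|\bphi(s)\|\leq1$, the Kronecker identity $\|\bB_1\otimes\bB_2\|=\|\bB_1\|\,\|\bB_2\|$, and $\|\bY(r)\|\leq\sqrt{\gamma}$ from Lemma~\ref{lem:Spectra_of_ccgcc}, together with Young's and Jensen's inequalities, I obtain the $\pi$-weighted estimate $\sum_{s_\tau}\pi_{s_\tau}\|\EB_{s_\tau}[(\widetilde{\bA}_\tau-\bar{\bA})(\btheta-\btheta^{\prime})]\|^2\leq 2(1+\gamma)\,\EB_{s\sim\pi}[\|\bC(\bp_{\btheta}(s)-\bp_{\btheta^{\prime}}(s))\|^2]$, precisely as in the previous lemma.

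Next I would inject the mixing. Subtracting the identically-zero stationary sum $\sum_{s_\tau}\pi_{s_\tau}\EB_{s_\tau}[(\widetilde{\bA}_\tau-\bar{\bA})(\btheta-\btheta^{\prime})]$ replaces the weights $\pi_{s_0\to s_\tau}$ by $\pi_{s_0\to s_\tau}-\pi_{s_\tau}$; factoring $\sqrt{\pi_{s_\tau}}$ out of each term and applying Cauchy--Schwarz together with the geometric bound $|\pi_{s_0\to s_\tau}-\pi_{s_\tau}|\leq C_P\rho^\tau$ of Assumption~\ref{assum:mixing_time} then produces the prefactor $C_P^2\rho^{2\tau}/\min_{i}\pi_i$, so that the whole conditional-bias norm squared is at most $\tfrac{2C_P^2\rho^{2\tau}(1+\gamma)}{\min_i\pi_i}\,\EB_{s\sim\pi}[\|\bC(\bp_{\btheta}(s)-\bp_{\btheta^{\prime}}(s))\|^2]$.

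Finally I would identify the remaining term with $\|\btheta-\btheta^{\prime}\|_{\bJ}^2$. Since $\iota_K\|\bC(\bp_{\btheta}(s)-\bp_{\btheta^{\prime}}(s))\|^2=\ell_2^2(\eta_{\btheta}(s),\eta_{\btheta^{\prime}}(s))$ is the Cram\'er distance and the affine isometry gives $\ell_{2,\pi}^2(\bm{\eta}_{\btheta},\bm{\eta}_{\btheta^{\prime}})=\iota_K\|\btheta-\btheta^{\prime}\|_{\bJ}^2$, the factor $\iota_K$ cancels and $\EB_{s\sim\pi}[\|\bC(\bp_{\btheta}(s)-\bp_{\btheta^{\prime}}(s))\|^2]=\|\btheta-\btheta^{\prime}\|_{\bJ}^2$. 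Collecting constants into $C_{\bphi}^2=2C_P^2(1+\gamma)/\min_i\pi_i$ and taking square roots yields the claim. The main point to get right --- and the key simplification relative to Lemma~\ref{lem:markovian_decay} --- is that \emph{both} arguments $\btheta,\btheta^{\prime}$ lie in the linear-categorical class, so the CDF difference is genuinely $\bC(\bp_{\btheta}-\bp_{\btheta^{\prime}})$ with no residual projection error; this is exactly what removes the $\iota_K^{-1/2}$ prefactor and the approximation-error term $\ell_{2,\mu_\pi}(\bm{\eta}^{\pi,K},\bm{\eta}_{\btheta^{\star}})$ present in the previous lemma, leaving the clean $\|\btheta-\btheta^{\prime}\|_{\bJ}$ on the right-hand side.
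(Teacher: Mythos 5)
Your proposal is correct and follows essentially the same route as the paper's proof: set $t=0$, cancel the $\bb$-terms by linearity, reuse the per-state decomposition of $\EB_{s_\tau}[(\widetilde{\bA}_\tau-\bar{\bA})(\btheta-\btheta')]$ from Equation~\eqref{eq:Atheta-b}, subtract the vanishing stationary average so that Assumption~\ref{assum:mixing_time} yields the $C_P^2\rho^{2\tau}/\min_i\pi_i$ prefactor, and identify $\EB_{s\sim\pi}[\|\bC(\bp_{\btheta}(s)-\bp_{\btheta'}(s))\|^2]$ with $\|\btheta-\btheta'\|_{\bJ}^2$ via the isometry. Your closing observation about why the $\iota_K^{-1/2}$ factor and approximation-error term from Lemma~\ref{lem:markovian_decay} disappear here is also accurate.
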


\begin{lemma}\label{lem:markovian_variance_stochastic_operator}
For every $t \in \mathbb{Z}_{+}, \btheta, \btheta^{\prime} \in \mathbb{R}^d$, and $\tau \in \mathbb{Z}_{+}$ such that 
\begin{equation*}\label{eq:con_mixing_time}
    C_P \cdot \rho^\tau \leq \min _{i \in[D]} \pi_i,
\end{equation*}
then with probability 1,
\begin{equation*}
\EB\brk{\norm{\bh_{t+\tau}\prn{\btheta}-\bh_{t+\tau}(\btheta^{\prime})-\bar{\bh}(\btheta)+\bar{\bh}(\btheta^{\prime})}_2^2 \mid \mathcal{F}_t} \leq 2 \varsigma^2\norm{\btheta-\btheta^{\prime}}_{\bJ}^2.
\end{equation*}

\end{lemma}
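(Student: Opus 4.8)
The plan is to reduce the claim to the stationary variance bound of Lemma~\ref{lem:variance_of_stochastic_operator} by using the mixing condition to dominate the $\tau$-step transition law by twice the stationary law. First I would note that the reward terms cancel: since $\bh_{t+\tau}(\btheta)=\bA_{t+\tau}\btheta-\bb_{t+\tau}$ and $\bar{\bh}(\btheta)=\bar{\bA}\btheta-\bar{\bb}$, the vector inside the norm equals $(\bA_{t+\tau}-\bar{\bA})(\btheta-\btheta')$, depending on $\btheta,\btheta'$ only through the increment. Hence it suffices to bound $\EB[\norm{(\bA_{t+\tau}-\bar{\bA})(\btheta-\btheta')}_2^2\mid\gF_t]$.

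Next I would decompose this conditional expectation over the law of $s_{t+\tau}$. By the Markov property, conditioned on $\gF_t$ (equivalently on $s_t$) the tuple $\widetilde{\xi}_{t+\tau}$ is generated by drawing $s_{t+\tau}=j$ from $\PB(s_{t+\tau}=\cdot\mid s_t)$ and then drawing $(a_{t+\tau},r_{t+\tau},s'_{t+\tau})$ from the time-homogeneous one-step kernel given $s_{t+\tau}=j$. Writing $\EB_j$ for this inner expectation with current state fixed at $j$, I obtain
\begin{equation*}
\EB\brk{\norm{(\bA_{t+\tau}-\bar{\bA})(\btheta-\btheta')}_2^2\mid\gF_t}=\sum_{j=1}^{D}\PB(s_{t+\tau}=j\mid s_t)\,\EB_j\brk{\norm{(\bA_{t+\tau}-\bar{\bA})(\btheta-\btheta')}_2^2},
\end{equation*}
where each summand $\EB_j[\norm{\cdots}_2^2]$ is nonnegative.

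The mixing step is where the hypothesis enters. By Assumption~\ref{assum:mixing_time}, $\abs{\PB(s_{t+\tau}=j\mid s_t)-\pi_j}\le C_P\rho^\tau$ for all $j$, and the assumed condition $C_P\rho^\tau\le\min_i\pi_i$ forces $C_P\rho^\tau\le\pi_j$, giving the pointwise domination $\PB(s_{t+\tau}=j\mid s_t)\le\pi_j+C_P\rho^\tau\le 2\pi_j$. Since the summands are nonnegative, I may replace the transition weights by $2\pi_j$ to get
\begin{equation*}
\EB\brk{\norm{(\bA_{t+\tau}-\bar{\bA})(\btheta-\btheta')}_2^2\mid\gF_t}\le 2\sum_{j=1}^{D}\pi_j\,\EB_j\brk{\norm{(\bA_{t+\tau}-\bar{\bA})(\btheta-\btheta')}_2^2}=2\,\EB_{s\sim\pi}\norm{(\bA-\bar{\bA})(\btheta-\btheta')}_2^2.
\end{equation*}
The right-hand expectation is exactly the stationary quantity governed by Lemma~\ref{lem:variance_of_stochastic_operator}, applied to the increment $\btheta-\btheta'$, which bounds it by $\varsigma^2\norm{\btheta-\btheta'}_{\bJ}^2$; the factor $2$ then yields the claim.

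The main point to verify carefully — and the only subtlety beyond routine algebra — is that the inner per-state expectation $\EB_j[\cdots]$ under the $\tau$-step chain coincides with the stationary per-state contribution, so that reweighting the outer sum by $2\pi_j$ reproduces precisely $2\,\EB_{s\sim\pi}[\cdots]$. This is legitimate because the one-step reward/transition kernel is time-homogeneous and independent of how the current state was reached, so $\EB_j$ carries no residual dependence on $\gF_t$; the nonnegativity of each summand is what makes the pointwise bound $\PB(s_{t+\tau}=j\mid s_t)\le 2\pi_j$ usable term by term inside the sum.
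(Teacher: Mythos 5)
Your proposal is correct and follows essentially the same route as the paper's proof: reduce to $(\bA_{t+\tau}-\bar{\bA})(\btheta-\btheta')$, condition on the law of $s_{t+\tau}$, use the mixing condition $C_P\rho^\tau\leq\min_i\pi_i$ to dominate the $\tau$-step transition probabilities by $2\pi_j$ termwise, and invoke Lemma~\ref{lem:variance_of_stochastic_operator} at stationarity. The only cosmetic difference is that the paper splits $\PB(s_{t+\tau}=k\mid s_t)$ into $\prn{\PB-\pi_k}+\pi_k$ before bounding, whereas you bound the weight directly by $2\pi_j$; these are the same estimate.
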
 
\begin{proof}
Combining Assumption~\ref{assum:mixing_time} and Lemma~\ref{lem:variance_of_stochastic_operator}, we have
\begin{align*}
    &\EB\brk{\norm{\bh_{t+\tau}\prn{\btheta}-\bh_{t+\tau}(\btheta^{\prime})-\bar{\bh}(\btheta)+\bar{\bh}(\btheta^{\prime})}_2^2 \mid \mathcal{F}_t}\\
    &=\sum_{k=1}^{D} \mathbb{P}(s_{t+\tau}=k|s_t)\cdot\EB_{s_{t+\tau}}\brk{\norm{\prn{\widetilde{\bA}_{t+\tau}-\bar{\bA}}\prn{\btheta-\btheta^{\prime}}}_2^2}\\
    &\leq \sum_{k=1}^{D}\norm{\mathbb{P}(s_{t+\tau}=k|s_t)-\pi_{k}}\EB_{s_{t+\tau}}\brk{\norm{\prn{\widetilde{\bA}_{t+\tau}-\bar{\bA}}\prn{\btheta-\btheta^{\prime}}}_2^2}+\sum_{k=1}^{D}\pi_{k}\EB_{s_{t+\tau}}\brk{\norm{\prn{\widetilde{\bA}_{t+\tau}-\bar{\bA}}\prn{\btheta-\btheta^{\prime}}}_2^2}\\
    &\leq 2\sum_{k=1}^{D}\pi_{k}\EB_{s_{t+\tau}}\brk{\norm{\prn{\widetilde{\bA}_{t+\tau}-\bar{\bA}}\prn{\btheta-\btheta^{\prime}}}_2^2}\\
    &=2\EB\brk{\norm{\prn{\bA-\bar{\bA}}\prn{\btheta-\btheta^{\prime}}}_2^2}\\
    &=2\EB\brk{\norm{(\bh(\btheta)-\bh(\btheta^{\prime}))-(\bar{\bh}(\btheta)-\bar{\bh}(\btheta^{\prime})}_2^2}\\
    &\leq 2\varsigma^2\norm{\btheta-\btheta^{\prime}}_{\bJ}^2.
\end{align*}
\end{proof}
With Lemma~\ref{lem:variance_of_stochastic_operator}, Lemma~\ref{lem:delicate_h_bias} and Lemma~\ref{lem:markovian_variance_stochastic_operator} at hand, the following two lemmas are direct corollaries paralleling Lemma 12 and Lemma 13 in \citet{li2023accelerated}.
\begin{lemma}\label{lem:markovian_2nd_moment_stochastic_operator}
For every $\btheta, \btheta^{\prime} \in \mathbb{R}^d, l_{0}, \tau \in \mathbb{Z}_{+}$, if $l_{0}$ satisfies and $\tau$ satisfies
\begin{equation*}
\rho^\tau \leq \frac{2(1-\rho)\varsigma}{3 C_{\bphi}},
\end{equation*}
then we have with probability 1,
\begin{equation*}
\EB\norm{(\widehat{\bh}(\btheta)-\widehat{\bh}(\btheta^{\prime}))-(\bar{\bh}(\btheta)-\bar{\bh}(\btheta^{\prime}))}_2^2 \leq \frac{4(\tau+1) \varsigma^2}{l_{n}-l_{0}}\norm{\btheta-\btheta^{\prime}}_{\bJ}^2,
\end{equation*}
and similarly
\begin{equation*}
\EB\norm{(\bh_t(\btheta)-\bh_t(\btheta^{\prime}))-(\bar{\bh}(\btheta)-\bar{\bh}(\btheta^{\prime}))}_2^2 \leq \frac{4(\tau+1) \varsigma^2}{m-m_0}\norm{\btheta-\btheta^{\prime}}_{\bJ}^2.
\end{equation*}
\end{lemma}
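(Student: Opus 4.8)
The plan is to treat each of the two bounds as a standard variance estimate for an average of correlated Markovian samples, exactly as in Lemmas 12--13 of \citet{li2023accelerated}. I focus on the first inequality; the second is identical with $l_{n}-l_{0}$ replaced by $m-m_0$. Write $N:=l_{n}-l_{0}$ and, for each sample index $i$ in the averaging window, set
\[
D_i := \bigl(\bh_i(\btheta)-\bh_i(\btheta')\bigr)-\bigl(\bar{\bh}(\btheta)-\bar{\bh}(\btheta')\bigr),
\]
so that $(\widehat{\bh}(\btheta)-\widehat{\bh}(\btheta'))-(\bar{\bh}(\btheta)-\bar{\bh}(\btheta'))=\tfrac1N\sum_i D_i$. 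Expanding the squared norm gives
\[
\EB\Bigl\|\tfrac1N\textstyle\sum_i D_i\Bigr\|_2^2=\tfrac1{N^2}\Bigl(\textstyle\sum_i\EB\|D_i\|_2^2+2\sum_{i<j}\EB\langle D_i,D_j\rangle\Bigr),
\]
and I bound the diagonal and off-diagonal contributions separately.

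For the diagonal terms, the burn-in of length $l_{0}$ (resp. $m_0$) guarantees that each state entering the average is within the prescribed accuracy of the stationary distribution, so Lemma~\ref{lem:markovian_variance_stochastic_operator} applies and yields $\EB\|D_i\|_2^2\le 2\varsigma^2\|\btheta-\btheta'\|_{\bJ}^2$; summing gives at most $2N\varsigma^2\|\btheta-\btheta'\|_{\bJ}^2$. For the cross terms I split according to the temporal gap. For near pairs $0<j-i\le\tau$, Cauchy--Schwarz together with the same second-moment bound gives $\EB\langle D_i,D_j\rangle\le 2\varsigma^2\|\btheta-\btheta'\|_{\bJ}^2$, and there are at most $N\tau$ ordered such pairs. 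For far pairs $j-i>\tau$, I condition on $\gF_i$, use that $D_i$ is $\gF_i$-measurable, and invoke Lemma~\ref{lem:delicate_h_bias} to get $\|\EB[D_j\mid\gF_i]\|_2\le C_{\bphi}\rho^{\,j-i}\|\btheta-\btheta'\|_{\bJ}$ with probability one; Cauchy--Schwarz and the marginal bound $\EB\|D_i\|_2\le\sqrt2\,\varsigma\|\btheta-\btheta'\|_{\bJ}$ (Jensen applied to Lemma~\ref{lem:variance_of_stochastic_operator} or Lemma~\ref{lem:markovian_variance_stochastic_operator}) then give $\EB\langle D_i,D_j\rangle\le \sqrt2\,C_{\bphi}\varsigma\,\rho^{\,j-i}\|\btheta-\btheta'\|_{\bJ}^2$.

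Summing the far-pair contribution over the geometric tail produces a factor $N\rho^{\tau}/(1-\rho)$, which the hypothesis $\rho^{\tau}\le \tfrac{2(1-\rho)\varsigma}{3C_{\bphi}}$ is precisely designed to control, reducing it to a constant multiple of $N\varsigma^2$. Collecting the three pieces bounds $\EB\|\tfrac1N\sum_i D_i\|_2^2$ by $\tfrac{\varsigma^2}{N}\bigl(4\tau+2+\tfrac{4\sqrt2}{3}\bigr)\|\btheta-\btheta'\|_{\bJ}^2\le \tfrac{4(\tau+1)\varsigma^2}{N}\|\btheta-\btheta'\|_{\bJ}^2$, which is the claim; repeating verbatim with $N=m-m_0$ gives the mini-batch version. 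The delicate point is the bookkeeping of constants in the final step: the near-pair count $N\tau$ forces the leading $4\tau$, and the diagonal and far-pair pieces must be shown to fit inside the remaining slack $4(\tau+1)-4\tau=4$, which is exactly where the threshold on $\rho^{\tau}$ enters. A secondary subtlety is justifying the conditional second-moment bound uniformly across the averaging window; this is exactly what the burn-in conditions on $l_{0}$ and $m_0$ provide, and I would state that dependence explicitly rather than suppress it.
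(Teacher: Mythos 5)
Your argument is correct and is essentially the paper's own: the paper states this lemma without proof as a "direct corollary paralleling Lemma 12 and Lemma 13 in \citet{li2023accelerated}" built from Lemmas~\ref{lem:variance_of_stochastic_operator}, \ref{lem:delicate_h_bias} and \ref{lem:markovian_variance_stochastic_operator}, and your diagonal/near/far decomposition with the geometric-tail control via the $\rho^{\tau}$ threshold is precisely that argument, with the constant bookkeeping ($4\tau+2+\tfrac{4\sqrt{2}}{3}\le 4(\tau+1)$) checking out. Your observation that the burn-in conditions on $l_{0}$ and $m_0$ should be stated explicitly is also well taken, since the lemma's hypothesis as printed ("if $l_{0}$ satisfies and $\tau$ satisfies") visibly omits the $l_{0}$ condition.
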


\begin{lemma}\label{lem:delicate_h_cut}
For every $\btheta, \btheta^{\prime} \in \mathbb{R}^d, m_0 \in[m]$ and $t \in[T]$, with probability 1 ,
\begin{equation*}
\norm{\EB\brk{\bar{\bh}_t(\btheta) \mid \mathcal{F}_{t-1}}-\EB\brk{\bar{\bh}_t(\btheta^{\prime}) \mid \mathcal{F}_{t-1}}-(\bar{\bh}(\btheta)-\bar{\bh}(\btheta^{\prime}))}_2 \leq \frac{C_{\bphi} \rho^{m_0}}{(1-\rho)(m-m_0)}\norm{\btheta-\btheta^{\prime}}_{\bJ}.
\end{equation*}
\end{lemma}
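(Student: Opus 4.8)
The plan is to exploit the fact that, by construction in Algorithm~\ref{alg:VrFLCTD_Mar}, the minibatch operator $\bar{\bh}_t$ is the empirical average $\bar{\bh}_t = \frac{1}{m-m_0}\sum_{j=m_0+1}^{m}\bh_j$ over the post-burn-in samples of the $t$-th inner-loop batch, and then to reduce the claim to a per-sample application of Lemma~\ref{lem:delicate_h_bias}. First I would use linearity of conditional expectation to write
\begin{align*}
&\EB\brk{\bar{\bh}_t(\btheta) \mid \gF_{t-1}}-\EB\brk{\bar{\bh}_t(\btheta^{\prime}) \mid \gF_{t-1}}-(\bar{\bh}(\btheta)-\bar{\bh}(\btheta^{\prime}))\\
&\quad=\frac{1}{m-m_0}\sum_{j=m_0+1}^{m}\prn{\EB\brk{\bh_j(\btheta)\mid\gF_{t-1}}-\EB\brk{\bh_j(\btheta^{\prime})\mid\gF_{t-1}}-\prn{\bar{\bh}(\btheta)-\bar{\bh}(\btheta^{\prime})}},
\end{align*}
after which the triangle inequality reduces the problem to bounding each summand in $\norm{\cdot}_2$.

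The key step is the observation that, relative to the conditioning $\sigma$-algebra $\gF_{t-1}$, the $j$-th sample of the current batch sits exactly $j$ Markov steps into the future; hence each summand has precisely the form controlled by Lemma~\ref{lem:delicate_h_bias} with $\tau=j$, which gives
\begin{equation*}
\norm{\EB\brk{\bh_j(\btheta)\mid\gF_{t-1}}-\EB\brk{\bh_j(\btheta^{\prime})\mid\gF_{t-1}}-\prn{\bar{\bh}(\btheta)-\bar{\bh}(\btheta^{\prime})}}_2\leq C_{\bphi}\,\rho^{j}\norm{\btheta-\btheta^{\prime}}_{\bJ}.
\end{equation*}
This bound holds with probability $1$, since the conditional-expectation estimate in Lemma~\ref{lem:delicate_h_bias} is deterministic given $\gF_{t-1}$. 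Summing over $j$ and bounding the geometric tail via $\sum_{j=m_0+1}^{m}\rho^{j}\leq \rho^{m_0+1}/(1-\rho)\leq \rho^{m_0}/(1-\rho)$ then produces the factor $\frac{\rho^{m_0}}{(1-\rho)(m-m_0)}$ and completes the argument.

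The main obstacle I anticipate is bookkeeping rather than analysis: pinning down the correct mixing offset $\tau$ for each summand. One must verify that conditioning on $\gF_{t-1}$ places the $(m_0{+}1)$-th through $m$-th samples of the batch respectively $m_0{+}1,\dots,m$ transitions past the last observed state, so that Lemma~\ref{lem:delicate_h_bias} applies with $\tau=j$ uniformly and with a constant independent of $\btheta,\btheta^{\prime}$. Once this alignment is fixed, the remaining estimates are just the triangle inequality and the geometric-series bound, so no further analytic difficulty is expected.
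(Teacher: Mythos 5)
Your proposal is correct and is essentially the argument the paper intends: the paper presents this lemma as a ``direct corollary'' of Lemma~\ref{lem:delicate_h_bias} (paralleling Lemma 13 of \citet{li2023accelerated}), and your decomposition of $\bar{\bh}_t$ into the per-sample operators $\bh_j$, application of Lemma~\ref{lem:delicate_h_bias} with mixing offset $\tau=j$ for each post-burn-in sample, and the geometric-tail bound $\sum_{j\ge m_0+1}\rho^{j}\le\rho^{m_0}/(1-\rho)$ is exactly the omitted calculation. The bookkeeping of the offset you flag is harmless here, since any constant shift of $\tau$ by one still yields a tail sum bounded by $\rho^{m_0}/(1-\rho)$, which matches the stated constant.
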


We distinguishes between two situations based on whether $\bm{\eta}_{\btheta^{\star}} = \bm{\eta}^{\pi,K}$ to highlight the impact of linear function approximation following \citet{li2023accelerated}, where $\bm{\eta}^{\pi,K}$ is the solution to the categorical projected Bellman equation $\bm{\eta}=\bPi_{K}\gT^{\pi}\bm{\eta}$.
\begin{lemma}\label{lem:estimation_error_each_epoch}
Consider a single epoch with index $n \in[N]$. Assume $l_{0}$ satisfies $\tau$ satisfies and $\rho^\tau \leq \frac{(1-\rho)^2}{2 C_{\bphi}}$, and $l_{n}$ satisfies $\rho^{l_{n}-l_{0}} \leq \frac{\tau(1-\rho)}{4 C_{\bphi}(l_{n}-l_{0})}$. 
\begin{itemize}
\item 
If $\bm{\eta}_{\btheta^{\star}} = \bm{\eta}^{\pi,K}$, then we have
\begin{equation*}
\EB\brk{\|\underline{\btheta}-\btheta^{\star}\|_{\bJ}^2} \leq \frac{4 \tr(\widetilde{\bSigma}_{\be})}{l_{n}-l_{0}}+\frac{8(\tau+1) \varsigma^2}{(1-\sqrt{\gamma})^2 \lambda_{\min}(l_{n}-l_{0})}\norm{\btheta^{\star}-\widetilde{\btheta}}_{\bJ}^2.
\end{equation*}
\item 
If $\bm{\eta}_{\btheta^{\star}} \neq \bm{\eta}^{\pi,K}$, then we have
\begin{align*}
    \EB\brk{\norm{\underline{\btheta}-\btheta^{\star}}_{\bJ}^2} \leq & 
    \frac{4 \tr(\widetilde{\bSigma}_{\be,\text{Mkv}})}{l_{n}-l_{0}} 
    + \frac{(4\tau^2+2\tau+2) \tr(\widetilde{\bSigma}_{\be})}{(l_{n}-l_{0})^2} 
    + \frac{2(\tau+1)}{(l_{n}-l_{0})^2} \frac{\ell_{2,\mu_\pi}^2\prn{\bm{\eta}^{\pi,K},\bm{\eta}_{\btheta^{\star}}}}{\lambda_{\min}(1-\sqrt{\gamma})^2\iota_K}\\
    & \quad+ \frac{8(\tau+1) \varsigma^2}{(1-\sqrt\gamma)^2 \lambda_{\min}(l_{n}-l_{0})}\norm{\btheta^{\star}-\widetilde{\btheta}}_{\bJ}^2.
\end{align*}
\end{itemize}
\end{lemma}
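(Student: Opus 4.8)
The plan is to mirror the generative-model argument of Lemma~\ref{lem:underline_theta_discrepancy}, replacing the i.i.d.\ concentration by the Markovian mixing estimates proved above. By the defining relation $\bar{\bA}(\btheta^{\star}-\underline{\btheta}) = \widehat{\bh}(\widetilde{\btheta})-\bar{\bh}(\widetilde{\btheta})$ we have $\bJ^{1/2}(\btheta^{\star}-\underline{\btheta}) = \bJ^{1/2}\bar{\bA}^{-1}(\widehat{\bh}(\widetilde{\btheta})-\bar{\bh}(\widetilde{\btheta}))$, so the whole lemma reduces to controlling the second moment of this vector, where in the Markovian algorithm $\widehat{\bh}(\widetilde{\btheta}) = (l_{n}-l_{0})^{-1}\sum_{i=l_{0}+1}^{l_{n}}\bh_i(\widetilde{\btheta})$. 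First I would split
\[\widehat{\bh}(\widetilde{\btheta})-\bar{\bh}(\widetilde{\btheta}) = \underbrace{(\widehat{\bh}(\btheta^{\star})-\bar{\bh}(\btheta^{\star}))}_{\text{noise at }\btheta^{\star}} + \underbrace{[(\widehat{\bh}(\widetilde{\btheta})-\bar{\bh}(\widetilde{\btheta}))-(\widehat{\bh}(\btheta^{\star})-\bar{\bh}(\btheta^{\star}))]}_{\text{fluctuation}}\]
and apply Young's inequality. The fluctuation term is handled identically in both cases: Lemma~\ref{lem:markovian_2nd_moment_stochastic_operator} bounds its second moment by $\frac{4(\tau+1)\varsigma^2}{l_{n}-l_{0}}\norm{\btheta^{\star}-\widetilde{\btheta}}_{\bJ}^2$, and the operator-norm estimate \eqref{eqn:norm_of_J1/2_barA_-1} contributes the factor $\mathrm{g}^2/\lambda_{\min} = 1/((1-\sqrt\gamma)^2\lambda_{\min})$, yielding the common term $\frac{8(\tau+1)\varsigma^2}{(1-\sqrt\gamma)^2\lambda_{\min}(l_{n}-l_{0})}\norm{\btheta^{\star}-\widetilde{\btheta}}_{\bJ}^2$.

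The two cases diverge only in the noise-at-$\btheta^{\star}$ term. Writing $\widetilde{\be}_i = \bh_i(\btheta^{\star})-\bar{\bh}(\btheta^{\star})$ and $\bu_i = \bJ^{1/2}\bar{\bA}^{-1}\widetilde{\be}_i$, its second moment expands as $(l_{n}-l_{0})^{-2}\sum_{i,j}\EB[\inner{\bu_i}{\bu_j}]$, a diagonal part built from $\EB[\norm{\bu_i}^2]$ together with cross terms given by lagged autocovariances. In the case $\bm{\eta}_{\btheta^{\star}} = \bm{\eta}^{\pi,K}$, the inspection of Lemma~\ref{lem:markovian_decay} shows that the per-state conditional bias vanishes, $\EB[\bh_i(\btheta^{\star})\mid s_i]=\bar{\bh}(\btheta^{\star})$ (its right-hand side carries $\ell_{2,\mu_\pi}(\bm{\eta}^{\pi,K},\bm{\eta}_{\btheta^{\star}})$, which is now zero), so $\{\widetilde{\be}_i\}$ is a martingale-difference sequence for the natural filtration and every cross term vanishes. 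Each diagonal second moment $\EB[\norm{\bu_i}^2]$ is within a factor $2$ of the stationary value $\tr(\widetilde{\bSigma}_{\be})$ by the same total-variation splitting used in Lemma~\ref{lem:markovian_variance_stochastic_operator} (the burn-in condition $\rho^{l_{0}}\leq\min_i\pi_i/C_P$ controls the deviation of the marginal of $s_i$ from $\mu_\pi$), so the diagonal contributes $2\tr(\widetilde{\bSigma}_{\be})/(l_{n}-l_{0})$; the Young split then produces the coefficient $4$ in the first display.

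In the general case $\bm{\eta}_{\btheta^{\star}}\neq\bm{\eta}^{\pi,K}$, the cross terms do not vanish. Summing the lagged autocovariances to their two-sided stationary limit produces $\widetilde{\bSigma}_{\be,\text{Mkv}} = \bJ^{1/2}\bar{\bA}^{-1}[\sum_{-\infty}^{\infty}\EB[\widetilde{\be}_t\widetilde{\be}_0^\top]]\bar{\bA}^{-\top}\bJ^{1/2}$ as the leading $\frac{4\tr(\widetilde{\bSigma}_{\be,\text{Mkv}})}{l_{n}-l_{0}}$ contribution. The remaining work is to bound the truncation error from (a) replacing the infinite autocovariance sum by the finite window of length $l_{n}-l_{0}$ and (b) the residual conditional bias, both controlled by the geometric factor $\rho^\tau$ of Lemma~\ref{lem:markovian_decay} and Lemma~\ref{lem:delicate_h_bias} together with the hypothesis $\rho^{l_{n}-l_{0}}\leq \tau(1-\rho)/(4C_{\bphi}(l_{n}-l_{0}))$. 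These yield the $\frac{(4\tau^2+2\tau+2)\tr(\widetilde{\bSigma}_{\be})}{(l_{n}-l_{0})^2}$ term, and through the bias estimate of Lemma~\ref{lem:markovian_decay} (whose bound carries $\ell_{2,\mu_\pi}(\bm{\eta}^{\pi,K},\bm{\eta}_{\btheta^{\star}})/\sqrt{\iota_K}$) the approximation-error term $\frac{2(\tau+1)}{(l_{n}-l_{0})^2}\frac{\ell_{2,\mu_\pi}^2(\bm{\eta}^{\pi,K},\bm{\eta}_{\btheta^{\star}})}{\lambda_{\min}(1-\sqrt\gamma)^2\iota_K}$. The hard part is precisely this autocovariance bookkeeping in the general case: one must separate the near-diagonal lags (which mixing cannot shrink and which accumulate into $\tr(\widetilde{\bSigma}_{\be,\text{Mkv}})$) from the far lags (which decay geometrically), while tracking the $\tau$-dependence so that all constants align with Lemma~13 of \citet{li2023accelerated}.
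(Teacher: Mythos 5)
Your proposal is correct and follows essentially the same route as the paper's proof: the same Young-inequality split of $\widehat{\bh}(\widetilde{\btheta})-\bar{\bh}(\widetilde{\btheta})$ into the noise at $\btheta^{\star}$ and the fluctuation term, the same observation that in the case $\bm{\eta}_{\btheta^{\star}}=\bm{\eta}^{\pi,K}$ the per-state restriction of the projected Bellman equation forces all lagged autocovariances $\Xi_t$ ($t\neq 0$) to vanish, and the same near-lag/far-lag bookkeeping via Lemma~\ref{lem:markovian_decay} and the crude bound $|\Xi_i|\leq\Xi_0$ in the general case. The only minor discrepancy is your attribution of a factor of $2$ in the diagonal term to non-stationarity of the marginal of $s_i$; in the paper the trajectory is stationary so $\Xi_0=\tr(\widetilde{\bSigma}_{\be})$ exactly, and that factor instead comes from the quoted second-moment expansion, but the resulting constants coincide.
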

\begin{proof}
Recall that $\bar{\bA}(\btheta^{\star}-\underline{\btheta}) = \widehat{\bh}(\widetilde{\btheta})-\bar{\bh}(\widetilde{\btheta})$.
In the following, $\bh_{i}(\btheta) = \bA_{i}\btheta-\bb_{i}$ is induced by the data stream in the Markovian setting. By Young's inequality, first decompose the expectation of local error into two parts as
\begin{align}
    &\EB\brk{\norm{\underline{\btheta}-\btheta^{\star}}_{\bJ}^2}
    = \EB\brk{\norm{\bJ^{1/2}\bar{\bA}^{-1}\prn{\widehat{\bh}(\widetilde{\btheta})-\bar{\bh}(\widetilde{\btheta})}}_2^2} \notag\\
    &\leq 2\prn{\EB\brk{\norm{\bJ^{1/2}\bar{\bA}^{-1}\prn{\widehat{\bh}(\btheta^{\star})-\bar{\bh}(\btheta^{\star})}}_2^2}+\EB\brk{\norm{\bJ^{1/2}\bar{\bA}^{-1}\brk{\prn{\widehat{\bh}(\widetilde{\btheta})-\bar{\bh}(\widetilde{\btheta})}-\prn{\widehat{\bh}(\btheta^{\star})-\bar{\bh}(\btheta^{\star})}}}_2^2}}\notag\\
    &:=2(R_1+R_2).\label{eq:loss_of_Markovian}
\end{align}
According to Equation 105 in \citet{li2023accelerated}, it holds that
\begin{equation}\label{eq:R1}
    R_1 \leq\frac{2}{(l_{n}-l_{0})^2}\brk{(l_{n}-l_{0})\Xi_{0}+\sum_{k=1}^{l_{n}-l_{0}-1}2(l_{n}-l_{0}-k)\Xi_{k}},
\end{equation}
  where $\Xi_{t} :=\EB\langle\bJ^{1/2}\bar{\bA}^{-1}(\bh_{0}(\btheta^{\star})-\bar{\bh}(\btheta^{\star})),\bJ^{1/2}\bar{\bA}^{-1}(\bh_{t}(\btheta^{\star})-\bar{\bh}(\btheta^{\star}))\rangle$. Analogous to $\widetilde{\bSigma}_{\be}$, let $\widetilde{\be}_{t}=\bh_{t}(\btheta^{\star})-\bar{\bh}(\btheta^{\star})$, $\bSigma_{\be,\text{Mkv}}=\sum_{-\infty}^{\infty}\EB[\widetilde{\be}_{t}\widetilde{\be}_{0}^{\top}]$ and let
  \begin{equation*}
\widetilde{\bSigma}_{\be,\text{Mkv}}=\bJ^{1/2}\bar{\bA}^{-1}\bSigma_{\be,\text{Mkv}}\bar{\bA}^{-\top}\bJ^{1/2}.
  \end{equation*}
Then it holds that $\tr(\widetilde{\bSigma}_{\be,\text{Mkv}}) = \Xi_{0} + 2\sum_{t=1}^{\infty}\Xi_{t}$ and it is clear that $\Xi_{0} = \tr(\widetilde{\bSigma}_{\be})$ since samples are from a stationary Markovian trajectory.

\paragraph{If $\bm{\eta}_{\btheta^{\star}} = \bm{\eta}^{\pi,K}$}
\begin{align*}
 \bp_{\bm{\eta}^{\pi,K}}(s_t) &= \bp_{\btheta^{\star}}(s_t) =  (\bC^{-1} \otimes \bphi(s_t)^{\top}  ) \btheta^{\star} + \frac{1}{K+1} \bm{1}_K,\\
     \bp_{\gT^\pi\bm{\eta}^{\pi,K}}(s_t)
     &=\EB_{s_t}\brk{\tilde\bG(r_t) \prn{\bp_{\bm{\eta}^{\pi,K}}(s_{t+1})-\frac{1}{K+1}\bm{1}_{K}}}+\frac{1}{K+1}\sum_{j=0}^K\EB_{s_t}\brk{\bg_j(r_t) }\\
     &=\EB_{s_t}\brk{\tilde\bG(r_t) \prn{\bp_{\btheta^{\star}}(s_{t+1})-\frac{1}{K+1}\bm{1}_{K}}}+\frac{1}{K+1}\sum_{j=0}^K\EB_{s_t}\brk{\bg_j(r_t) }\\
     &=\EB_{s_t}\brk{\prn{\brk{\tilde\bG(r_t) \bC^{-1}}\otimes \bphi(s_{t+1})^{\top}} \btheta^{\star}} 
     +\frac{1}{K+1}\sum_{j=0}^K\EB_{s_t}\brk{\bg_j(r_t) }.
\end{align*}
By the definition of $\bm{\eta}^{\pi,K}$, these two equations should be the same. Thus, using the basic property of Kronecker products, we have that
\begin{align*}
    &\frac{1}{K+1}\EB_{s_t}\brk{\bC\prn{\sum_{j=0}^K\bg_j(r_t)-\bm{1}_K}}\otimes\bphi(s_t)\\&\quad=\bC\prn{\frac{1}{K+1}\sum_{j=0}^K\EB_{s_t}\brk{\bg_j(r_t)}-\frac{1}{K+1} \bm{1}_K}\otimes \bphi(s_t)\\
        &\quad= \bC\prn{(\bC^{-1} \otimes \bphi(s_t)^{\top}  ) \btheta^{\star}-\EB_{s_t}\brk{\prn{\brk{\tilde\bG(r_t) \bC^{-1}}\otimes \bphi(s_{t+1})^{\top}} \btheta^{\star}}}\otimes\bphi(s_t)\\
    &\quad=\Bigg(\brk{\bI_K{\otimes}\prn{\bphi(s_t)\bphi(s_t)^{\top}}}{-}\EB_{s_t}\brk{\prn{\bC\tilde{\bG}(r_t)\bC^{-1}}{\otimes}\prn{\bphi(s_t)\bphi(s_{t+1})^{\top}}}\Bigg)\btheta^{\star}.
\end{align*}
This result is in fact a restriction of linear-categorical projected Bellman equation to given state $s_t$.
Thus, for $t\neq 0$ it holds that
\begin{align}
    \EB_{s_t}\brk{\widetilde{\bA}_{t}\btheta^{\star}-\widetilde{\bb}_{t}} 
    &= \Bigg(\brk{\bI_K{\otimes}\prn{\bphi(s_t)\bphi(s_t)^{\top}}}{-}\EB_{s_t}\brk{\prn{\bC\tilde{\bG}(r_t)\bC^{-1}}{\otimes}\prn{\bphi(s_t)\bphi(s_{t+1})^{\top}}}\Bigg)\btheta^{\star} \notag\\
    &\quad-\frac{1}{K+1}\EB_{s_t}\brk{\bC\prn{\sum_{j=0}^K\bg_j(r_t)-\bm{1}_K}}\otimes\bphi(s_t) = 0.\label{eq:xi_0}
\end{align}
Note that by definition of $\Xi_{t}$,
\begin{align*}
    \Xi_{t} &=\EB\langle\bJ^{1/2}\bar{\bA}^{-1}\prn{\bh_{0}(\btheta^{\star})-\bar{\bh}(\btheta^{\star})},\bJ^{1/2}\bar{\bA}^{-1} \EB\brk{\prn{\bh_{t}(\btheta^{\star})-\bar{\bh}(\btheta^{\star})}|\gF_{0}}\rangle\\
    &= \EB\langle\bJ^{1/2}\bar{\bA}^{-1}\prn{\bh_{0}(\btheta^{\star})-\bar{\bh}(\btheta^{\star})},\bJ^{1/2}\bar{\bA}^{-1} \sum_{s_t=1}^{D}\pi_{s_0\rightarrow s_t}\prn{\EB_{s_t}\brk{\widetilde{\bA}_{t}\btheta^{\star}-\widetilde{\bb}_{t}}-(\bar{\bA}\btheta^{\star}-\bar{\bb})}\rangle.
\end{align*}
By Equation~\eqref{eq:xi_0}, it is clear that $\Xi_{t}$ is $0$ for $t\neq 0$ and $\tr(\widetilde{\bSigma}_{\be,\text{Mkv}}) = \Xi_{0} + 2\sum_{t=1}^{\infty}\Xi_{t} = \Xi_{0}= \tr(\widetilde{\bSigma}_{\be})$. 
Combining Equation~\eqref{eq:R1}, we have
\begin{equation*}
    R_1\leq \frac{2\Xi_0}{l_{n}-l_{0}} = 2 \frac{\tr(\widetilde{\bSigma}_{\be})}{l_{n}-l_{0}}.
\end{equation*}

\paragraph{If $\bm{\eta}_{\btheta^{\star}} \neq \bm{\eta}^{\pi,K}$}
Using Equation~\eqref{eqn:norm_of_J1/2_barA_-1} and Lemma~\ref{lem:markovian_decay}, we obtain that
\begin{align*}
    |\Xi_{i}|
    &=|\EB\langle\bJ^{1/2}\bar{\bA}^{-1}\prn{\bh_{0}(\btheta^{\star})-\bar{\bh}(\btheta^{\star})},\bJ^{1/2}\bar{\bA}^{-1} \EB\brk{\prn{\bh_{t}(\btheta^{\star})-\bar{\bh}(\btheta^{\star})}|\gF_{0}}\rangle|\\
    &\leq C_{\bphi}\rho^{i}\frac{1}{\sqrt{\lambda_{\min}}(1-\sqrt{\gamma})}\EB\brk{\frac{\ell_{2,\mu_\pi}\prn{\bm{\eta}^{\pi,K},\bm{\eta}_{\btheta^{\star}}}}{\sqrt{\iota_{K}}}\bJ^{1/2}\bar{\bA}^{-1}\prn{\bh_{0}(\btheta^{\star})-\bar{\bh}(\btheta^{\star})}}\\
    &\leq C_{\bphi}\rho^{i}\prn{\frac{\ell_{2,\mu_\pi}^2\prn{\bm{\eta}^{\pi,K},\bm{\eta}_{\btheta^{\star}}}}{2\lambda_{\min}(1-\sqrt{\gamma})^2\iota_K}+\frac{\tr(\widetilde{\bSigma}_{\be})}{2}}.
\end{align*}
Another useful upper bound for $|\Xi_i|$ is that $|\Xi_i|\leq |\Xi_0|=\tr(\widetilde{\bSigma}_{\be})$. 
From Equation~\eqref{eq:R1}, it holds that
\begin{align*}
R_1 & \leq \frac{2\brk{\left(l_{n}-l_{0}\right) \widetilde{\bSigma}_{\be,\text{Mkv}}-\sum_{i=1}^{l_{n}-l_{0}-1} i \Xi_{i}-2\left(l_{n}-l_{0}\right) \sum_{i=l_{n}-l_{0}}^{\infty} \Xi_{i}}}{\left(l_{n}-l_{0}\right)^2}.
\end{align*}
By Holder's inequality, $|\Xi_{i}|\leq|\Xi_{0}|=\tr(\widetilde{\bSigma}_{\be})$, we write
\begin{align*}
\left|\sum_{i=1}^{l_{n}-l_{0}-1} i \Xi_{i}\right|&\leq
\sum_{i=1}^\tau \tau |\Xi_i| + \sum_{i=\tau+1}^\infty i|\Xi_i|\\
&\leq \tau^2 \tr(\widetilde{\bSigma}_{\be}) + C_{\phi} \prn{\frac{\ell_{2,\mu_\pi}^2\prn{\bm{\eta}^{\pi,K},\bm{\eta}_{\btheta^{\star}}}}{2\lambda_{\min}(1-\sqrt{\gamma})^2\iota_K}+\frac{\tr(\widetilde{\bSigma}_{\be})}{2}} \sum_{i=\tau+1}^\infty i \cdot \rho^i \\
&\leq \tau^2 \tr(\widetilde{\bSigma}_{\be}) + \frac{C_{\phi} [(\tau+1)\rho^{\tau+1}+\rho^{\tau+2}/(1-\rho)]}{1-\rho} \cdot \prn{\frac{\ell_{2,\mu_\pi}^2\prn{\bm{\eta}^{\pi,K},\bm{\eta}_{\btheta^{\star}}}}{2\lambda_{\min}(1-\sqrt{\gamma})^2\iota_K}+\frac{\tr(\widetilde{\bSigma}_{\be})}{2}}.
\end{align*}
And
\begin{align*}
    \left|\sum_{i=l_{n}-l_{0}}^\infty \Xi_i\right| \leq\sum_{i=l_{n}-l_{0}}^\infty |\Xi_i| &\leq \frac{C_{\phi} \rho^{l_{n}-l_{0}}}{1-\rho}\prn{\frac{\ell_{2,\mu_\pi}^2\prn{\bm{\eta}^{\pi,K},\bm{\eta}_{\btheta^{\star}}}}{2\lambda_{\min}(1-\sqrt{\gamma})^2\iota_K}+\frac{\tr(\widetilde{\bSigma}_{\be})}{2}}.
\end{align*}
Combine these upper bounds and recall that $\rho^\tau \leq (1-\rho)^2/(2C_{\phi}) $ and $\rho^{l_{n}-l_{0}} \leq \tau(1-\rho)/(4C_{\phi} (l_{n}-l_{0}))$:
\begin{align*}
    R_1 &\leq \frac{2 \tr(\widetilde{\bSigma}_{\be,\text{Mkv}})}{l_{n}-l_{0}} 
    + \frac{2\tau^2 \tr(\widetilde{\bSigma}_{\be})}{(l_{n}-l_{0})^2} 
    + \frac{\tau+1}{(l_{n}-l_{0})^2} \prn{\frac{\ell_{2,\mu_\pi}^2\prn{\bm{\eta}^{\pi,K},\bm{\eta}_{\btheta^{\star}}}}{\lambda_{\min}(1-\sqrt{\gamma})^2\iota_K}+\tr(\widetilde{\bSigma}_{\be})}.
\end{align*}
By Lemma~\ref{lem:markovian_2nd_moment_stochastic_operator}, we have
\begin{equation*}
R_2 \leq \frac{4(\tau+1)\varsigma^2}{(1-\sqrt\gamma)^2\lambda_{\min}(l_{n}-l_{0})} \norm{\btheta^{\star}-\widetilde{\btheta}}_{\bJ}^2.
\end{equation*}
Substituting bounds for $R_1$ and $R_2$ into Equation~\eqref{eq:loss_of_Markovian} finishes the proof.
\end{proof}
\begin{remark}\label{remark:tr_Markovian_critical}
By similar arguments, the following upper bound holds for $\tr(\widetilde{\bSigma}_{\be,\text{Mkv}})$.
\begin{equation*}
\tr(\widetilde{\bSigma}_{\be,\text{Mkv}})\leq
\frac{C_{\bphi}\rho^{\tau}}{1-\rho}\prn{\frac{\ell_{2,\mu_\pi}^2\prn{\bm{\eta}^{\pi,K},\bm{\eta}_{\btheta^{\star}}}}{\lambda_{\min}(1-\sqrt{\gamma})^2\iota_K}+\tr(\widetilde{\bSigma}_{\be})} +(2\tau+1)\tr(\widetilde{\bSigma}_{\be}).
\end{equation*}
Note that the coefficient $C_{\bphi}\rho^{\tau}/(1-\rho)$ is by an absolute constant under our assumption. 
If this upper bound is absorbed into the analysis in Lemma~\ref{lem:estimation_error_each_epoch}, then the derived sample complexity of {\VCTD} in the Markovian setting will not achieve the best dependence on $\lambda_{\min}^{-1}$ and $(1-\gamma)^{-1}$ as in the generative setting, which is due to the extra $\lambda_{\min}^{-1}$ and $(1-\gamma)^{-1}$ in this bound. 
This observation shows a distinction between the analysis of {\VCTD} in two observation settings.
\end{remark}

\subsection{Proof of Proposition~\ref{prop:Markovian_recursive}}
Recall that $\bF(\btheta) = \bar{\bh}(\btheta)-\bar{\bh}(\widetilde{\btheta})+\widehat{\bh}(\widetilde{\btheta})$ and the inner loop objective $\underline{\btheta}$ satisfies $\bF(\underline{\btheta})=0$.
Following Equation 88 in \citet{li2023accelerated}, we start with rearranging Equation~\eqref{eq:Markovian_start} which yields 
\begin{align}
& \left(1-8 \alpha^2 \lambda_{\max}^2(1+\sqrt{\gamma})^2\right)\norm{\btheta_{T+1}-\underline{\btheta}}_2^2+\sum_{t=1}^T 2 \alpha  (1-\sqrt{\gamma})\norm{\btheta_{t+1}-\underline{\btheta}}_{\bJ}^2 \notag\\
& \quad \leq\norm{\btheta_1-\underline{\btheta}}_2^2
+4 \alpha^2\prn{2\sum_{t=1}^T \norm{\bH_t(\btheta_t)}_2^2
+\norm{\bH_T(\btheta_T)}_2^2}+Q_3,\label{eq:Markovian_new_start_2}
\end{align}
where
\begin{align*}
Q_3
&=2 \alpha\left\langle\bm{\Delta}_{T+1}\left(\btheta_{T+1}\right)-\bm{\Delta}_{T+1}(\widetilde{\btheta}), \btheta_{T+1}-\underline{\btheta}\right\rangle+2 \alpha\left\langle\bm{\Delta}_T(\widetilde{\btheta})-\bm{\Delta}_T\left(\btheta_T\right), \btheta_T-\underline{\btheta}\right\rangle\\
&\quad-\sum_{t=1}^{T}2\alpha\langle \bm{\Delta}_{t+1}(\btheta_{t+1})-\bm{\Delta}_{t+1}(\widetilde{\btheta}),\btheta_{t+1}-\underline{\btheta}\rangle.
\end{align*}
An upper bound of $\EB Q_3$ can be achieved by using Lemma~\ref{lem:delicate_h_cut} for each term in the expectation
\begin{align*}
    &\EB Q_3=2 \alpha \EB\langle\bm{\Delta}_{T+1}\left(\btheta_{T+1}\right)-\bm{\Delta}_{T+1}(\widetilde{\btheta}), \btheta_{T+1}-\underline{\btheta}\rangle+2\alpha\EB\langle\bm{\Delta}_T(\widetilde{\btheta})-\bm{\Delta}_T\left(\btheta_T\right), \btheta_T-\underline{\btheta}\rangle\\
    &\quad\quad-\sum_{t=1}^{T}2\alpha\EB\langle \bF_{t+1}(\btheta_{t+1})-\bF(\btheta_{t+1}),\btheta_{t+1}-\underline{\btheta}\rangle\\
    &\leq \frac{ 2\alpha C_{\bphi}\rho^{m_0}}{(1-\rho)(m-m_0)}\EB\brk{\norm{\btheta_{T+1}-\widetilde{\btheta}}_{\bJ}\norm{\btheta_{T+1}-\underline{\btheta}}_2+\norm{\btheta_{T}-\widetilde{\btheta}}_{\bJ}\norm{\btheta_{T}-\underline{\btheta}}_2+\sum_{t=1}^{T}\norm{\btheta_{t+1}-\widetilde{\btheta}}_{\bJ}\norm{\btheta_{t+1}-\underline{\btheta}}_2}\\
    &\leq C_{1}^{\prime}\EB\brk{\norm{\btheta_{T+1}-\widetilde{\btheta}}_{\bJ}\norm{\btheta_{T+1}-\underline{\btheta}}_{\bJ}+\norm{\btheta_{T}-\widetilde{\btheta}}_{\bJ}\norm{\btheta_{T}-\underline{\btheta}}_{\bJ}+\sum_{t=1}^{T}\norm{\btheta_{t+1}-\widetilde{\btheta}}_{\bJ}\norm{\btheta_{t+1}-\underline{\btheta}}_{\bJ}}.
\end{align*}
where the basic inequality $\|\btheta\|_2\leq\|\btheta\|_{\bJ}/\sqrt{\lambda_{\min}}$ is applied and $C_{1}^{\prime}:=2\alpha C_{\bphi}\rho^{m_0}/(\sqrt{\lambda_{\min}}(1-\rho)(m-m_0))$. 
By Lemma~\ref{lem:markovian_2nd_moment_stochastic_operator}, the expectation of the intermediate terms admits the following upper bound.
\begin{equation*}
4\alpha^2\prn{2\EB\sum_{t=1}^T \norm{\bH_t(\btheta_t)}_2^2
+\EB\norm{\bm{\Delta}_T(\widetilde{\btheta})-\bm{\Delta}_T\left(\btheta_T\right)}_2^2}\leq C_{2}^{\prime}\prn{2\sum_{t=1}^{T}\EB\brk{\norm{\btheta_{t+1}-\widetilde{\btheta}}_{\bJ}^2}+\EB\brk{\norm{\btheta_{T}-\widetilde{\btheta}}_{\bJ}^2}},
\end{equation*}
where $C_{2}^{\prime}:=16\alpha^2(\tau+1)\varsigma^2/(m-m_0)$. 
Taking the expectation of both sides of Equation~\eqref{eq:Markovian_new_start_2} yields
\begin{align*}
    & \left(1-8 \alpha^2 (1+\sqrt{\gamma})^2\right)\EB\brk{\norm{\btheta_{T+1}-\underline{\btheta}}_2^2}+\sum_{t=1}^T 2 \alpha (1-\sqrt{\gamma})\EB\brk{\norm{\btheta_{t+1}-\underline{\btheta}}_{\bJ}^2}-\frac{1}{\lambda_{\min}}\norm{\btheta_1-\underline{\btheta}}_{\bJ}^2\\
    & \quad \leq 6C_{2}^{\prime}\prn{\sum_{t=1}^{T}\EB\brk{\norm{\btheta_{t+1}-\underline{\btheta}}_{\bJ}^2}}
    +3C_{1}^{\prime}\prn{\sum_{t=1}^{T}\EB\brk{\norm{\btheta_{t+1}-\underline{\btheta}}_{\bJ}^2}}+T(2C_{1}^{\prime}+6C_{2}^{\prime})\EB\brk{\norm{\underline{\btheta}-\widetilde{\btheta}}_{\bJ}^2}.
\end{align*}
Note that $\alpha<1/4(1+\sqrt{\gamma})$, we have
\begin{equation*}
\sum_{t=1}^T \prn{2 \alpha (1-\sqrt{\gamma})-3C_{1}^{\prime}-6C_{2}^{\prime}}\EB\brk{\norm{\btheta_{t+1}-\underline{\btheta}}_{\bJ}^2}\leq\prn{\frac{1}{\lambda_{\min}}+T(2C_{1}^{\prime}+6C_{2}^{\prime})\EB\brk{\norm{\underline{\btheta}-\widetilde{\btheta}}_{\bJ}^2}}.
\end{equation*}
If $\rho^{m_0}\leq C_1\alpha(\sqrt{\lambda_{\min}}\tau\varsigma^2(1-\rho))/C_{\bphi}$ and $C_2(m-m_0)\geq (\alpha (\tau+1)\varsigma^2)/(1-\sqrt{\gamma})$, then
\begin{equation*}
        C_{2}^{\prime}\leq 16\alpha C_2(1-\sqrt{\gamma}), \quad C_{1}^{\prime}\leq 2\alpha^2 C_1\frac{\tau\varsigma^2}{m-m_0}\leq 2\alpha C_1 C_2(1-\sqrt{\gamma}).
\end{equation*}
 By Jensen's inequality and $T\geq 1/(C_3\lambda_{\min}\alpha(1-\sqrt{\gamma}))$ we obtain (where $3C_{1}^{\prime}+6C_{2}^{\prime}\leq C_0\alpha(1-\sqrt{\gamma})$, that is, $6C_1C_2+96C_2\leq C_0$)
\begin{align*}
\EB\brk{\norm{\widehat{\btheta}_n-\underline{\btheta}}_{\bJ}^2} &\leq \frac{1}{2-C_0}\cdot\frac{\sum_{t=1}^{T}\alpha(1-\sqrt{\gamma})\EB\brk{\norm{\btheta_{t+1}-\underline{\btheta}}_{\bJ}^2}}{\alpha(1-\sqrt{\gamma})T}\\
&\leq \frac{1}{2-C_0}\cdot\frac{1/\lambda_{\min}+T(2C_{1}^{\prime}+6C_{2}^{\prime})}{\alpha(1-\sqrt{\gamma})T}\EB\brk{\norm{\underline{\btheta}-\widetilde{\btheta}}_{\bJ}^2}\\
&\leq C_4\prn{\EB\norm{\btheta^{\star}-\widetilde{\btheta}}_{\bJ}^2+\EB\norm{\underline{\btheta}-\btheta^{\star}}_{\bJ}^2},
\end{align*}
where $C_4 \geq 2(C_3+96C_2+4C_1C_2)/(2-C_0)$. By Young's inequality, it holds that
\begin{align*}
\EB\brk{\norm{\widehat{\btheta}_n-\btheta^{\star}}_{\bJ}^2} &\leq 2\EB\brk{\norm{\widehat{\btheta}_n-\underline{\btheta}}_{\bJ}^2}+2\EB\brk{\norm{\underline{\btheta}-\btheta^{\star}}_{\bJ}^2}\\
&\leq 2C_4 \EB\brk{\norm{\widetilde{\btheta}-\btheta^{\star}}_{\bJ}^2}+(2C_4+2)\EB\brk{\norm{\underline{\btheta}-\btheta^{\star}}_{\bJ}^2}.
\end{align*}
The last term can be upper bounded using Lemma~\ref{lem:estimation_error_each_epoch} and we extract the coefficient of $\EB[\|\widetilde{\btheta}-\btheta^{\star}\|_{\bJ}^2]$, which should satisfy:
\begin{equation*}
    2C_4+(2C_4+2)\frac{8(\tau+1) \varsigma^2}{(1-\sqrt{\gamma})^2 \lambda_{\min}(l_{n}-l_{0})}\leq 2C_4+(2C_4+2)C_5 \leq 1/2,
\end{equation*}
where $l_{n}-l_{0}\geq 8(\tau+1) \varsigma^2/\prn{C_5 (1-\sqrt{\gamma})^2 \lambda_{\min}}$. 
Setting $C_0=2/9,C_4=1/8,C_5=1/9, C_3= 1/90, C_2= 1/1000$, and $C_1=1$ satisfies all the conditions. 
Collecting the rest of the terms yields what we desire. 
For example, in the $\bm{\eta}_{\btheta^{\star}} \neq \bm{\eta}^{\pi,K}$ case, by $\tau\geq 1$:
\begin{align*}
\EB\brk{\norm{\widehat{\btheta}_n-\btheta^{\star}}_{\bJ}^2} 
&\leq \frac{1}{2}\EB\brk{\norm{\widetilde{\btheta}-\btheta^{\star}}_{\bJ}^2}\\
&\quad+\frac{9}{4} \prn{\frac{4 \tr(\widetilde{\bSigma}_{\be,\text{Mkv}})}{l_{n}-l_{0}} 
+ \frac{(4\tau^2+2\tau+2) \tr(\widetilde{\bSigma}_{\be})}{(l_{n}-l_{0})^2} 
+ \frac{2(\tau+1)}{(l_{n}-l_{0})^2} \frac{\ell_{2,\mu_\pi}^2\prn{\bm{\eta}^{\pi,K},\bm{\eta}_{\btheta^{\star}}}}{\lambda_{\min}(1-\sqrt{\gamma})^2\iota_K}}\\
&\leq \frac{1}{2}\EB\brk{\norm{\widetilde{\btheta}-\btheta^{\star}}_{\bJ}^2}+\frac{9 \tr(\widetilde{\bSigma}_{\be,\text{Mkv}})}{l_{n}-l_{0}}\\
&\quad+ \frac{10\tau^2 \tr(\widetilde{\bSigma}_{\be})}{(l_{n}-l_{0})^2} 
+ \frac{9\tau}{(l_{n}-l_{0})^2\lambda_{\min}(1-\sqrt{\gamma})^2} \frac{\ell_{2,\mu_\pi}^2\prn{\bm{\eta}^{\pi,K},\bm{\eta}_{\btheta^{\star}}}}{\iota_K}.
\end{align*}
\section{Projection}\label{appendix:projection}
This appendix addresses the issue that the signed probability distribution $\bm{\eta}_{\btheta^{\star}}$ of the linear-categorical projected Bellman equation might not be valid categorical distributions. 

To deal with possible invalid probability distributions, technically one needs to take a further projection step of $\bm{\eta}_{\btheta^{\star}}$ to the valid categorical probability measure in $\sP_{K}$ (at the end of the algorithm execution) while guaranteeing that this projection does not increase the approximation error by an intolerable factor.

Recall that the valid categorical
projection operator $\bPi_{K}^{\sP}{\colon}\sP^{\sgn}_K{\to}\sP_K$ is given by
\begin{equation*}
\bPi_{K}^\sP\nu:=\argmin\nolimits_{\nu_\bp\in\sP_{K}}\ell_{2}\prn{\nu, \nu_\bp},\quad \forall\nu\in \sP^{\sgn}_K,
\end{equation*}
where the signed point mass function of $\nu$ is given by $\bp_{\nu}$.
Recall that the loss can be calculated through 
\begin{equation*}
\ell_{2}^2(\nu,\nu_{\bp})=\iota_K\norm{\bC\prn{\bp_{\nu}-\bp}}^2.
\end{equation*}
Hence $\bm{\Pi}_K^\sP\nu\in\sP_K$ is uniquely identified with a vector $\bp^\prime$ satisfying
\begin{equation*}
     \bp^{\prime}=\argmin\nolimits_{\bp\in\RB^{K}_+,\bp^\top \bm{1}_K\leq 1}\frac{1}{2}\norm{\bC\prn{\bp-\bp_{\nu}}}^2.
\end{equation*}
This is a standard quadratic programming problem, which can be efficiently solved numerically in $\gO(K)$ time, despite the absence of a closed-form solution. 

The following lemma shows that $\bPi_{K}^\sP\nu$ is always a better approximation of any ground truth distribution. In practice, one should do this projection on a possibly valid signed distribution for the states of interest. 
\begin{lemma} 
$\ell_{2}^2\prn{\nu_{\pi},\bPi_{K}^\sP\nu} \leq \ell_{2}^2\prn{\nu_{\pi},\nu},\text{ for any } \nu\in \sP^{\sgn}_K \text{ and }\nu_{\pi}\in \sP.$ 
\end{lemma}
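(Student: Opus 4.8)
The plan is to reduce the statement to the standard obtuse-angle inequality for Euclidean projection onto a convex set, after first peeling off the (common) categorical approximation error of $\nu_{\pi}$. The three structural facts I will exploit are: that $\sP_K$ is a convex subset of the affine space $\sP^{\sgn}_K$; that the Cram\'er geometry restricted to $\sP^{\sgn}_K$ is a weighted Euclidean geometry via $\ell_2^2(\nu,\nu')=\iota_K\norm{\bC(\bp_\nu-\bp_{\nu'})}^2$; and that the categorical projection $\bPi_K$ is the orthogonal projection onto $\sP^{\sgn}_K$ in the Cram\'er/CDF Hilbert space of Appendix~\ref{appendix:CTD}.

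First I would record the one place where the hypothesis $\nu_{\pi}\in\sP$ (rather than merely $\sP^{\sgn}$) is used: the categorical projection of a genuine probability measure is a valid categorical distribution, i.e. $\bPi_K\nu_{\pi}\in\sP_K$. This follows because the triangular-kernel weights $p_k(\nu_{\pi})=\textstyle\int_{[0,(1-\gamma)^{-1}]}(1-\abs{(x-x_k)/\iota_K})_+\,\nu_{\pi}(dx)$ are nonnegative and sum to one for any probability measure supported in $[0,(1-\gamma)^{-1}]$. Then I would split off the common term using orthogonality of $\bPi_K$. Since $\bPi_K\nu_{\pi}=\argmin_{\mu\in\sP^{\sgn}_K}\ell_2(\nu_{\pi},\mu)$, the residual $\nu_{\pi}-\bPi_K\nu_{\pi}$ is orthogonal to the difference space $\gM_K$, whereas $\bPi_K\nu_{\pi}-\mu\in\gM_K$ for every $\mu\in\sP^{\sgn}_K$. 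Pythagoras gives, for any $\mu\in\sP^{\sgn}_K$,
\[
\ell_2^2(\nu_{\pi},\mu)=\ell_2^2(\nu_{\pi},\bPi_K\nu_{\pi})+\ell_2^2(\bPi_K\nu_{\pi},\mu).
\]
Applying this with $\mu=\bPi_{K}^\sP\nu\in\sP_K\subseteq\sP^{\sgn}_K$ and with $\mu=\nu\in\sP^{\sgn}_K$ and cancelling the common term $\ell_2^2(\nu_{\pi},\bPi_K\nu_{\pi})$, the claim collapses to the finite-dimensional inequality $\ell_2^2(\bPi_K\nu_{\pi},\bPi_{K}^\sP\nu)\le\ell_2^2(\bPi_K\nu_{\pi},\nu)$, in which all three arguments lie in $\sP^{\sgn}_K$.

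Finally I would close the reduced inequality by the variational characterization of the Euclidean projection, now working entirely inside $\sP^{\sgn}_K$ with the positive-definite Gram matrix $\bC^\top\bC$; here $\sP_K$ corresponds to the convex polytope $\{\bp\ge\bm{0}_K,\ \bp^\top\bm{1}_K\le 1\}$ and $\bPi_{K}^\sP\nu$ is the minimizer of $\tfrac12\norm{\bC(\bp-\bp_\nu)}^2$ over it. Writing $\bp^\star$ for the point-mass vector of $\bPi_{K}^\sP\nu$, first-order optimality yields $\inner{\bC^\top\bC(\bp^\star-\bp_\nu)}{\bq-\bp^\star}\ge 0$ for every feasible $\bq$. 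Taking $\bq=\bp_{\bPi_K\nu_{\pi}}$, which is feasible by the membership established above, and expanding
\[
\norm{\bC(\bq-\bp_\nu)}^2=\norm{\bC(\bq-\bp^\star)}^2+\norm{\bC(\bp^\star-\bp_\nu)}^2+2\inner{\bC^\top\bC(\bp^\star-\bp_\nu)}{\bq-\bp^\star},
\]
the cross term is nonnegative, so $\norm{\bC(\bq-\bp_\nu)}^2\ge\norm{\bC(\bq-\bp^\star)}^2$; multiplying by $\iota_K$ gives exactly the reduced inequality.

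The main obstacle is not any single computation but making the two orthogonality structures cooperate. One must verify that $\bPi_K\nu_{\pi}$ genuinely lands in the \emph{convex} set $\sP_K$, so that it is an admissible competitor $\bq$ in the last step, and one must be careful that the Pythagorean splitting is legitimate precisely because $\sP^{\sgn}_K$ is affine: it is its difference space $\gM_K$, not $\sP^{\sgn}_K$ itself, against which the residual of $\bPi_K$ is orthogonal. Once these two points are pinned down, the convex-projection inequality evaluated at the single point $\bPi_K\nu_{\pi}\in\sP_K$ finishes the argument.
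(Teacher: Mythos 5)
Your proof is correct and follows essentially the same route as the paper's: both split off the common term $\ell_2^2(\nu_\pi,\bPi_K\nu_\pi)$ via the Pythagorean identity for the orthogonal categorical projection, observe that $\bPi_K\nu_\pi$ lands in the convex set $\sP_K$ because the triangular-kernel weights of a probability measure are nonnegative and sum to one, and then conclude from the distance-decreasing property of Euclidean projection onto a convex set. The only cosmetic difference is that you derive that last property from the first-order variational inequality, whereas the paper simply invokes non-expansiveness of convex projection.
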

\begin{proof}
Since the categorical projection operator is orthogonal projection Lemma~9.17 in \citep{bdr2022}, it follows that 
\begin{align*}
\ell_{2}^2\prn{\nu_{\pi},\nu}&=\ell_{2}^2\prn{\nu_{\pi},\bm{\Pi}_K\nu_{\pi}}+\ell_{2}^2\prn{\bm{\Pi}_K\nu_{\pi},\nu},\\
\ell_{2}^2\prn{\nu_{\pi},\bPi_{K}^\sP\nu}&=\ell_{2}^2\prn{\nu_{\pi},\bm{\Pi}_K\nu_{\pi}}+\ell_{2}^2\prn{\bm{\Pi}_K\nu_{\pi},\bPi_{K}^\sP\nu}.
\end{align*}
Recall that $\bm{\Pi}_K\nu_{\pi}\in\sP^{\sgn}_K$ is uniquely represented with a vector representing the point mass function $\bp_\nu=\prn{p_k(\nu)}_{k=0}^{K}\in\RB^{K+1}$, where
\begin{equation*}
     p_k(\nu)=\textstyle\int_{\brk{0,(1-\gamma)^{-1}}}(1-\abs{(x-x_k)/{\iota_K}})_+ \nu(dx).
\end{equation*}
Clearly this categorical projection produces a valid probability distribution $\bm{\Pi}_K\nu_{\pi} \in \sP_K$. Since $\sP_K$ is a convex set and finite-dimensional Euclidean projection to a convex set is non-expansion, we have that 
\begin{equation*}
   \ell_{2}^2\prn{\bm{\Pi}_K\nu_{\pi},\bPi_{K}^\sP\nu} \leq \ell_{2}^2\prn{\bm{\Pi}_K\nu_{\pi},\nu}.
\end{equation*}
The proof is completed by combining the orthogonal decomposition at the beginning.
\end{proof}

\bibliographystyle{IEEEtran}
\bibliography{bib_TIT}
\end{document}